\pgfplotsset{compat=1.18} %
\Crefname{ALC@unique}{Line}{Lines}
\theoremstyle{definition}
\newtheoremstyle{myspacing}%
  {10pt}   %
  {10pt}   %
  {\itshape} %
  {}       %
  {\bfseries} %
  {.}      %
  { }      %
  {}       %
\theoremstyle{myspacing}
\theoremstyle{myspacing}
\newtheorem{definition}{Definition}[section]
\newtheorem{theorem}[definition]{Theorem}
\newtheorem{lemma}[definition]{Lemma}
\newtheorem{remark}{Remark}
\newcommand{\ie}{i.e.\xspace}
\newcommand{\eg}{e.g.\xspace}
\newcommand{\method}{\textsf{RaCO-DP}\xspace}
\newcommand{\folkstables}{\texttt{ACSEmployment}\xspace}
\newif\ifdraft
\newcommand{\fixed}[1]{\textcolor{teal}{#1}}
  \newcommand{\chg}[2][red]{\textcolor{#1}{#2}}
  \newcommand{\my}[1]{\todo[inline, textcolor=black, backgroundcolor=teal!20!white, linecolor=teal]{\textbf{MY}: #1}}
  \newcommand{\tc}[1]{\todo[inline, textcolor=black, backgroundcolor=orange!20!white, linecolor=orange]{\textbf{TC}: #1}}
  \newcommand{\mm}[1]{\todo[inline, caption={}, textcolor=black, backgroundcolor=purple!20!white, linecolor=purple]{\textbf{MM}: #1}}
  \newcommand{\np}[1]{\todo[inline, textcolor=black, backgroundcolor=red!20!white, linecolor=red]{\textbf{NP}: #1}}
  \newcommand{\ab}[1]{\todo[inline, textcolor=black, backgroundcolor=green!20!white, linecolor=red]{\textbf{AB}: #1}}
  \newcommand{\at}[1]{\todo[inline, textcolor=black, backgroundcolor=brown!20!white, linecolor=red]{\textbf{AT}: #1}}
  \newcommand{\mnote}[1]{ [\textcolor{purple}{Mike: #1}] }
  \newcommand{\david}[1]{[\textcolor{orange}{David: #1}]}
  \newcommand{\ag}[1]{[\textcolor{pink!80!black}{Augustin: #1}]}
  \newcommand{\lb}[1]{[\textcolor{blue}{Linus: #1}]}
    \newcommand{\tb}[1]{[\textcolor{teal}{Thomas: #1}]}
  \newcommand{\olive}[1] {\textcolor{olive}{\textbf{Olive: }#1}}
   \newcommand{\chg}[2][red]{}
  \renewcommand{\todo}[1]{}
  \newcommand{\my}[1]{}
  \newcommand{\tc}[1]{}
  \newcommand{\mm}[1]{}
  \newcommand{\np}[1]{}
  \newcommand{\ab}[1]{}
  \newcommand{\at}[1]{}
  \newcommand{\mnote}[1]{}
  \newcommand{\david}[1]{}
  \newcommand{\ag}[1]{}
  \newcommand{\lb}[1]{}
  \newcommand{\tb}[1]{}
  \newcommand{\olive}[1]{}
\renewcommand{\paragraph}[1]{\textbf{#1}}
\title{Private Rate-Constrained Optimization with Applications to Fair Learning}
\author{Mohammad Yaghini\thanks{Equal contribution} \\
University of Toronto \& Vector Institute \\
\texttt{mohammad.yaghini@mail.utoronto.ca}
\And
Tudor Cebere\footnotemark[1] \\
PreMeDICaL team, Inria, Idesp, Inserm \\
Université de Montpellier \\
\texttt{tudor.cebere@inria.fr}
\And
Michael Menart \\
University of Toronto \&  Vector Institute \\
\texttt{menart.2@osu.edu}
\And
Aurélien Bellet \\
PreMeDICaL team, Inria, Idesp, Inserm \\
Université de Montpellier \\
\texttt{aurelien.bellet@inria.fr}
\And
Nicolas Papernot 
\\
University of Toronto \&  Vector Institute \& Inria \\
\texttt{nicolas.papernot@utoronto.ca}
}
\begin{document}

\maketitle

\begin{abstract}

\looseness=-1 Many problems in trustworthy ML can be expressed as constraints on prediction rates across subpopulations, including group fairness constraints (demographic parity, equalized odds, etc.). In this work, we study such constrained minimization problems under differential privacy (DP). Standard DP optimization techniques like DP-SGD rely on objectives that decompose over individual examples, enabling per-example gradient clipping and noise addition. Rate constraints, however, depend on aggregate statistics across groups, creating inter-sample dependencies that violate this decomposability. To address this, we develop \method, a DP variant of Stochastic Gradient Descent-Ascent (SGDA) that solves the Lagrangian formulation of rate constraint problems. Through careful design, the extra privacy cost incurred by incorporating these constraints in our approach is limited to that of privately estimating a histogram over each mini-batch at every step. We prove the convergence of our algorithm through a novel analysis of SGDA that leverages the linear structure of the dual parameter. Empirical results\footnote{Code is available at: https://github.com/cleverhans-lab/dp-raco} show that our method Pareto-dominates existing private learning approaches under group fairness constraints and also achieves strong privacy–utility–fairness performance on neural networks.
\end{abstract}

\section{Introduction}

\looseness=-1 From fair learning~\citep{zafar2017fairness, goel2018nondiscriminatory, agarwal18, cotter2019training} to robust optimization~\citep{DBLP:conf/nips/ChenLSS17,cotter2019optimization} and cost-sensitive learning \citep{MIENYE2021100690,cotter2019optimization}, many Machine Learning (ML) tasks can be formulated as constrained optimization problems.
In such problems, the goal is to minimize the model's overall error subject to \emph{rate constraints}, which enforce conditions on prediction rates across subsets of the training data. For instance, ensuring fairness in a resume screening system might require similar rates of positive outcomes (e.g., resumes selected for human review) across gender groups, a criterion known as equalized odds \citep{hardt2016equality}.
Similarly, in a medical setting, a decision-support system may need to strictly limit its false negative rate, e.g., to reduce the risk of misdiagnosing cancerous tumours as benign.

These learning tasks often require training models on sensitive data, such as employee or patient records in our examples. Publicly releasing these models can expose data owners to privacy attacks~\citep{shokri2017membership} and disclose personal data without consent~\citep{sweeney2002kanonymity}. Without proper safeguards, these risks can harm individuals, undermine trust in AI systems, and discourage data sharing in critical applications like medical research. Despite recent advances, differentially private (DP) constrained optimization has focused almost exclusively on fairness constraints \citep{jagielski2019differentially, mozannar2020fair, tran2023sf-pate, berrada2023unlocking, lowy2023stochastic}. Methods tailored to fairness do not extend to the broader family of rate constraints that arise in practice. We bridge this gap with the first general DP framework for arbitrary rate-constrained problems. Our approach expands DP’s reach to previously incompatible applications and pushes the Pareto frontier of utility, privacy, and constraint satisfaction, including fairness, beyond the state of the art.

Differential Privacy (DP) is the standard framework for private data analysis that has been successfully applied to training and releasing \emph{unconstrained} models with formal privacy guarantees. A key approach is the widely used DP-SGD algorithm~\citep{private_sgd,Bassily2014a,Abadi_2016}, which ensures DP by clipping per-sample gradients and adding calibrated noise to the averaged gradient. This process bounds each data sample's influence.
Incorporating rate constraints presents a challenge, however, because unlike typical training losses, these constraint functions (or their regularizer counterparts)  
\emph{do not readily decompose into per-sample terms}. This fundamental incompatibility with per-sample processing makes it difficult to integrate them with standard DP-SGD and its variants. Our key contribution is the introduction of a DP optimization algorithm that overcomes this limitation, enabling private optimization subject to rate constraints.

To address the challenge of privately enforcing rate constraints, we propose \emph{generalized rate constraints}. Generalized rate constraints allow us to express all rate constraints in a common form based on statistics (i.e., histograms) on \emph{disjoint} subgroups within the dataset.
This common structure is the key to our privacy solution: it allows us to efficiently gather all the necessary information to evaluate these constraints under DP. Beyond its advantages for privacy, our method offers greater flexibility compared to prior work \citep{cotter2019optimization} by extending to scenarios with multiple output classes.

With generalized rate constraints at hand, we introduce \method, a framework for optimizing machine learning models under rate constraints with differential privacy (DP). 
\method is a differentially private variant of the Stochastic Gradient Descent Ascent (SGDA) algorithm 
which leverages a Langrangian formulation and generalized rate constraints to overcome the decomposability obstacle.
Our core insight exploits the structure provided by these constraints: we can efficiently compute differentially private statistics (e.g., histograms) for these subgroups. By privatizing these statistics at each step, we enable private evaluation of the constraint function, and its per-sample constraint gradients. 
We provide a formal convergence analysis of \method, proving that even for non-convex optimization problems, our method converges to an approximate stationarity point. To achieve this, we introduce a novel approach to analyzing SGDA that accounts for bias in gradient estimates and exploits the linear structure of the dual update to enhance convergence~speed.

\looseness=-1 As a concrete application, we study private learning under various constraints, including demographic parity~\citep{dwork2012fairness}, false negative rate, and equalized odds. Our method achieves new state-of-the-art (SOTA) results on four standard benchmarks and Pareto-dominates the prior best approach~\citep{lowy2023stochastic} in accuracy–fairness–privacy trade-offs. We further demonstrate that \method scales beyond convex models, achieving strong performance for deep neural networks (ResNet16 \cite{he2016deep} on \texttt{CelebA} \cite{liu2015faceattributes}). We also validate its effectiveness on tasks involving multiple sensitive groups (demographic parity on the \texttt{\folkstables} dataset), scaling to 18 subgroup constraints. Additionally, our method offers two advantages over existing approaches explicitly designed for fairness constraints. First, it provides stronger privacy guarantees than prior approaches that only consider the privacy of the sensitive label~\citep{jagielski2019differentially, tran2023sf-pate}, akin to label privacy~\citep{ghazi2021deepa}. Second, it allows practitioners to \textit{directly} specify the maximum disparity, unlike previous penalization-based methods, such as~\citep{lowy2023stochastic}, which offer only indirect control over the desired fairness level via hyperparameter tuning.

\section{Background}
\subsection{Differential Privacy}

Differential Privacy (DP) \citep{calibrating_noise} has become the de facto standard in privacy-preserving ML thanks to the robustness of its guarantees, its desirable behaviour under post-processing and composition, and its extensive algorithmic framework. We recall the definition below and refer to \cite{Dwork2014a} for more details.
Let $\mathcal X,\mathcal Y$ be the input and output spaces, respectively. 
We fix some $m\in \mathbb N$ and  denote by $\mathcal{D}$ the space $(\mathcal X\times \mathcal{Y})^m$ of datasets of size $m$.

\begin{definition}[Differential Privacy]
\label{def:differential-privacy}
    A mechanism $\mathcal{M}:\mathcal{D\rightarrow\mathcal{O}}$ is $(\varepsilon, \delta)$-DP if for all datasets $D,D'\in\mathcal{D}$ differing in one datapoint and for all events $\mathcal{O}$:
    $P[\mathcal{M}(D) \in \mathcal{O}] \leq e ^ \varepsilon P[\mathcal{M}(D') \in  \mathcal{O}] + \delta$.
\end{definition}
In the above definition, $\delta\in(0,1)$ can be thought of as the failure probability, and $\varepsilon>0$  the privacy loss; smaller $\epsilon$ and $\delta$ correspond to stronger privacy guarantees.

Differentially Private Stochastic Gradient Descent (DP-SGD) \citep{private_sgd, Bassily2014a, Abadi_2016} serves as the foundational algorithm in private ML. Given a training dataset $D\in\mathcal{D}$ and model parameters $\theta\in\mathbb{R}^d$, DP-SGD aims to privately solve (in the sense of \cref{def:differential-privacy}) the empirical risk minimization problem $\min_{\theta\in\mathbb{R}^d} \ell(\theta)$, where $\ell(\theta)=\frac{1}{|D|}\sum_{x\in D} \ell(\theta;x)$ and $\ell(\theta,\cdot)$ is a loss function differentiable in $\theta$. 
DP-SGD follows the standard SGD update, but guarantees differential privacy by (i) capping each data point’s influence on the gradient through gradient clipping, and (ii) injecting Gaussian noise into the clipped gradients.

Each iteration $t \in [T]$ of DP-SGD incurs a privacy loss $(\varepsilon_t,\delta_t)$. Privacy composition~\citep{calibrating_noise, kairouz2015composition} and privacy accounting~\citep{Abadi_2016, gopi_numerical_accounting, doroshenko2022connect} are techniques that aggregate these per-step privacy losses into a total privacy guarantee $(\varepsilon, \delta)$ that holds for the entire optimization process. 

\subsection{Constrained Optimization via Lagrangian}
\label{sec:non-private-algorithm}

In this work, we aim to solve \emph{constrained} empirical risk minimization problems of the form:
\begin{align}\label{eq:constrained-optimization}
\textstyle
\min_{\theta\in\mathbb{R}^d} \left\{ \ell(\theta) := \frac{1}{|D|}\sum_{x\in D} \ell(\theta;x) \right\} \quad \text{s.t.} \quad \forall j\in[J], \; \Gamma_j(\theta) \leq \gamma_j.
\end{align}
where $\Gamma_j: \mathbb{R}^d \mapsto \re^+$ are the constraint functions and $\gamma \in (\re^+)^{J}$ are slack parameters. 
We focus on inequality constraints, as equality constraints are generally infeasible under differential privacy \citep{cummings2019compatibility}.

Due to the difficulty of solving \eqref{eq:constrained-optimization} directly, we instead solve an equivalent min-max optimization problem with respect to the Lagrangian:
\begin{equation}
\begin{aligned}
    \textstyle \text{\quad}& \min_{\theta\in\re^d} \max_{\lambda \in \Lambda} \bc{\cL(\theta,\lambda) := \ell(\theta) + R(\theta, \lambda)}, \label{eq:minmax} 
    && \text{where: } R(\theta, \lambda) = \textstyle\sum_{j=1}^{J}\lambda_j (\Gamma_j(\theta) - \gamma_j).
\end{aligned}
\end{equation}
Here, $\lambda\in \Lambda \subseteq (\re^+)^{J}$ are the Lagrange multipliers, often referred to as the dual parameter, while $\theta$ is the primal parameter. One of the simplest algorithms to solve \eqref{eq:minmax} is the generalization of (S)GD known as (Stochastic) Gradient Descent-Ascent, (S)GDA \citep{Nemirovski2009}.
\begin{definition}[GDA]
\label{def:sgda}
At each iteration $t$:
\begin{align}
\theta^{(t+1)} \leftarrow \theta^{(t)} - \eta_\theta \nabla_{\theta} \mathcal{L}(\theta^{(t)}, \lambda^{(t)}), && \text{and} &&
\lambda^{(t+1)} \leftarrow \Pi_\Lambda \big( \lambda^{(t)} + \eta_\lambda \nabla_{\lambda} \mathcal{L}(\theta^{(t)}, \lambda^{(t)}) \big). \label{eq:sgda-update}
\end{align}
where $\eta_\theta$ and $\eta_\lambda$ are step sizes and $\Pi_\Lambda$ performs orthogonal projection onto $\Lambda$. The stochastic version (SGDA) replaces exact gradients with stochastic estimates. 
\end{definition}
 
\subsection{Rate Constraints}
\label{sec:rate-constraints}

In this work, we focus on constraints that relate to prediction behavior over subsets of the dataset.
Consider a model $h: \mathcal{X} \times \mathbb{R}^d \mapsto \re^K$ 
that maps inputs from feature space $\mathcal{X}$ to real-valued prediction scores over the label set $\mathcal{Y}=\{1,\dots,K\}$ using parameters $\theta \in \mathbb{R}^d$. Formally, (hard) prediction rates %
count the number of points in a dataset $D$ for which the model predicts a certain label $k\in\cY$:
$
\textstyle P^{\text{hard}}_k(D; \theta) = \frac{1}{|D|} \sum_{x \in D} \mathbb{1}_{[\argmax_{k'\in[K]}\{h(\theta; x)_{k'}\} = k]}.
$

As the indicator function is non-differentiable and thus challenging to optimize, we will use differentiable versions of these constraints. We rely on the tempered softmax function $\sigma_\tau(z)_k = \frac {\exp(-\tau z_k)}{\sum_{l=1}^K \exp(-\tau z_l)}$, where the temperature parameter $\tau \in \mathbb{R}^+$ controls the sharpness of the probability distribution. This allows us to define soft prediction rates:
\begin{equation}
\textstyle P_k(D ; \theta, \tau) = \frac{1}{|D|} \sum_{x \in D}  \sigma_\tau(h(\theta; x))_k, \quad \text{ for } k \in \cY.
\label{eq:rate-soft-func-general}
\end{equation}

Observe that $\lim_{\tau \rightarrow \infty} P_k(D; \theta,\tau) = P^{\text{hard}}_k(D; \theta)$. 

\looseness=-1 \emph{Rate constraints}, as defined by \citet{goh2016satisfying} and \citet{cotter2019optimization}
for binary classification ($\cY = \{0, 1\}$), are linear combinations of a classifier's prediction rates across different data partitions:
\begin{equation}
\label{eq:rate-cotter}
\textstyle\sum_{q \in [Q]} \alpha_q P_1(D_q, \theta) + \beta_q P_0(D_q, \theta) \leq \gamma,
\end{equation}
$Q>0$, $\alpha_q,\beta_q\in\mathbb{R}$ are mixing coefficients, $D_1,...,D_Q\subseteq D$ and $\gamma$ is the slack parameter.
Examples of rate constraints in this form include ensuring that the fraction of positive predictions across different demographic groups stays within a specified threshold, or requiring the model to achieve a minimum level of precision or recall for both classes \citep{cotter2019training}. %

Such constraints are limited to binary classification, and a multiclass generalization is not immediate. More critically, it is unclear how to evaluate rate constraints in \eqref{eq:rate-cotter} while efficiently preserving privacy. These are issues we address in the next section.

\section{Private Learning with Generalized Rate Constraints}
\label{sec:generalized}

\textbf{Generalized rate constraints.} We propose a generalized form of rate constraints that (i) applies to the multi-class setting and (ii) exploits shared structure to enable accurate private estimation.

\begingroup
\setlength{\columnsep}{3mm}
\setlength{\intextsep}{0mm}
\begin{wrapfigure}{r}{0.35\textwidth}
    \centering
    \vspace{-2mm}
    \resizebox{0.35\textwidth}{!}{\begin{tikzpicture}

\newif\ifdisablecolor
\disablecolorfalse

\newcommand{\items}[2]{
\ifdisablecolor{
}
\else{
\tikz{
\begin{scope}[every node/.append style={draw=none, minimum width=1mm, minimum height=4mm, inner sep=0, fill opacity=0.5}]
\node[fill=blue] (pos1){};
\foreach \x in {2,...,#1}{
    \node[right=0.5mm of pos\the\numexpr\x-1\relax, fill=blue] (pos\x){};
}
\node[right=#1*0.2-0.5 mm of pos1, anchor=west, font=\small, fill=white, text opacity=1] (posLabel){$\nicefrac{#1}{\the\numexpr#1+#2\relax}$};

\node[fill=red, right=0.5mm of pos\the\numexpr#1\relax] (neg1){};
\foreach \y in {2,...,#2}{
\node[draw=none, fill=red, right=0.5mm of neg\the\numexpr\y-1\relax] (neg\y){};
}
\node[right=#2*0.1  mm of neg1, anchor=west, font=\small, fill=white, text opacity=1] (negLabel){$\nicefrac{#2}{\the\numexpr#1+#2\relax}$};
\end{scope}
}
}
\fi
}

\tikzset{every label/.append style={font=\bfseries}}

\begin{scope}[every node/.append style={draw, 
minimum width=1cm, minimum height=6mm, , inner sep=1pt, font=\bfseries}]

\node[label=$\boldsymbol{D_1}$, label={[label distance=1mm, rotate=90, anchor=south]left:Global}] (d1)
{\items{2}{4}};
\foreach \i/\w/\peritem in {2/2cm/\items{8}{4}, 3/3.5cm/\items{7}{14}}{ %
    \node[right=2mm of d\the\numexpr\i-1\relax.east, label={$\boldsymbol{D_\i}$}, minimum width=\w] (d\i) {\peritem};
    }

\node[below=0.7cm of d1, label={below:$I_1=\{1\}: D_1$}, label={[label distance=-2mm]left:$\boldsymbol{\Gamma_1}$}] (d1p) {\items{2}{4}};
\node[right=4mm of d1p.east, label={below:$I_2=\{2,3\}:  D_2 \cup D_3$}, minimum width=5.5cm, label={[label distance=0mm]right:}] (d2p)  {\items{15}{18}};
\foreach \d/\dp in {1/1, 2/2, 3/2}{
\draw[-latex] (d\d) -- (d\dp p);
}

\node[below=1.4cm of d1p.west, anchor=west, label={below:$I_3=\{2\}: D_2$}, minimum width=2cm, label={[label distance=-2mm]left:$\boldsymbol{\Gamma_2}$}] (d1q) {\items{8}{4}};
\node[right=4mm of d1q.east, label={below:$I_4=\{1,3\}:  D_1 \cup D_3$}, minimum width=4.5cm, label={[label distance=-0mm]right:}] (d2q)  {\items{13}{14}};

\node[below=1.4cm of d1q.west, anchor=west, label={below:$I_5=\{3\}: D_3$}, minimum width=3.5cm, label={[label distance=-2mm]left:$\boldsymbol{\Gamma_3}$}] (d1r) {\items{7}{14}};
\node[right=4mm of d1r.east, label={[anchor=north]below:$I_6=\{1,2\}: D_1 \cup D_2$}, minimum width=3cm, label={[label distance=0mm]right:}] (d2r)  {\items{10}{8}};

\end{scope}

\end{tikzpicture}}
    \caption{\textbf{Each rate constraint of the form \eqref{eq:rate-soft-general} builds local datasets based on the global partition.} A class-1 (class-0) prediction is shown with
    a \textcolor{blue!80!white}{blue} (\textcolor{red!80!white}{red}) square.
    Prediction rates $P_0, P_1$ are shown as fractions. 
    As an example, let $D_1$, $D_2$, and $D_3$ be the set of \fixed{Asian}, Black, and  Caucasian individuals in the dataset, respectively. Constraint $\Gamma_1$ builds its local datasets as $\{\{D_1\}, \{D_2 \cup D_3\}\}$, \ie \{\fixed{Asian}, Non-\fixed{Asian}\}, from the global partition $\{D_1, D_2, D_3\}$ using the set of index subsets $\lpar_1 = \{I_1, I_2\}$.
    \vspace{-2mm}
    }
    \label{fig:partitioning}
\end{wrapfigure}
This shared structure is a partition $\bc{D_1,...,D_Q}$ of the dataset $D$ for some $Q>0$,
which we refer to as the ``global'' partition. 
We then allow each rate constraint to incorporate prediction rates over any recombination of sub-datasets in the global partition. This structure is flexible, as it allows each rate constraint to have its own ``local'' datasets, provided that each of these datasets can be formed as a union of sets from the global partition. %
For example, in the context of fairness constraints, the global partition corresponds to the the sensitive groups (e.g., \fixed{Asian}, Black, Caucasian), and local datasets for one of the constraint is \{\fixed{Asian}, Non-\fixed{Asian}\} where $\text{Non-\fixed{Asian}} = \{\text{Black}, \text{Caucasian}\}$. See~\Cref{fig:partitioning}.

Formally, given this global partition, we assume for each $j\in [J]$ there exists a family of subsets of $[Q]$, denoted $\lpar_j \subseteq 2^{[Q]}$, and a weight vector $\alpha_j\in \re^{|\lpar_j|\cdot K}$, such that the constraint $\Gamma_j$ can be written in the following form:\footnote{With some abuse of notation, we denote by $\alpha_{j,I,k}$ the entry of $\alpha_j$ corresponding to subset $I$ and label $k$.}
\begin{align}
     &\textstyle\Gamma_j(\theta) = \sum_{I\in \lpar_j} \sum_{k=1}^K \alpha_{j,I,k} P_k(\cup_{i \in I} D_{i}; \theta).
    \label{eq:rate-soft-general}
\end{align}

Assuming such a global partition is not restrictive, as the trivial partition $D_q=x_q$ (with $Q=|D|$) can always be used. However, as Section~\ref{sec:method} will show, a smaller partition size $Q$ enables a better privacy-utility trade-off through more effective noise use.

\begin{remark}
 For common rate constraints, the best global partition will be readily apparent (see the application to fairness below). The smallest global partition can however 
 be defined explicitly. 
 Let $\bar{D}_1,...,\bar{D}_{\bar{Q}}$ be the (possibly non-disjoint) subsets of $D$ over which the functions $\Gamma_1,...,\Gamma_J$ compute a prediction rate as per Eq.~\eqref{eq:rate-soft-general}. Then the smallest global partition assigns any two data points $x$ and $x'$ in the same $D_q$ if and only if 
$\bc{\forall \bar{q}\in[\bar{Q}], \bc{x,x'} \cap \bar{D}_q \in \bc{\emptyset,\bc{x,x'}}}$. %
\end{remark}

\endgroup
Note that each rate constraint $\Gamma_j$ is uniquely defined by the subset family $\lpar_j$ and vector $\alpha_j$. Both parameters are public, specifying only the constraint’s structure and containing no sensitive data.

\looseness=-1 \paragraph{Application to fair learning.}
Group fairness in machine learning aims to prevent models from making biased decisions across different sensitive groups. We show below our general form of rate constraints \eqref{eq:rate-soft-func-general} allows to capture
the popular group fairness notion of demographic parity~\citep{barocas2023fairness}. More generally, all common group fairness measures can be formulated as rate constraints~\citep{cotter2019optimization}. We provide details on formulating other fairness notions in~\Cref{sec:case-study-fairness}.

\begin{definition}[Demographic Parity]
Assume each feature vector $x\in\cX$ contains a sensitive attribute, denoted as $Z$, taking on values in $\cZ \subset \mathbb{Z}$.  A classifier $h(\theta;\cdot)$ satisfies demographic parity w.r.t.~$Z$ if $\Pr[\hat{Y}=k\mid Z=z]=\Pr[\hat{Y}=k]$ for all $z\in\cZ, k\in\cY$, where $\hat{Y}=h(\theta;X)$.
\end{definition}
In practice, probabilities are replaced by empirical rates $P_k$. With slack $\gamma$, this gives
\begin{align}
    P_k(D[Z=z];\theta)-P_k(D[Z\neq z];\theta)\leq \gamma, \quad \forall z\in\cZ, k\in\cY,
\label{eq:const-dem-parity}
\end{align}
leading to $J=|\cZ|\cdot|\cY|$ constraints of the form \eqref{eq:rate-soft-general}. The global partition has size $Q=|\cZ|$ with elements $D_z=D[Z=z]$. For the constraint indexed by $(z,y)\in\cZ\times\cY$, we take $\lpar=(\{z\},[|\cZ|]\setminus z)$ and define $\alpha_{\{z\},y}=1$, $\alpha_{\{[|\cZ|]\setminus z\},y}=-1$, with all other components zero.

\paragraph{Objective.} With these definitions in place, we can now state our objective: solve problem \eqref{eq:minmax} with generalized rate constraints of the form \eqref{eq:rate-soft-general} under DP.
\section{\method: Private Rate-Constrained Optimization}
\label{sec:method}

\begingroup
\setlength{\columnsep}{3mm}
\setlength{\intextsep}{0mm}
\begin{wrapfigure}{r}{0.48\textwidth}
\begin{minipage}{0.5\textwidth}
\begin{algorithm}[H]
\caption{\method}
\label{alg:FairDP-SGD}
    \begin{algorithmic}[1]
    \setcounter{ALC@unique}{0}
    \REQUIRE Dataset $D$, Parameter $\theta_0$, learning rates $\eta_\theta$, $\eta_\lambda$, Gaussian noise variance $\sigma$, Laplace parameter $b$, clipping norm $C$, sampling rate $r$,
    slack parameter vector $\gamma$, loss function $\ell(\theta; \cdot)$
    \STATE Initialize $\lambda^{(0)} \gets \left[ 0 \right]$
    \FOR{each $t \in \{0, \dots, T-1\}$}
        \STATE $B^{(t)} \gets \operatorname{PoissonSample}(D, r)$ \label{line:poisson}
        \STATE \textit{Private Histogram:} %
        \STATE {\small $\hat{H}^{(t)}_{q,k} \gets \sum\limits_{x \in D_q \cap B^{(t)}} \sigma(h(\theta^{(t)};x))_k  + \operatorname{Lap}(\frac{1}{b})$}
        
        \textit{Primal Update:}
        \STATE $Z^{(t)} \sim \operatorname{Gaussian}(0,  \mathbb{I}_d\sigma^2)$ 
        \STATE $\forall x \in B^{(t)}$ : \text{ set }  $g_{x,\theta}^{(t)}$~ as {\scriptsize (see also Eq.\eqref{eq:per-sample-gradient-general})} \\ $\frac{1}{r|D|}\nabla_{\theta} \ell(\theta^{(t)};x)\!+\! \!\nabla_{\theta}\hat{R}(\theta^{(t)},\lambda^{(t)};\hat{H}^{(t)},x)$  
        \STATE $g_\theta^{(t)} \gets  \big( \textstyle \sum_{x \in B^{(t)}} \operatorname{clip}(g_{x,\theta}^{(t)}, \frac{C}{r|D|}) \big) + Z^{(t)}$ \label{line:clipping}
        \STATE $\theta^{(t+1)} \gets \theta^{(t)} - \eta_\theta g_\theta^{(t)}$ \label{line:primal-update}
        
        \textit{Dual Update:}
        
        \STATE $[g_{\lambda}^{(t)}]_{j} \gets \Gamma^\text{post}_j(\hat{H}^{(t)}) - \gamma_j, ~~\forall j \in [J]$  \label{line:const-eval} 
        \STATE $\lambda^{(t+1)} \gets \Pi_\Lambda(\lambda^{(t)} + \eta_\lambda g_\lambda^{(t)})$ \label{line:dual-update} {\scriptsize (see Eqn. \eqref{eq:gamma-post}}
    \ENDFOR
    \RETURN $\theta^{(T)}$
    \end{algorithmic}
\end{algorithm}
\end{minipage}
\end{wrapfigure}
We introduce \method (\Cref{alg:FairDP-SGD}), an algorithm for rate-constrained optimization that extends SGDA (\Cref{def:sgda}) to satisfy DP. Each iteration $t$ of \method operates on a mini-batch $B^{(t)}$ and consists of three key components that we will describe in detail in this section: 

\begin{enumerate}[leftmargin=10pt,topsep=0pt]
    \item \textbf{Private histogram computation:} For each class $k\in[K]$ and part $q\in[Q]$ of the partition, we privately estimate the sum of model predictions $\sigma(h(\theta_t; x))_k$ over the points $x$ in the mini-batch $B^{(t)}$ that belong to
    $D_q$, storing these counts in a histogram $H^{(t)}$ (\Cref{sec:computing-h}).
    \item \textbf{Private primal updates:} We derive per-sample gradients for the primal update of SGDA based on post-processing the histogram $H^{(t)}$. We then clip, average and privatize these gradients using the Gaussian mechanism (\Cref{sec:per-sample-gradient}).
    \item \textbf{Private dual updates:} We compute all constraint values by again post-processing the histogram $H^{(t)}$, allowing us to perform the dual update at no additional privacy cost (\Cref{sec:dual-gradient}).
\end{enumerate}

\endgroup

\subsection{Private Histogram Computation}
\label{sec:computing-h}
Our algorithm's primal and dual updates compute prediction rates across dataset partition parts $D_1, \dots, D_Q$ for each class.  To track these prediction rates privately, we construct a histogram $H^{(t)} \in \mathbb{R}^{Q \times K}$ that counts (soft) model predictions for each combination of part $q$ and class $k$. For a given model parameter $\theta$, each sample $x_i$ in the mini-batch $B^{(t)}$ belongs to exactly one part $D_q$ of the partition but can influence the counts of all $K$ classes through the softmax probabilities $\sigma(h(\theta; x))_k$.
This non-private histogram is constructed by accumulating these softmax vectors:

\vspace{-0.3cm}
\begin{equation}
\textstyle H^{(t)}_{q,k} = \sum_{D_q \cap B^{(t)}} \sigma(h(\theta;x_i))_k.
\end{equation}

\looseness=-1 To make this histogram differentially private, we use the Laplace mechanism~\citep{calibrating_noise}. The $\ell_1$ sensitivity of $H^{(t)}$ is $1$, as each sample belongs to one element of the global partition and its softmax predictions sum to $1$ across classes. Thus, we achieve $\varepsilon$-DP by adding %
Laplace noise to each element:
\begin{equation}
\hat{H}^{(t)}_{q,k} = H^{(t)}_{q,k} + \operatorname{Lap}(1/\varepsilon), \quad \forall q \in [Q], k \in \mathcal{Y}.
\end{equation}

\begin{remark}
    We focus on the Laplace mechanism for simplicity, but we note that our framework can readily accommodate other differentially private histogram mechanisms that may provide better utility in some regimes, e.g., when the histogram is high-dimensional and sparse~\citep{Wilkins_Kifer_Zhang_Karrer_2024}.
\end{remark}

In the following sections, we will see that $\hat{H}^{(t)}$ contains all the necessary information to compute the quantities related to the rate constraints required for both the primal and dual updates, thereby avoiding additional privacy costs that would arise from composing multiple queries.

\subsection{Privately Computing the Primal Gradient}
\label{sec:per-sample-gradient}

A key requirement in DP-SGDA is that each sample has a bounded contribution to gradient updates. To satisfy this in \method, we decompose the Lagrangian into per-sample terms.
While the loss $\ell(\theta)$ naturally decomposes as in standard DP-SGD, the regularizer $R$ is more challenging.

Given a mini-batch $B$, define $B_{\cap I} = B \cap  (\underset{i \in I}{\cup}D_i)$ and recall that $\lpar_j$ and $\alpha_j \in \re^{|\lpar_j|\times K}$ denote the family of subsets and weight vector associated with constraint $\Gamma_j$. 
The minibatch-level regularizer is,
{\small
\begin{align*}
R(\theta, \lambda; B) \!=\! \sum_{j = 1}^J \lambda_j 
\Big(\big(\sum_{I \in \lpar_j} \sum^K_k \alpha_{j,I,k} P_k(B_{\cap I};\theta)\big) \!-\! \gamma_j\Big) 
\!=\! \sum_{j = 1}^J \lambda_j \Big(\big(\sum_{I \in \lpar_j} \! \sum^K_k \!\sum_{x \in B_{\cap I}} \frac{\alpha_{j,I,k}}{|B_{\cap I}|}  \sigma (h(\theta; x)_k\big) \!-\!\gamma_j\Big).
\end{align*}}

Note that this may be a \textit{biased} estimate of $R(\theta, \lambda)$ due to the normalization term $|B_{\cap I}|$. We account for this bias in our convergence analysis (\Cref{sec:convergence}). %

We would like a per-sample decomposition of $R(\theta,\lambda;B)$. 
The main obstacle in such a decomposition are the quantities $|B_{\cap I}|$, which depend on the entire mini-batch.
We overcome this by first noting that $|B_{\cap I}| = \sum_{i \in I} \sum_k H^{(t)}_{i, k}$, leading to the following per-sample regularizer estimator at point $x\in D$:
{\small
\begin{align*}
    \label{eq:per-sample-general-adjusted}
     \textstyle &\hat{R}(\theta, \lambda; H, x) =
    \sum_{j = 1}^J \! \lambda_j\Big( \big( \! \sum_{I \in \lpar_j} \! \sum_{k_1 \in K} \frac{\alpha_{j, I, k_1} ~ \mathbb{1}_{[x \in B_{\cap I}]}}{\underset{i \in I}{\sum}~\underset{k_2 \in K}{\sum} H^{(t)}_{i, k_2}} \sigma(h( \theta; x))_{k_1} \big) -\gamma_j \Big),
\end{align*}}
and thus the overall per-sample gradient is given by,
{\small
\begin{equation}
    \label{eq:per-sample-gradient-general}
     \textstyle \frac{\nabla_\theta \ell(\theta;x)}{r|D|} + \sum_{j = 1}^J    \sum_{I \in \lpar_j} \! \sum_{k_1 \in [K]} \!\frac{\lambda_j ~ \alpha_{j, I, k_1} ~ \mathbb{1}_{[x \in B_{\cap I}]}}{\underset{i \in I}{\sum}~\underset{k_2 \in [K]}{\sum} H^{(t)}_{i, k_2}}\nabla_\theta [\sigma(h( \theta; x))_{k_1}].
\end{equation}}Then, since $H$ depends on the mini-batch $B$, we use instead its differentially private version $\hat{H}$.
Note that the normalizing term $\frac{1}{r|D|}$ is necessary to correctly implement clipping in Line \ref{line:clipping} of Algorithm \ref{alg:FairDP-SGD}.

\begin{remark} 
For specific constraints, the estimation of $|B_{\cap I}|$ can be further refined.
For example, when each $\cI_j$ is itself a partition of $[Q]$, the sensitivity of $\sum_{i}\sum_{k_2} H^{(t)}_{i, k_2}$ is at most $1$, and adding Laplace noise directly results in a tighter estimate of $|B_{\cap I}|$.%
\end{remark}

With this per-sample decomposition, we can apply standard DP-SGD techniques: clipping per-sample gradients, averaging them over the mini-batch, and adding Gaussian noise to preserve privacy.

\subsection{Privately Computing the Constraint Dual Gradient}
\label{sec:dual-gradient}

\looseness=-1 For each constraint $\Gamma_j$ and corresponding slack parameter $\gamma_j$, the gradient of the Lagrangian w.r.t  $\lambda$ is:
\begin{equation}
\nabla_\lambda \mathcal{L}(\theta^{(t)},\lambda^{(t)})_j = \Gamma_j(\theta^{(t)}; B^{(t)}) - \gamma_j,
\label{eq:dual-update-general}
\end{equation}
The dual update in \method thus requires evaluating rate constraints on the current mini-batch $B^{(t)}$,
incurring a privacy cost. To avoid this additional cost, we introduce a post-processing function $\Gamma^\text{post}_j: \mathbb{R}^{Q\times K} \mapsto \mathbb{R}$ that operates directly on the private histogram $\hat H^{(t)}$. This function replaces each sum of model predictions $\sum_{i\in D_q\cap B^{(t)}} \sigma(h(\theta; x_i))_k$ with the corresponding histogram count $\hat{H}^{(t)}_{q,k}$:
\vspace{-0.2cm}
\begin{equation} \label{eq:gamma-post}
    \textstyle\Gamma^\text{post}_j(\hat{H}^{(t)}) = \sum_{I \in \lpar_j} \sum_{k_1 \in [K]} \frac{ \alpha_{j, I, k_1} \sum_{i\in I} \hat H_{i,k_1}^{(t)}}{\sum_{i \in I} \sum_{k_2 \in [K]} \hat H_{i, k_2}^{(t)}}.
\end{equation}
As $\hat{H}^{(t)}$ is already DP, the post-processing property ensures this step incurs no additional privacy cost.

\looseness=-1 \begin{remark}
As standard in private optimization, mini-batches $B^{(t)}$ are constructed via Poisson sampling, with each datapoint included with probability $r$.
This allows us to leverage privacy amplification by subsampling \citep{kasiviswanathan2011can, balle2018privacy} for the Laplace mechanism in histogram computation (\Cref{sec:computing-h}) and the Gaussian mechanism for per-sample gradients (\Cref{sec:per-sample-gradient}).
\end{remark}

\looseness=-1 \method's efficiency relies on a private histogram $H^{(t)}$ enabling per-sample gradient computation and private constraint evaluation, key to handling rate constraints under DP. The privacy guarantees of Algorithm~\ref{alg:FairDP-SGD} follow from composing the subsampled Laplace and Gaussian mechanisms over $T$ steps.
\begin{theorem} \label{thm:raco-privacy}
Let 
$b \geq 2\max\big\{\frac{1}{\epsilon},
\frac{r \sqrt{T\log(T/\delta)}}{\epsilon}\big\}$ and 
$\sigma \geq 10\max\big\{\frac{C \log(T/\delta)}{r|D|\epsilon}, \frac{C \sqrt{T}\log(T/\delta)}{|D|\epsilon}\big\}$, then Algorithm~\ref{alg:FairDP-SGD} is $(\epsilon,\delta)$-DP.
\end{theorem}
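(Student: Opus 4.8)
The plan is to track the per-iteration privacy leakage of the two elementary mechanisms — the Laplace mechanism producing $\hat H^{(t)}$ and the Gaussian mechanism producing $g_\theta^{(t)}$ — and then compose over the $T$ iterations. The first observation is that the dual update (Line~\ref{line:dual-update}) costs nothing: by Eq.~\eqref{eq:gamma-post} it is a deterministic post-processing of $\hat H^{(t)}$, and likewise the quantities $|B_{\cap I}|$ appearing in the per-sample primal gradient \eqref{eq:per-sample-gradient-general} are read off from $\hat H^{(t)}$, so the only randomness that touches the data at step $t$ is (i) the Poisson subsample $B^{(t)}$, (ii) the Laplace noise added to the histogram, and (iii) the Gaussian noise added to the clipped gradient sum. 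So each step is the release of a pair $(\hat H^{(t)}, g_\theta^{(t)})$, and the whole algorithm is the adaptive composition of $T$ such releases (adaptive because $\theta^{(t)}, \lambda^{(t)}$ depend on past outputs, which is handled by standard adaptive composition theorems).

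Next I would bound the per-step cost of each mechanism. For the histogram: as already argued in Section~\ref{sec:computing-h}, changing one datapoint changes exactly one column $\hat H_{q,\cdot}$ and shifts its entries by a vector of $\ell_1$ norm at most $1$ (the softmax vector sums to $1$), so the $\ell_1$ sensitivity is $1$ and $\operatorname{Lap}(1/b)$ noise gives $b$-DP per step on the full dataset; combined with Poisson subsampling at rate $r$, privacy amplification by subsampling (\citealp{balle2018privacy}) upgrades this to roughly $O(rb)$-DP per step, or more precisely an $(\varepsilon'_t,\delta'_t)$ guarantee with $\varepsilon'_t = \log(1 + r(e^{b}-1)) \approx r b$ when $b$ is small. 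For the gradient: clipping to norm $C/(r|D|)$ in Line~\ref{line:clipping} caps each per-sample contribution, so the sum over the minibatch has $\ell_2$ sensitivity $C/(r|D|)$ (one datapoint entering or leaving the Poisson sample moves the sum by at most one clipped gradient), and adding $\mathcal N(0,\sigma^2 \mathbb I_d)$ gives a per-step Gaussian mechanism with noise multiplier $\sigma r |D| / C$, again amplified by the subsampling rate $r$.

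Then I would compose. The cleanest route is to convert each mechanism to Rényi DP (or use the moments accountant), where subsampled Gaussian and subsampled Laplace both have clean RDP bounds, add the RDP curves of the two mechanisms at each step, multiply by $T$, and convert back to $(\varepsilon,\delta)$; alternatively one can use advanced composition directly. Either way, the $T$-fold composition of a per-step $\varepsilon_0$-type guarantee yields total cost on the order of $\varepsilon_0\sqrt{T\log(1/\delta)}$ plus a lower-order $T\varepsilon_0^2$ term. Plugging in the stated $b \ge 2\max\{1/\epsilon,\, r\sqrt{T\log(T/\delta)}/\epsilon\}$: the first branch controls the linear-in-$T$ RDP contribution and the second branch controls the $\sqrt{T}$ advanced-composition contribution, so that the Laplace part contributes at most $\epsilon/2$ (say) to the total budget; symmetrically, $\sigma \ge 10\max\{C\log(T/\delta)/(r|D|\epsilon),\, C\sqrt T\log(T/\delta)/(|D|\epsilon)\}$ makes the Gaussian part contribute at most $\epsilon/2$. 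Summing the two halves and allocating a $\delta$ each gives the claimed $(\epsilon,\delta)$-DP, with the constants ($2$, $10$) absorbing the logarithmic factors and the slack in the amplification and composition inequalities.

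The main obstacle is bookkeeping rather than conceptual: one must be careful that the two noise additions at a single step are justified as a \emph{joint} release (they are independent given $(\theta^{(t)},\lambda^{(t)})$, and $(\theta^{(t)},\lambda^{(t)})$ is itself post-processing of earlier releases, so adaptive composition applies), and one must choose the right form of the subsampled-mechanism bound — the Poisson-subsampled Laplace amplification is less standard than the Gaussian case, so I would either cite an explicit amplification lemma for it or, more conservatively, bound the subsampled Laplace by a subsampled randomized-response / group-privacy argument. Getting the explicit constants $2$ and $10$ to actually work requires picking a concrete accountant and being slightly generous in the inequalities, but no single step is deep; the statement is essentially "two subsampled mechanisms, composed $T$ times, with noise scaled so each eats half the budget."
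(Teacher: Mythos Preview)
Your proposal is correct and follows essentially the same route as the paper: bound the $\ell_1$-sensitivity of the histogram and the $\ell_2$-sensitivity of the clipped gradient sum, apply the Laplace and Gaussian mechanisms per step, invoke Poisson subsampling amplification, and then advanced-compose over $T$ rounds (the paper uses advanced composition directly rather than going through RDP). The only minor structural difference is ordering: the paper first basic-composes the two mechanisms with respect to the minibatch into a single $(\epsilon_1,\delta_1)$-DP release and then applies subsampling amplification once to that joint mechanism, rather than amplifying each separately as you suggest---since the Poisson draw is shared between the histogram and the gradient this is the technically cleaner formulation, but the resulting first-order bounds coincide.
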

The proof is in Appendix~\ref{app:raco-privacy}.
We present this result primarily for to provide intuition about parameter scaling. 
In our experiments, we use a tighter privacy accountant that improves both constants and logarithmic factors. Additionally, in our convergence analysis, we offer a more detailed examination when the Lagrangian is Lipschitz and the algorithm is run without clipping.

\section{Convergence and Utility Analysis} \label{sec:convergence}

Consider the function defined as $\maxfunc(\theta) = \max_{\lambda\in\Lambda}\bc{\cL(\theta,\lambda)}$. Ideally, one would want to show that Algorithm~\ref{alg:FairDP-SGD} approximately minimizes $\maxfunc$. However, due to the fact that $\maxfunc$ may be non-convex, finding an approximate minimizer is intractable in general. In fact, because the constraint functions, $\bc{\Gamma_j}$, may be non-convex in $\theta$, if $\Lambda = (\re^+)^{J}$ then even finding a point where $\Phi$ is finite may be intractable. As such, we must make two standard concessions. First, we will assume $\Lambda$ is a compact convex set of bounded diameter. Intuitively, this bounds the penalty applied when the constraints are not satisfied. Further, instead of guaranteeing that Algorithm~\ref{alg:FairDP-SGD} finds an approximate minimizer of $\Phi$, we will show the algorithm finds an approximate stationary point. We note that stationarity is a standard convergence measure in non-convex optimization, and provide more discussion in Appendix~\ref{app:stationarity-background}. Our subsequent analysis in fact provides for a slightly stronger, but more technical, notion of stationarity than we provide here; see Appendix \ref{app:sgda-convergence} for more details.
\begin{definition} \label{def:stationary-point}
($(\alpha,\nu)$-stationary point) 
A point $\theta$ is an $(\alpha,\nu)$-stationary point if $\exists\theta'$ s.t. $\|\theta-\theta'\|\leq\nu$ and $\min_{v\in\partial \maxfunc(\theta')}\|v\|\leq \alpha$, with $\partial \Phi$ the subdifferential of $\Phi$.
\end{definition}

\looseness=-1 When the loss is Lipschitz and smooth, SGDA converges to an approximate stationary point of $\maxfunc$ \citep{lin2020gradient}. 
Unfortunately, Algorithm~\ref{alg:FairDP-SGD} may have \textit{biased} gradients, and the scale of noise present in $\hat{g}_\theta$ and $\hat{g}_\lambda$ may vary dramatically depending on $d$ and $\lambdadim$. Thus, our main goal in this section is two-fold. First, we aim to formally show that despite using biased gradients, Algorithm \ref{alg:FairDP-SGD} provably finds an approximate stationary point. 
Specifically, when the error in the primal updates (w.r.t. $\|\cdot\|_2)$ is at most $\tau_\theta$ and the error in the dual updates is at most $\tau_\lambda$ (w.r.t. $\|\cdot\|_\infty)$, we show SGDA on a nonconvex-linear loss finds a stationary point roughly with $\alpha = O(\frac{\sqrt{1+\tau_\theta}}{T^{1/4}}+\tau_\theta + \sqrt{\tau_\lambda} + \frac{1}{\sqrt{T}})$; see Appendix \ref{app:sgda-convergence}.
Second, we characterize the impact of the noise in $\hat{g}_\theta$ and $\hat{g}_\lambda$ added due to privacy. This involves correctly balancing the number of iterations $T$ with the scale of noise needed to ensure privacy, which increases with $T$.
This leads to the following result for Algorithm \ref{alg:FairDP-SGD} run without clipping.
\begin{theorem}[Informal]\label{thm:private-raco-rate}
Let $n = \min_{q \in [Q]}\bc{|D_q|}$.
Assume $h(\theta;x)$ and $\ell(\theta)$ are both Lipschitz and smooth. Then, under appropriate choices of parameters, Algorithm~\ref{alg:FairDP-SGD} is $(\epsilon,\delta)$-DP and 
with probability at least $1-\rho$ there exists $t\in [T]$ s.t. $\theta_t$ is an $(\alpha,\alpha)$-stationary point of $\Phi$ with, %
{\small
\begin{align*}
    \alpha=\! O\!\bigg(\!\! \Big(\frac{\sqrt{d\log(\frac{JKn}{\rho})}\log(\frac{n}{\delta})}{n\epsilon}\Big)^{\frac{1}{3}} + \frac{K^{\frac{1}{4}}\sqrt{\log(\frac{n}{\delta})}\log^{\frac{1}{4}}(\frac{JKn}{\rho})}{\br{n\epsilon}^{1/4}}\bigg),
\end{align*}}
\!\!up to dependence on problem constants. We provide a complete statement and full proof in Appendix \ref{app:rate-proof}. In Appendix~\ref{app:regularity-of-lagrangian}, we derive Lipschitz and smoothness constants for $\cL$ from those of the classifier.
\end{theorem}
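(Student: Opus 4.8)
The plan is to combine two ingredients that are developed separately in the appendices: (i) a generic convergence bound for SGDA on a nonconvex-linear min-max problem with biased primal gradients and $\ell_\infty$-bounded dual error, giving a stationarity guarantee of the form $\alpha = O(\frac{\sqrt{1+\tau_\theta}}{T^{1/4}} + \tau_\theta + \sqrt{\tau_\lambda} + \frac{1}{\sqrt T})$; and (ii) explicit bounds on the primal error $\tau_\theta$ and dual error $\tau_\lambda$ incurred by the Laplace and Gaussian noise in Algorithm~\ref{alg:FairDP-SGD} together with the stochastic-minibatch and normalization bias. The final rate then follows by optimizing the iteration count $T$ against the privacy-driven noise scale. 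First I would set up the regularity assumptions: from the Lipschitzness and smoothness of $h(\theta;x)$ and $\ell(\theta)$, invoke the bounds derived in Appendix~\ref{app:regularity-of-lagrangian} to get Lipschitz and smoothness constants for $\cL(\theta,\lambda)$ (these will carry factors of $K$, the temperature $\tau$, the diameter of $\Lambda$, and the $\alpha_j$ weights), so that the abstract SGDA theorem applies.

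Next I would quantify the per-step errors. The dual gradient $g_\lambda^{(t)}$ uses only the post-processed histogram $\hat H^{(t)}$ (Eq.~\eqref{eq:gamma-post}), so its error decomposes into: the Laplace noise of scale $1/b$ on each of the $QK$ histogram entries; the subsampling/minibatch error from estimating population prediction rates by minibatch rates; and the ``ratio'' error from plugging noisy numerator and denominator into $\Gamma^{\mathrm{post}}_j$. Using that the denominator $\sum_{i\in I}\sum_{k_2}\hat H^{(t)}_{i,k_2}$ concentrates around $r|D_{\cap I}| \gtrsim rn$ (with $n = \min_q |D_q|$), a standard perturbation bound for a ratio gives $\tau_\lambda = \tilde O\!\big(\frac{\sqrt{\log(JKn/\rho)}}{rn}\cdot\frac{1}{b} + \text{sampling terms}\big)$, and taking a union bound over the $T$ steps and $J$ constraints controls it uniformly. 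For the primal error, the per-sample gradient in Eq.~\eqref{eq:per-sample-gradient-general} is run without clipping (as the theorem stipulates), so $g_\theta^{(t)}$ is the minibatch average plus Gaussian noise $Z^{(t)}\sim\cN(0,\sigma^2\mathbb I_d)$; hence $\tau_\theta$ splits into a Gaussian term of order $\frac{\sigma\sqrt{d\log(1/\rho)}}{r|D|}$, a minibatch-variance term of order $1/\sqrt{r|D|}$, the normalization bias from $|B_{\cap I}|$ versus its expectation (again controlled by denominator concentration, scale $\tilde O(1/(rn))$), and the error from using $\hat H$ in place of $H$ inside the gradient (same ratio-perturbation analysis). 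Plugging the privacy-calibrated scales $b \asymp \frac{r\sqrt{T\log(T/\delta)}}{\epsilon}$ and $\sigma \asymp \frac{C\sqrt T \log(T/\delta)}{|D|\epsilon}$ from Theorem~\ref{thm:raco-privacy} (without clipping, $C$ becomes the Lipschitz bound on the per-sample gradient), one gets $\tau_\theta = \tilde O\!\big(\frac{\sqrt{dT}\log(n/\delta)}{n\epsilon} + \cdots\big)$ and $\tau_\lambda = \tilde O\!\big(\frac{\sqrt{KT}\log(n/\delta)}{n\epsilon} + \cdots\big)$, where I have absorbed lower-order sampling terms and used $|D| \ge n$.

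The final step is the optimization over $T$. Substituting these $\tau_\theta,\tau_\lambda$ into $\alpha = O(\tfrac{\sqrt{1+\tau_\theta}}{T^{1/4}} + \tau_\theta + \sqrt{\tau_\lambda} + \tfrac1{\sqrt T})$, the dominant tension is between the $T^{-1/4}$ term (decreasing in $T$) and the $\tau_\theta, \sqrt{\tau_\lambda}$ terms (increasing like $\sqrt T$ and $T^{1/4}$ respectively). Balancing $T^{-1/4}$ against $\tau_\theta \asymp \frac{\sqrt{dT}\log(n/\delta)}{n\epsilon}$ yields $T \asymp \big(\frac{n\epsilon}{\sqrt{d}\log(n/\delta)}\big)^{2/3}$ and an optimized primal contribution of order $\big(\frac{\sqrt{d\log(JKn/\rho)}\log(n/\delta)}{n\epsilon}\big)^{1/3}$, while the $\sqrt{\tau_\lambda}$ term at this $T$ contributes order $\frac{K^{1/4}\sqrt{\log(n/\delta)}\log^{1/4}(JKn/\rho)}{(n\epsilon)^{1/4}}$ — exactly the two terms in the stated $\alpha$. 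I would then check the resulting $T$ is consistent with the privacy accounting and that the high-probability union bounds hold with the claimed $\log(JKn/\rho)$ dependence, and finally convert the stationarity guarantee on $\Phi$ into the $(\alpha,\alpha)$ form of Definition~\ref{def:stationary-point} by the Moreau-envelope/implicit-smoothing argument used in the generic SGDA proof.

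\textbf{Main obstacle.} The hardest part is (ii), and within it the ratio-perturbation analysis: both the dual gradient and the primal per-sample gradient divide by the noisy empirical count $\sum_{i\in I}\sum_{k_2}\hat H^{(t)}_{i,k_2}$, so one must show this denominator stays bounded away from zero with high probability uniformly over all $T$ steps, all $J$ constraints, and all subsets $I\in\lpar_j$, and then propagate that through to a clean bound on $\tau_\theta$ and $\tau_\lambda$ without losing more than logarithmic factors — the reason $n = \min_q |D_q|$ (rather than $|D|$) appears in the rate. A secondary subtlety is correctly tracking how the normalization bias interacts with the SGDA analysis, since it is precisely the $\tau_\theta$ term that the generic theorem must be robust to; this is why the paper emphasizes a ``novel analysis of SGDA that accounts for bias.''
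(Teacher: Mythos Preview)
Your proposal is essentially correct and follows the paper's approach: combine the abstract SGDA convergence theorem (Appendix~\ref{app:sgda-convergence}) with the per-step gradient-error bounds of Lemma~\ref{lem:grad-error}, plug in the privacy-calibrated noise scales, and optimize over $T$. Your identification of the ratio-perturbation/denominator-concentration step as the crux is also exactly what the paper handles via Lemmas~\ref{lemma:ratio_bound} and~\ref{lem:normalizer-lower-bound}.

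Two small corrections in your optimization over $T$. First, balancing $T^{-1/4}$ against $\tau_\theta \asymp \sqrt{dT}/(n\epsilon)$ gives $T^{3/4} \asymp n\epsilon/\sqrt d$, i.e.\ $T \asymp (n\epsilon/\sqrt d)^{4/3}$, not $(n\epsilon/\sqrt d)^{2/3}$; your stated final primal contribution of order $(\sqrt d/(n\epsilon))^{1/3}$ is nonetheless correct. Second, the two displayed terms in the theorem do \emph{not} both fall out of a single choice of $T$: the paper sets $T = \min\{(n\epsilon/\sqrt d)^{4/3},\, n\epsilon/K\}$, with the first choice controlling the primal term and the second controlling $\sqrt{\tau_\lambda}$ (since $\tau_\lambda \asymp K\sqrt T/(n\epsilon)$ from Lemma~\ref{lem:grad-error}, balancing $T^{-1/4}$ against $\sqrt{\tau_\lambda}$ gives $T \asymp n\epsilon/K$ and the $K^{1/4}/(n\epsilon)^{1/4}$ term). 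Taking the minimum works because all noise-driven errors are increasing in $T$ while the $T^{-1/4}$ term is decreasing, so whichever regime is active, both bounds hold. With that tweak your argument goes through.
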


There are several key steps involved in achieving this result. 
First, we provide a general convergence proof for SGDA  under the assumption that gradients have bounded error (Theorem \ref{thm:sgda-convergence}). Notably, in contrast to previous analyses, this result 1) allows for biased gradients; 2) depends on the $\ell_\infty$ error in the dual gradient estimate (rather than the $\ell_2$ error); and 3) achieves faster convergence (in terms of $T$) by leveraging the linear structure of the dual. To point (3), our analysis shows SGDA in this setting can converge as fast as $\frac{1}{T^{1/4}}$ instead of $\frac{1}{T^{1/6}}$ \citep[shown by][]{lin2020gradient}, essentially matching the rate observed in comparable minimization (rather than min-max) settings. See Theorem \ref{thm:sgda-convergence} for this specific result.
The next step in proving Theorem \ref{thm:private-raco-rate} is to control the error of the gradient estimates while balancing the noise necessary for privacy. We show that this error scales proportional to $\tilde{O}\big(\frac{\sqrt{d T}}{n\epsilon} + \frac{1}{\sqrt{n}}\big)$ in the primal and $\tilde{O}\big(\frac{\sqrt{KT}\log J}{n\epsilon} + \sqrt{\frac{\log J}{n\epsilon}}\big)$ in the dual. We defer the reader to Lemma~\ref{lem:grad-error} in Appendix~\ref{app:convergence} for a more detailed accounting of this error.

\begin{figure*}[t]
  \centering
  
  \begin{subfigure}[b]{0.32\linewidth}
    \centering
    \includegraphics[width=\linewidth]{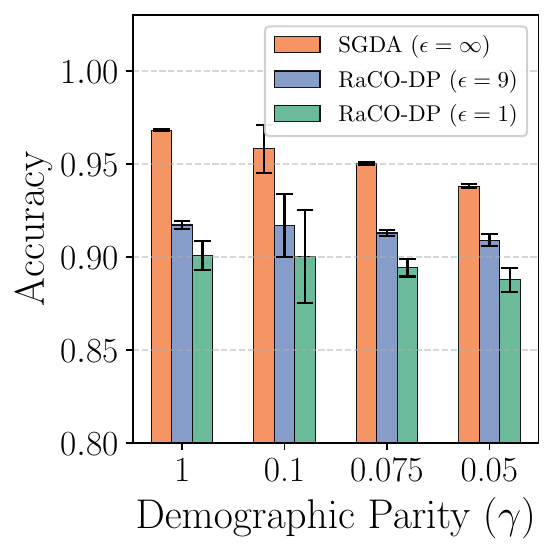}
    \caption{\texttt{CelebA}}
    \label{fig:celeba}
  \end{subfigure}
  \hfill
  \begin{subfigure}[b]{0.32\linewidth}
    \centering
    \includegraphics[width=\linewidth]{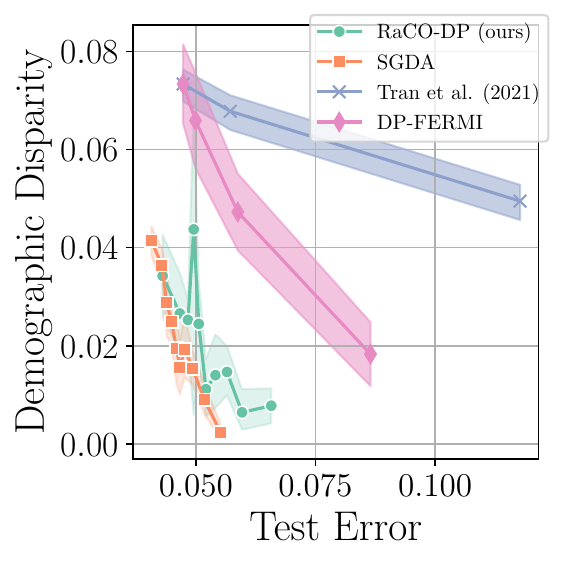}
    \caption{\texttt{Parkinsons}}
    \label{fig:parkinsons}
  \end{subfigure}
  \hfill
  \begin{subfigure}[b]{0.32\linewidth}
    \centering
    \includegraphics[width=\linewidth]{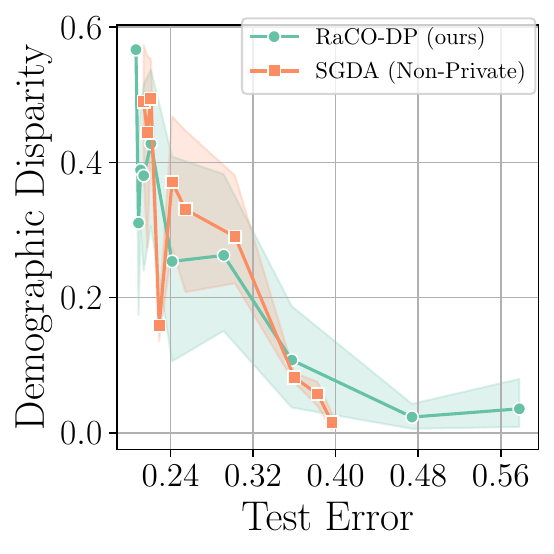}
    \caption{\texttt{Folktables}}
    \label{fig:folktables}
  \end{subfigure}
  
  \caption{Fairness–utility trade-off for \method over three benchmarks under demographic parity constraints training: (a) ResNet16 (5 runs), (b) and (c) logistic regression with $\varepsilon=1$ (20 runs)}
  \label{fig:overall_tradeoff}
\end{figure*}

\section{Experiments} 
\label{sec:experimental-results}
\looseness=-1 We empirically demonstrate that \method: (i) achieves SOTA performance on standard tabular benchmarks; (ii) scales for deep neural networks, maintaining high utility under strong privacy guarantees; (iii) scales to a large number of constraints, and (iv) is computationally efficient.

\textbf{Setup.} We evaluate \method on three constraint types: \textit{demographic parity} (on \texttt{Adult}~\citep{adult}, \texttt{Credit-Card}~\citep{credit_card}, \texttt{Parkinsons}~\cite{parkinsons}, \texttt{\folkstables}~\citep{folkstables}, \texttt{CelebA}~\citep{liu2015faceattributes}), \textit{false negative rate} (on \texttt{Adult}, \texttt{Heart}~\cite{alexteboul2022heart}), and \textit{equalized odds} (on \texttt{Credit-Card}). We benchmark against DP-FERMI~\citep{lowy2023stochastic}, \citet{tran2021differentially}, and \citet{jagielski2019differentially}, reporting their results from~\citet{lowy2023stochastic} where available. In settings without prior DP work (e.g., FNR constraints, deep networks on \texttt{CelebA}), we use non-private SGDA as a reference. All DP methods use $\delta=10^{-5}$, with privacy loss tracked via the numerical accountant of \citet{doroshenko2022connect}. Full implementation and hyperparameter details are in~\Cref{appendix:hparam_tuning}. For constraint definitions, see~\Cref{sec:case-study-fairness}.

\looseness=-1 \textbf{Tabular data.} On standard tabular benchmarks, \Cref{fig:parkinsons} (together with \Cref{fig:additional-trade-off-demparity}, \Cref{fig:performance-constraints}, and \Cref{fig:equalized_odds} in \Cref{sec:additional-tradeoffs}) shows that across multiple datasets (\texttt{Adult}, \texttt{Credit-Card}, \texttt{Parkinsons}) and fairness constraints (demographic parity and equalized odds), \method trains logistic regression models that consistently achieve higher accuracy at any fixed fairness disparity compared to all baselines under the same privacy budgets. Additional ablations for small $\varepsilon$ values are provided in~\Cref{sec:exp_on_low_epsilons}.

\looseness=-1 \textbf{Neural networks.} To demonstrate that \method is applicable to large, non-convex models, we train a ResNet16 \citep{he2016deep} (6.4M parameters) on \texttt{CelebA} under demographic parity constraints.
As shown in \Cref{fig:celeba}, \method achieves 90\% accuracy with only a 10\% demographic disparity gap for $\varepsilon=1$.
This result is close to the non-private model's 95\% accuracy, showing that \method is a practical tool for implementing fairness constraints in deep learning pipelines without a prohibitive loss in utility. Hyperparameter and training details are in~\Cref{sec:deep_experiment}.

\looseness=-1 \textbf{Scaling.} To evaluate performance under a large constraint set, we train a logistic regression on \texttt{\folkstables}, enforcing demographic parity across 18 groups simultaneously. \Cref{fig:folktables} shows our method achieves a competitive utility–fairness trade-off even under strong privacy ($\varepsilon=1$), confirming its effectiveness.

\looseness=-1 \textbf{Matching non-private rates.} We observe that \method nearly matches non-private accuracy sometimes, especially for logistic regression on tasks such as \texttt{Adult}, \texttt{Credit-Card}, and \texttt{Parkinsons} (see ~\Cref{sec:additional-tradeoffs}). In these cases, the sample size is large relative to the model’s dimensionality, and convergence is fast (often within a few dozen updates). This allows training with large batches that reduce noise and still benefit from privacy amplification by subsampling. It also keeps the noise multiplier $\sigma$ (and Laplace parameter $b$) small, since the privacy budget is spread across only a handful of steps. These favorable conditions (large datasets, low-dimensional models, and rapid convergence) explain why the privacy gap narrows, consistent with prior observations~\citep{dp_sgd_matching}. Once we employ higher-capacity models like ResNet16 on \texttt{CelebA} (see~\Cref{fig:celeba}), the gap widens.

\looseness=-1 \textbf{Satisfiability.} The results in~\Cref{fig:demographic_parity_adult_satisfiability} demonstrate that \method consistently achieves the pre-specified constraint values $\gamma$ as measured by the \emph{hard} rate constraints. This marks a significant improvement over existing approaches, which typically rely on indirect hyperparameter tuning to influence constraint satisfaction. Our direct Lagrangian formulation offers practitioners a simpler and more reliable method for constraint optimization. Additionally, these results show that tempered-sigmoid temperature $\tau=1$ is sufficient to enforce hard constraints in practice.

\begin{wrapfigure}{r}{0.33\textwidth} %
    \vspace{-5mm}
    \centering
    \includegraphics[width=\linewidth]{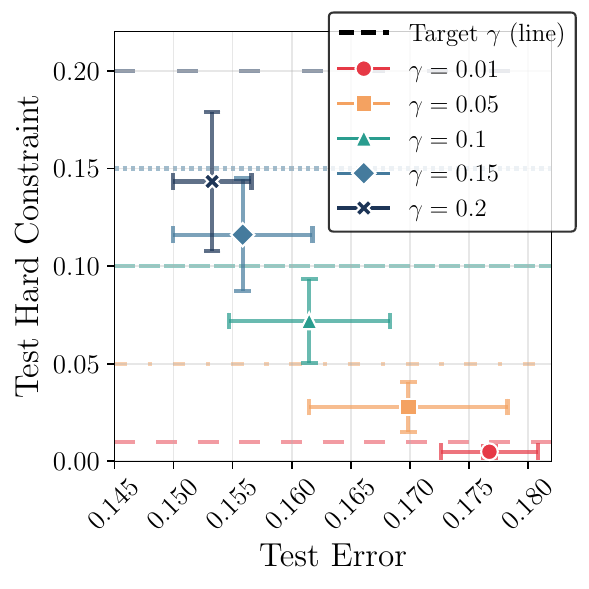}
    \vspace{-7mm}
    \caption{Constraint satisfaction on \texttt{Adult}.}
    \label{fig:demographic_parity_adult_satisfiability}
    \vspace{-5mm}
\end{wrapfigure}
\looseness=-1 \textbf{Computational efficiency.} We benchmark the average time per step for SGD, DP-SGD, \method, and DP-FERMI via their public code~\citep{gupta_stochastic-differentially-private-and-fair-learning_2023}. On identical hardware, \method trains \textit{three orders of magnitude} faster than DP-FERMI on \texttt{Adult} (\Cref{sec:compute performance}).

\looseness=-1 \textbf{Limitations. } \citet{koloskova2023revisiting} shows that gradient clipping biases SGD, which in our setting can push convergence outside the feasible set, making the clipping norm $C$ a critical hyperparameter. To illustrate this general issue (not specific to \method), we impose a strict $\text{FNR}=0$ constraint in logistic regression on \texttt{Adult} without DP noise ($\sigma=0$, $b=\infty$). As shown in \Cref{fig:clipping_impact} of~\Cref{appendix:additional_experiments}, norms below 12.5 prevent \method from meeting the constraint, though weaker constraints allow smaller norms. Another limitation is the use of soft constraints for the dual update. Hard constraints can be applied in practice, although this departs from the theoretical guarantees of \method; as shown in~\Cref{sec:hard_vs_soft}, this modification offers limited utility benefits.

\section{Related Work}
To the best of our knowledge, private learning under general rate constraints has not been explored in prior work, except in the specific case of group fairness constraints. Accordingly, we review related work at the intersection of differential privacy and fairness, a key application area where early research has identified fundamental trade-offs between these two objectives~\citep{cummings2019compatibility}.

\looseness=-1 Existing work can be grouped into three main categories. A first line of research~\citep{bagdasaryan2019differential, farrand2020neither, suriyakumar2021chasing, tran2021differentially, kulynych2022what, esipova2023disparate,tran2023fairness,Mangold2023b} examines how privacy mechanisms can inadvertently harm fairness. For instance,~\citet{esipova2023disparate} characterizes the disparate impact of DP-SGD due to gradient misalignment caused by clipping. The second line of work focuses on protecting the privacy of sensitive attributes used to enforce fairness constraints~\citep{jagielski2019differentially, mozannar2020fair, tran2023sf-pate}. 
Unlike standard DP, which protects the entire dataset, privacy for sensitive attributes requires injecting less noise, leading to improved model performance. However, this approach has significant limitations. Individuals can still be re-identified through their non-sensitive attributes, and if sensitive and non-sensitive features are correlated, an adversary may still be able to infer sensitive attributes. Due to these vulnerabilities, such works are not directly comparable to ours.

\looseness=-1 The third line of work, closest to ours, seeks to jointly enforce DP and group fairness~\citep{berrada2023unlocking, lowy2023stochastic}. \citet{berrada2023unlocking} use DP-SGD without fairness mitigation, finding well-generalized models show no major privacy-fairness trade-off.
However, their notion of fairness is error disparity, arguably measuring subpopulation generalization. Our work shows that while mitigation is needed for demographic parity, appropriate algorithm design can surmount the fairness-privacy trade-off.
\citet{lowy2023stochastic} propose a proxy objective for stochastic optimization of group fairness measures. In contrast, \method supports a broad range of rate constraints that can be freely combined, without needing a task-specific objective. 
While its theoretical convergence rate is slower, \method offers greater generality, complicating direct comparisons.
Notably,~\citet{lowy2023stochastic} presents results for both sensitive attribute DP and standard DP (\Cref{def:differential-privacy}), but their public code and evaluations focus on the weaker notion. Despite a stronger privacy guarantee, \method achieves superior privacy-fairness~trade-offs. We note that in contrast to \cite{lowy2023stochastic}, which seeks to satisfy the fairness objective by running an optimization procedure on a proxy objective, our optimization algorithm more directly targets the rate constraint of interest. 

\looseness=-1 Fairness aside, SGDA is well-studied for minimax optimization \citep{Nemirovski2009, heusel2017gans, lin2020gradient}, with \citet{yang2022differentially} proposing its first DP analogue. Private minmax optimization has been heavily studied recently \citep{BG22, ZTOH22, BGM23, BGM24, pmlr-v247-gonzalez24a}, though work on non-convex losses is limited to \citep{lowy2023stochastic}.

\section{Conclusion}
We introduced a DP algorithm using private histograms for training rate-constrained models. \method demonstrates strong performance across various datasets and constraint types, often nearing non-private baselines while meeting privacy and constraint criteria. Our findings suggest that privacy-fairness trade-offs may be less significant than previously believed. Future work could explore private learning under individual fairness constraints, which cannot be formulated as rate constraints.

\section{Acknowledgments}
We would like to acknowledge our sponsors, who support our research with financial and in-kind contributions:
Apple, CIFAR through the Canada CIFAR AI Chair, Meta, Microsoft, NSERC through the Discovery Grant and an Alliance Grant
with ServiceNow and DRDC, the Ontario Early Researcher Award, the Schmidt Sciences foundation through the AI2050
Early Career Fellow program. Resources used in preparing this research were provided, in part,
by the Province of Ontario, the Government of Canada through CIFAR, and companies sponsoring the Vector Institute.
Michael Menart is also supported by  the Natural Sciences and Engineering Research Council of Canada (NSERC), grant RGPIN-2021-03206.
The work of Tudor Cebere and Aurélien Bellet is supported by grant ANR-20-CE23-0015 (Project PRIDE) and the ANR 22-PECY-0002 IPOP (Interdisciplinary Project on Privacy) project of the Cybersecurity PEPR. Tudor Cebere is also sponsored by a Google Fellowship. This work was performed using HPC resources from GENCI–IDRIS (Grant 2023-AD011014018R1).

\bibliography{bibliography}
\bibliographystyle{iclr2026_conference}

\newpage
\appendix

\section{Application to Other Rate Constraints}
\label{sec:case-study-fairness}
\subsection{Fairness Constraints}

Fairness in machine learning aims to prevent models from making biased decisions based on sensitive attributes. We aim to train a classifier under fairness constraints by formulating a constrained optimization problem. We consider two popular group fairness~\cite{barocas2023fairness} metrics: demographic parity and equality of odds. All group fairness measures can be formulated as rate-constraints~\cite{cotter2019optimization}, for individual fairness~\citep{dwork2012fairness} it is easier to bound the per-sample contribution and privatize it with clipping and noising in the style of DP-SGD, thus a rate-constrained solution is not required, hence we focus on group fairness metrics.

\begin{definition}[Demographic Parity]
A classifier $h(\theta; \cdot)$ satisfies demographic parity with respect to  sensitive attribute $Z\in\cZ=\bc{1,...,|\cZ|}$ if the probability of predicting any class $k$ is independent of $Z$:
\[
\Pr[\hat{Y} = k \mid Z = z] = \Pr[\hat{Y} = k], \quad \forall z \in \cZ, \forall k \in \cY,
\]
where $ \hat{Y} = h(\theta; x) $ is the predicted label.
\end{definition}

In practice, we do not have access to the true probabilities, so it is common to estimate them by empirical prediction rates $P_k$.
Using a slack parameter $\gamma$, this gives:
\begin{equation}
    P_k(D[Z=z]; \theta) - P_k(D[Z \neq z]; \theta) \leq \gamma   \quad \forall z \in \cZ, \forall k \in \cY    
\end{equation}
Demographic parity thus leads to $J=|\cZ|\cdot|\cY|$ rate constraints of the form specified in Eqn. \eqref{eq:rate-soft-general}. The global partition is of size $Q=|\cZ|$ with elements $D_z = D[Z=z]$, $\forall z\in\cZ$, and for the constraint corresponding to elements $z\in\cZ$ and $y\in \cY$, we have %
$\lpar = \bc{\bc{z},[|\cZ|]\setminus z}$. The associated vector $\alpha$ has $\alpha_{\bc{z},y} = 1$ and $\alpha_{\bc{[|\cZ|]\setminus z},y}=-1$, with the rest of the components set to $0$. 

\begin{definition}[Equality of Odds]
A classifier $h(\theta;\cdot)$ satisfies equality of odds if the probability of predicting any class $k$ is conditionally independent of the sensitive attribute $Z$ given the ground truth:
\begin{equation}
    \Pr[\hat{Y} = k \mid Y=k', Z = z] = \Pr[Y=k', \hat{Y} = k, Z=z'],  \quad \forall z', z \in \cZ, \forall k,k' \in \cY
\end{equation}
\end{definition}

We note that the original notion of equalized odds is for binary sensitive attribute. For non-binary sensitive attributes, we can extend equalized odds to equalize rates between all subpopulations (as above), or we can consider a counter-factual definition of equalized odds: 
\begin{equation}
\Pr[\hat{Y} = k \mid Y=k', Z = z] = \Pr[Y=k', \hat{Y} = k, Z \neq z],  \quad \forall, z \in \cZ, \forall k,k' \in \cY
\end{equation}
In the above formulation,  we seek to achieve equal odds for each subpopulation compared to other subpopulations combined (\eg white vs. non-white, etc.). It is clear that in the binary sensitive attribute, the definitions are the same. Our framework can handle either variant by changing the adjusting the local partitioning (see~\Cref{sec:generalized}) but we adopt the counter-factual definition.

We observe that the only difference between the equality of odds and demographic parity is the additional conditioning on the ground truth, which we will reflect as the additional predicate $Y=k'$ in our base rates to define the following constraint:
\begin{equation}
P_k(D[Y = k', Z=z]; \theta) - P_k(D[Y=k', Z=z']; \theta)  \leq \gamma  \quad \forall z \in \cZ, \forall k,k' \in \cY
\label{eq:const-equality-of-odds}
\end{equation}
Equality of odds leads to $J=|\cY|^2 \times |\cZ|$ number of constraints.
With regards to implementing Eqn. \eqref{eq:rate-soft-general}, we can use a global partition with $|\cY| \times |\cZ|$ where each element is the subset of $D$ with some fixed ground truth label $k$ and class $z$. The constraint for some $k\in[K]$ and $z\in \cZ$ then has $\cI$ which specifies the local partition $\bc{D[Y=k,Z=z], D[Y=k,Z\neq z]}$ with the corresponding vector $\alpha$ having a $+1$ coefficient corresponding to a prediction rate of $D[Y=k',Z=z]$.

Many other group fairness constraints exist but they are all reducible to base rate constraints in a similar manner.  Note the similarity between Equations~\eqref{eq:const-dem-parity} and~\eqref{eq:const-equality-of-odds}, where the only difference is the additional conditioning on ground truth labels $Y$ in equality of odds.

\subsection{False Negative Rate}

\label{sec:case-study-performance}
\begin{definition}(False Negative Rate (FNR))
A classifier's false negative rate (FNR) measures how often it incorrectly predicts negative for samples that are actually positive. More formally, a classifier satisfies a false negative rate constraint if
\begin{equation}
  P_k(D[Y \neq k]; \theta) \leq \gamma \quad \text{ for } k \in[|\mathcal Y|]  
\end{equation}
\end{definition}
Assuming the constraint is well-defined, FNR leads $J = \left| \cY \right|$ rate constraints of the form in Eqn. \eqref{eq:rate-soft-general}, with the global partition of size $Q = \left|Y\right|$ with elements $D_y = D[Y =y], y \in \cY$. For the constraint corresponding to a fixed $y \in \cY$, we have $\cI = \{\mathcal{Y} /  y\}$ with an associated $\alpha_{\{\cY / y\}, y} = 1$.

\begin{table*}[t]
    \centering
    \resizebox{\textwidth}{!}{
        \begin{tabular}{ccc}
        \toprule
            \textbf{Objective} & \textbf{Formula} & \textbf{Number of Constraints} \\
        \midrule
        Demographic Parity & $P_{k}(D[Z=z]; \theta) - P_k(D[Z \neq z]; \theta) \leq \gamma$   &\makecell{$\forall k \in\mathcal Y$ (predicted), \\$\forall z \in \mathcal Z$ (sens.)}\\[3mm]
            Equality of Odds & $P_{k}(D[Y = k', Z=z]; \theta) - P_{k}(D[Y=k', Z\neq z]; \theta) \leq \gamma  $ & \makecell{$\forall k, k' \in\mathcal Y$ (predicted \\ and g.t.) $\forall z \in \mathcal Z$ (sens. attr.)}\\[3mm]
    
            False Negative Rate    & $P_k(D[Y \neq k]; \theta) \leq \gamma$ & $\forall k \in\mathcal Y$ (predicted) \\
            
        \bottomrule \\
        \end{tabular}
    }
    \caption{\textbf{Rate Constraints.} Given a dataset $D$, $C_k(D)$ is the prediction counts for class $k$, and $P_k(D) = C_k(D)/|D|$ is the prediction rate. $D[\texttt{pred}]$ indicates the subset of $D$ where predicate \texttt{pred} is true, \eg, $D[Y=y, Z=z]$ is the subset of $D$ with sensitive attribute (sen. attr.) $Z=z$ and ground truth (g.t.) labels $Y=y$.}
    \label{tab:constraints}
\end{table*}

\onecolumn

\section{Privacy Analysis} \label{app:raco-privacy}
\begin{theorem} %
Let 
$\sigma \geq 10\max\bc{\frac{C \log(T/\delta)}{r|D|\epsilon}, \frac{C \sqrt{T}\log(T/\delta)}{|D|\epsilon}}$ and $b \geq 2\max\bc{\frac{1}{\epsilon},
\frac{r \sqrt{T\log(T/\delta)}}{\epsilon}}$, then Algorithm \ref{alg:FairDP-SGD} is $(\epsilon,\delta)$-DP.
\end{theorem}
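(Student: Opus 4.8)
The plan is to decompose Algorithm~\ref{alg:FairDP-SGD} into its two noise-injection primitives---the Laplace mechanism producing $\hat H^{(t)}$ and the Gaussian mechanism producing $g_\theta^{(t)}$---observe that the primal and dual updates beyond these are post-processing, and then account for the privacy of each primitive over $T$ Poisson-subsampled rounds. First I would argue per-step sensitivities. For the histogram: since the global partition is disjoint, a single datapoint $x$ contributes to exactly one row $D_q$, and its contribution $(\sigma(h(\theta^{(t)};x))_k)_{k\in[K]}$ has $\ell_1$-norm exactly $1$; hence the $\ell_1$-sensitivity of $H^{(t)}$ is $1$, and adding $\operatorname{Lap}(1/b)$ coordinatewise gives $b$-DP per step for the histogram release (before subsampling). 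For the primal gradient: the per-sample quantity $g_{x,\theta}^{(t)}$ is clipped to norm $C/(r|D|)$ in Line~\ref{line:clipping}, so in the "add/remove one sample" neighboring relation the $\ell_2$-sensitivity of $\sum_{x\in B^{(t)}}\operatorname{clip}(g_{x,\theta}^{(t)}, C/(r|D|))$ is $C/(r|D|)$, and adding $Z^{(t)}\sim\mathcal N(0,\sigma^2 \mathbb I_d)$ yields a Gaussian mechanism with noise multiplier $\sigma r|D|/C$ per step. Crucially, $\hat H^{(t)}$ being differentially private means the subsequent dual update (Line~\ref{line:dual-update}, via $\Gamma^{\mathrm{post}}_j$) and the clipping operation in the primal (which uses $\hat H^{(t)}$ through $\hat R$) are both post-processing and incur no extra cost; the only subtlety is that the Gaussian mechanism's input $g_{x,\theta}^{(t)}$ depends on $\hat H^{(t)}$, so one must condition on $\hat H^{(t)}$ and treat the two releases as an adaptive composition.

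Next I would invoke privacy amplification by subsampling. With Poisson sampling at rate $r$, standard amplification bounds (e.g.\ \cite{balle2018privacy,kasiviswanathan2011can}) convert the per-step Laplace guarantee into roughly an $O(r/b)$-type bound and the per-step Gaussian guarantee into an improved noise multiplier; to keep the statement clean I would work with a simple "strong composition"-style bound rather than Rényi/RDP accounting (the text explicitly says the experiments use a tighter accountant, so this theorem only needs loose but correct constants). Concretely, I would use the advanced composition theorem \cite{kairouz2015composition}: composing $T$ copies of an $\epsilon_0$-DP mechanism yields $(\sqrt{2T\log(1/\delta')}\,\epsilon_0 + T\epsilon_0(e^{\epsilon_0}-1),\, \delta')$-DP. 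Setting $\epsilon_0 = O(r/b)$ for the histogram chain and $\epsilon_0 = O(C\log(1/\delta')/(\sigma r|D|))$ for the Gaussian chain (using the subsampled-Gaussian per-step bound), and choosing $\delta' = \delta/(\text{const}\cdot T)$ so that the accumulated $\delta$'s sum to $\delta$, I would verify that the stated lower bounds $b \ge 2\max\{1/\epsilon,\, r\sqrt{T\log(T/\delta)}/\epsilon\}$ and $\sigma \ge 10\max\{C\log(T/\delta)/(r|D|\epsilon),\, C\sqrt T\log(T/\delta)/(|D|\epsilon)\}$ make each chain $(\epsilon/2,\delta/2)$-DP. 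The two $\max$ branches correspond exactly to the two regimes of advanced composition: the linear-in-$T$ term (dominant when $r\sqrt T$ is small, giving the $1/\epsilon$ and $C\log(T/\delta)/(r|D|\epsilon)$ branches) versus the $\sqrt{T\log(1/\delta)}$ term (giving the $r\sqrt{T\log(T/\delta)}/\epsilon$ and $C\sqrt T\log(T/\delta)/(|D|\epsilon)$ branches). Finally, basic composition of the two $(\epsilon/2,\delta/2)$-DP chains gives the $(\epsilon,\delta)$-DP claim.

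I would organize the write-up as: (1) a lemma fixing the per-step sensitivities and the post-processing observation; (2) a lemma bounding the per-step subsampled privacy loss of each of the two mechanisms; (3) the composition argument with the explicit parameter substitutions verifying the two $\max$ branches; (4) combining the two halves. The main obstacle---really the only place care is needed---is getting the subsampled-Gaussian per-step bound in a form compatible with advanced composition and then checking that the particular constants ($10$, $2$) and the $\log(T/\delta)$ (rather than $\log(1/\delta)$) absorb all the slack from (a) the union bound over $\delta'$ across $T$ steps, (b) the $e^{\epsilon_0}-1 \approx \epsilon_0$ linearization in advanced composition, and (c) the amplification constant. Since the paper only wants this result "to provide intuition about parameter scaling," I would not optimize constants; I would pick the loosest clean bounds that still close, flag that a Rényi-DP accountant tightens both constants and log-factors, and defer that to the experimental section as the paper already does.
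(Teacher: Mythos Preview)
Your ingredients are correct---the sensitivity bounds, the post-processing observation, the use of subsampling amplification and advanced composition---and your reading of the two $\max$ branches as the linear-in-$T$ versus $\sqrt{T\log(1/\delta)}$ regimes of advanced composition is exactly right. But the decomposition into two independently amplified ``chains'' has a gap. In each round $t$, the histogram release $\hat H^{(t)}$ and the gradient release $g_\theta^{(t)}$ are computed from the \emph{same} Poisson sample $B^{(t)}$. Subsampling amplification requires the sampling randomness to be internal to the mechanism being amplified; once you release $\hat H^{(t)}$ and condition on it (as adaptive composition requires, and as you yourself note), the conditional law of $B^{(t)}$ is no longer Poisson, so you cannot claim a fresh amplification factor for the Gaussian release on that same $B^{(t)}$. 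Put differently, in the adaptive-composition framework the second mechanism may depend on the first mechanism's \emph{output} but must use independent internal randomness; here the Gaussian step reuses the Laplace step's sampling randomness, so your ``Gaussian chain'' is not a sequence of subsampled Gaussian mechanisms once the Laplace outputs are fixed.

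The paper's proof handles this by reversing the order: it first composes the Laplace and Gaussian releases \emph{within} a single round to obtain an $(\epsilon_1,\delta_1)$ guarantee with respect to the minibatch $B^{(t)}$, then applies Poisson amplification \emph{once} to this combined per-round mechanism to get a guarantee with respect to $D$, and only then applies advanced composition over the $T$ rounds. After this reordering, the parameter verification you outlined---splitting $\epsilon$ between the Gaussian and Laplace parts and matching the two branches of each $\max$---goes through essentially as you wrote it, with the same constants.
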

\begin{proof}
The $\ell_2$-sensitivity of $\big( \textstyle \sum_{x \in B^{(t)}} \operatorname{clip}(g_{x,\theta}^{(t)}, \frac{C}{r|D|}) \big)$ is clearly at most $\frac{C}{r|D|}$. Thus the standard guarantees of the Gaussian mechanism ensures $(\frac{1}{2} \epsilon_1,\frac{1}{2}\delta_1)$-DP w.r.t. the minibatch, where $\epsilon_1 \leq \min\{1,\frac{1}{r \sqrt{8 T\log(1/\delta)}}\}$ and $\delta_1 = \frac{\delta}{2T}$. Similarly, because $\bc{D_1,...,D_Q}$ is a partition, the $\ell_1$-sensitivity of the histogram is at most $1$, and so the guarantees of the Laplace mechanism ensure $(\frac{1}{2}\epsilon_1,0)$-DP w.r.t. to the minibatch. By composition, the combined mechanism is $(\epsilon_1,\delta_1)$-DP w.r.t. the minibatch. Since this mechanism acts a Poisson subsampled portion of the dataset and $\epsilon' \leq 1$, the privacy w.r.t. the overall dataset is $(\epsilon_2,\frac{1}{2}\delta_2)$ with $\epsilon_2 = r \epsilon_1 \leq \frac{1}{\sqrt{8 T\log(1/\delta)}}$ and $\delta_2 \leq \frac{r\delta}{2T}$. Now applying advanced composition, the overall privacy of Algorithm $1$ over $T$ rounds is $(\epsilon_3,\delta_3)$-DP with  $\epsilon_3 \leq \sqrt{8 T\log(1/\delta)\epsilon_2} \leq \epsilon$ and $\delta_3 \leq (T+1)\delta_2 \leq \delta$.
\end{proof}

\section{Technical Lemmas}
\label{appendix:error-analysis}

\begin{lemma}
\label{lemma:ratio_bound}
Let $X$ and $Y$ be sums of $k_X$ and $k_Y$ zero-centered Laplace random variables with scale parameter $b$, respectively, and let $\mu_X, \mu_Y > 0$, $\frac{\mu_x}{\mu_y} \leq 1$, $k_X \leq k_Y$. For any $\rho \in (0,1)$, if $\mu_Y \ge 4k_Yb\ln\left(\frac{1}{2\rho}\right)$  then it holds that,
\begin{equation}
P\left[ \left|\frac{\mu_X + X}{\mu_Y + Y} - \frac{\mu_X}{\mu_Y}\right| < \frac{4k_Yb}{\mu_y}\ln\left(\frac{8}{\rho}\right) \right] \geq 1 - \rho.
\end{equation}
\begin{proof}
We have,
\begin{align*}
& P\left[\left| \dfrac{\mu_X + X}{\mu_Y + Y} - \dfrac{\mu_X}{\mu_Y} \right| < \epsilon \right] \\
& \geq P\left[ \left| \dfrac{X}{\mu_Y + Y} \right| + \left| \dfrac{\mu_XY}{\mu_Y(\mu_Y + Y)} \right| < \epsilon, \mu_Y + Y > \dfrac{\mu_Y}{2} \right] && \text{(triangle inequality)} \\
& \geq P\left[ \left| \dfrac{2X}{\mu_Y} \right| + \left| \dfrac{2\mu_XY}{\mu_Y^2} \right| < \epsilon, \mu_Y + Y > \dfrac{\mu_Y}{2} \right] && \text{(conditioning on $\mu_Y + Y > \dfrac{\mu_Y}{2}$ )} \\
& \geq P\left[ \left| \dfrac{2X}{\mu_Y} \right| \leq \frac{\epsilon}{2}, \left| \dfrac{2Y}{\mu_Y} \right| < \frac{\epsilon}{2}, \mu_Y + Y > \dfrac{\mu_Y}{2} \right] && \text{(using $\mu_x \leq \mu_y$ )} \\
& \geq 1 -  P\left[ \left|X \right| \geq \frac{\mu_Y\epsilon}{4}\right]- P\left[ \left| Y \right| \geq \dfrac{\mu_Y\epsilon}{4}\right] - P\left[Y \leq - \frac{\mu_Y}{2}\right] && (\text{Negation \& Union Bound}) \\
& \geq 1 - 2\exp(-\frac{\mu_Y\epsilon}{4k_Yb}) - 2\exp(-\frac{\mu_Y\epsilon}{4k_Xb})  -  \frac{1}{2}\exp(-\frac{\mu_Y}{k_Yb}) && \text{(concentration for Laplace R.Vs \& Laplace CDF)}\\
& \geq 1 - 2\exp(-\frac{\mu_Y\epsilon}{4k_Yb}) - 2\exp(-\frac{\mu_Y\epsilon}{4k_Xb})  -  \frac{1}{2}\exp(-\frac{\mu_Y}{k_Yb}) && \text{($k_Y \geq k_X$)}\\
& \geq  1 - 4\exp(-\frac{\mu_Y\epsilon}{4k_Yb}) - \frac{1}{2}\exp(-\frac{\mu_Y}{k_Yb}) \\
& \geq 1 - 4\exp(-\frac{\mu_Y\epsilon}{4k_Yb}) - \frac{\rho}{2}
\end{align*}
the last inequality uses that $\mu_Y \geq 4k_Y b \ln\left({\frac{1}{2\rho}}\right)$. Now setting $\epsilon = \frac{4k_Yb}{\mu_y}\ln\left(\frac{8}{\rho}\right)$ yeilds,
\begin{equation}
P\left[\left|\frac{\mu_X + X}{\mu_Y + Y} - \frac{\mu_X}{\mu_Y}\right| < \frac{4k_Yb\ln(8/\rho)}{\mu_y} \right] \geq 1 - \rho.
\end{equation}
\end{proof}
\end{lemma}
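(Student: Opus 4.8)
The plan is to reduce the bound on the random ratio to elementary tail bounds on the scalar sums $X$ and $Y$, after first isolating the denominator. The starting point is the exact identity
\[
\frac{\mu_X+X}{\mu_Y+Y}-\frac{\mu_X}{\mu_Y}=\frac{\mu_Y X-\mu_X Y}{\mu_Y(\mu_Y+Y)}=\frac{X}{\mu_Y+Y}-\frac{\mu_X\,Y}{\mu_Y(\mu_Y+Y)},
\]
so that, by the triangle inequality, it suffices to control $\bigl|\tfrac{X}{\mu_Y+Y}\bigr|$ and $\bigl|\tfrac{\mu_X Y}{\mu_Y(\mu_Y+Y)}\bigr|$. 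The only quantity in these expressions that is not linear in the noise is $\tfrac{1}{\mu_Y+Y}$, which becomes harmless as soon as the denominator has not shrunk by more than a factor of two.

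Concretely, I would introduce the event $\mathcal{G}=\{\,Y>-\mu_Y/2\,\}$, on which $\mu_Y+Y>\mu_Y/2$ and hence $\tfrac{1}{\mu_Y+Y}<\tfrac{2}{\mu_Y}$. On $\mathcal{G}$ the first term is at most $\tfrac{2|X|}{\mu_Y}$ and the second is at most $\tfrac{2\mu_X|Y|}{\mu_Y^{2}}\le\tfrac{2|Y|}{\mu_Y}$, where the last step uses the hypothesis $\mu_X/\mu_Y\le 1$. Hence on $\mathcal{G}$ the deviation is at most $\tfrac{2(|X|+|Y|)}{\mu_Y}$, and to prove the claim it is enough to show that, with probability at least $1-\rho$, we are on $\mathcal{G}$ and simultaneously $|X|\le\tfrac{\mu_Y\epsilon}{4}$ and $|Y|\le\tfrac{\mu_Y\epsilon}{4}$, for the target value $\epsilon=\tfrac{4k_Yb}{\mu_Y}\ln(8/\rho)$; this choice makes both thresholds equal to $k_Yb\ln(8/\rho)$, which is where the $\ln(8/\rho)$ in the statement comes from.

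What remains is purely concentration for sums of independent Laplace$(b)$ variables. I would apply a standard sub-exponential tail bound to $X$ (a sum of $k_X$ such variables) and to $Y$ (a sum of $k_Y$ of them) to bound $P[|X|\ge k_Yb\ln(8/\rho)]$ and $P[|Y|\ge k_Yb\ln(8/\rho)]$ — using $k_X\le k_Y$, so that whatever (monotone in the number of summands) tail bound one quotes makes the $X$-tail no larger than the $Y$-tail — together with a one-sided version (or the Laplace CDF) to bound $P[\mathcal{G}^{c}]=P[Y\le-\mu_Y/2]$. Here the hypothesis $\mu_Y\ge 4k_Yb\ln(1/(2\rho))$ is exactly what makes this last probability small, since it places $\mu_Y/2$ several "noise units" $k_Yb$ above zero. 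A union bound over these (at most) three failure events then yields the $1-\rho$ guarantee.

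I expect the only genuine work to be in the bookkeeping of this last step: choosing the constants in the Laplace tail inequalities so that each failure probability is a clean fraction of $\rho$ and the three add up to exactly $\rho$, and verifying that one is in the "linear" tail regime (which one is, since $\ln(8/\rho)>\ln 8>1$), so the relevant exponent scales like $t/(kb)$ rather than $t^{2}/(kb^{2})$. Everything else — the algebraic split, the conditioning on $\mathcal{G}$, and the substitution of $\epsilon$ — is routine.
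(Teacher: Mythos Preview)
Your proposal is correct and follows essentially the same approach as the paper: the same algebraic split of the ratio, the same conditioning on $\{\mu_Y+Y>\mu_Y/2\}$, the same use of $\mu_X\le\mu_Y$ and $k_X\le k_Y$, and the same union bound over the three failure events with Laplace tail bounds, culminating in the identical choice $\epsilon=\tfrac{4k_Yb}{\mu_Y}\ln(8/\rho)$. The only remaining work is exactly the constant bookkeeping you flagged.
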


\begin{lemma}[Error of sampled rates]
    \label{lemma:error_sampled_rates}
    Let $p = \frac{\sum_{x_i \in X} x_i}{\left| X \right|}$ with $x_i \in [0, 1]$ and $p_r = \frac{\sum_{x_i \in X_q} x_i}{r \left| X \right|}$ where $X_r$ is obtained by performing Poisson sampling on $X$ with probability $r$. If $\left|X\right|r \geq \log(1/\rho)$ then (up to an order):
    \begin{equation}
        P\left[ \left| p - p_r \right| \leq \sqrt{\frac{\log(1/\rho)}{r|X|}} \right] \geq 1 - \rho
    \end{equation}
\end{lemma}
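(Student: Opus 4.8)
\textbf{Proof proposal for Lemma~\ref{lemma:error_sampled_rates}.}

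The plan is to reduce the claim to a Bernstein/Chernoff-type concentration bound for the sum $S := \sum_{x_i \in X_r} x_i$, which is a sum of independent random variables $x_i \cdot \mathbb{1}_{[x_i \in X_r]}$ indexed by $x_i \in X$, where each indicator is an independent $\mathrm{Bernoulli}(r)$. First I would note that $\mathbb{E}[S] = r \sum_{x_i \in X} x_i = r|X| \cdot p$, so that $\mathbb{E}[p_r] = \mathbb{E}[S]/(r|X|) = p$; thus $p_r$ is an unbiased estimator of $p$ and it suffices to control $|S - \mathbb{E}[S]|$. The summands $Y_i := x_i \mathbb{1}_{[x_i\in X_r]}$ lie in $[0,1]$ since $x_i \in [0,1]$, and $\mathrm{Var}(Y_i) = x_i^2 \, r(1-r) \le x_i^2 r \le x_i r$, so $\sum_i \mathrm{Var}(Y_i) \le r\sum_i x_i = r|X| p \le r|X|$.

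Next I would apply Bernstein's inequality: for any $u > 0$,
\begin{equation*}
P\left[ |S - \mathbb{E}[S]| \ge u \right] \le 2\exp\!\left(-\frac{u^2/2}{r|X| + u/3}\right).
\end{equation*}
Choosing $u = c\sqrt{r|X|\log(1/\rho)}$ for a suitable absolute constant $c$, and using the hypothesis $r|X| \ge \log(1/\rho)$ (so that the linear term $u/3$ in the denominator is dominated by $r|X|$, up to constants), the exponent is $-\Omega(\log(1/\rho))$, giving $P[|S - \mathbb{E}[S]| \ge u] \le \rho$. Dividing through by $r|X|$ converts this to
\begin{equation*}
P\left[ |p_r - p| \ge \frac{c\sqrt{r|X|\log(1/\rho)}}{r|X|} = c\sqrt{\frac{\log(1/\rho)}{r|X|}} \right] \le \rho,
\end{equation*}
which is exactly the claimed bound (the ``up to an order'' caveat absorbs the constant $c$). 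Alternatively, since the $x_i$ need not be $\{0,1\}$-valued one could invoke a Hoeffding bound with range $[0,1]$ directly — $P[|S-\mathbb{E}S| \ge u] \le 2\exp(-2u^2/|X|)$ — which would give the slightly weaker $\sqrt{\log(1/\rho)/(r^2|X|)}$ scaling; using the variance-aware Bernstein bound is what yields the stated $\sqrt{\log(1/\rho)/(r|X|)}$ rate, so I would go with Bernstein.

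The main obstacle, such as it is, is simply bookkeeping the constants so that the condition $r|X| \ge \log(1/\rho)$ is genuinely enough to kill the lower-order term $u/3$ in Bernstein's denominator and to guarantee the exponent is at least $\log(1/\rho)$; there is no conceptual difficulty, and the ``(up to an order)'' phrasing in the statement signals that the authors are content to suppress these absolute constants. One minor point worth stating carefully is that $X_r$ has random size, so one must work with the fixed index set $X$ and independent Bernoulli selectors rather than conditioning on $|X_r|$; this is the reason Poisson (rather than fixed-size) subsampling makes the argument clean.
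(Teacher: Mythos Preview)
Your proposal is correct and follows essentially the same approach as the paper: both arguments write $p_r - p$ as $(S-\mathbb{E}S)/(r|X|)$ with $S=\sum_i x_i\,\mathrm{Bern}(r)$, apply Bernstein's inequality to $S$, and use the assumption $r|X|\ge\log(1/\rho)$ to absorb the linear term in the denominator. The only cosmetic difference is the variance bound (you use $\sum_i\mathrm{Var}(Y_i)\le r|X|$, whereas the paper bounds it via $r(1-r)\le 1/4$), but both lead to the same $\sqrt{\log(1/\rho)/(r|X|)}$ rate up to constants.
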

\begin{proof}
We have,
\begin{align*}
    & P\left[ \left| p - p_r \right| \leq \epsilon  \right] \\
    & = P\left[ \left|  \frac{\sum_{X_i \in X} X_i}{\left| X \right|} - \frac{\sum_{X_i \in X} \operatorname{Bern}(r) X_i}{r \left| X \right|} \right| \leq \epsilon  \right] && (X_r \sim \operatorname{Poisson}(X, r)) \\
    & = P\left[ \left|  r\sum_{X_i \in X} X_i -  \sum_{X_i \in X} \operatorname{Bern}(r) X_i \right| \leq \epsilon r \left| X \right|  \right] & \\
    & \geq 1 - \exp \left( - \frac{\epsilon^2 r^2 \left|X\right|^2}{2\left( \operatorname{Var}(\sum_{X_i \in X} \operatorname{Bern}(r) X_i) + \epsilon r \left|X \right|/3 \right)}\right) && 
    (\text{Bernstein Ineq.}) \\
    & \geq 1 - \exp \left( - \frac{\epsilon^2 r^2 \left|X\right|}{2 \left(r(1 - r) + \epsilon r/3 \right) }\right) && (X_i \in [0,1]; \operatorname{Var}(\operatorname{Bern}(r)) = r(1-r))\\
    & \geq 1 -  \exp \left( - \frac{\epsilon^2 r \left| X \right|}{2\left( 1/4 + \epsilon/3 \right)}\right) && r (1 - r) \leq 1/4 \\
     \implies \epsilon & \geq 2\max\bc{\frac{\log(1/\rho)}{r|X|},\sqrt{\frac{\log(1/\rho)}{r|X|}}} &&  \\
     & \geq 2 \sqrt{\frac{\log(1/\rho)}{r|X|}} && \text{(using the assumption that $|X|r \geq \log(1/\rho))$}
\end{align*}

\end{proof}

\section{Missing Details from Section \ref{sec:convergence}} \label{app:convergence}
\subsection{Additional Background on Stationary Point Definition}\label{app:stationarity-background}
For a smooth function $f:\theta\mapsto \re$, a standard notion of (first order) stationarity would involve bounding the norm of the gradient. However, for non-smooth functions, this notion does not accurately capture convergence. For example, if $f(\theta)= \|\theta\|$, a point may be arbitrarily close to the minimum, but still have gradient norm $1$. To address this discrepancy, alternative notions of stationarity for non-smooth functions have been introduced, such as Definition \ref{def:stationary-point}. In the example where $f(\theta)=\|\theta\|$, this relaxation allows points which are \textit{close} to the cusp at $\theta = \mathbf{0}$, whereas a bound on the gradient norm would allow \textit{only} the point $\theta = \mathbf{0}$ for any non-trivial bound on the gradient. In fact, our convergence proof yields a slightly stronger notion of stationarity known as proximal near stationarity \cite{DD19}. We elect to present Definition \ref{def:stationary-point} as it requires less background information.

\subsection{Proof of Theorem \ref{thm:private-raco-rate}} \label{app:rate-proof}
In this section we will detail the proof of Theorem \ref{thm:private-raco-rate} and provide a more precise theorem statement. Before doing so, we will introduce some important notation.

For any $I\subseteq [Q]$, let $D_I = \underset{i \in I}{\cup}D_i$. Given a subset $B\subset D$, define $B_{\cap I} = B \cap  (\underset{i \in I}{\cup}D_i)$ and recall that $\lpar_j$ and $\alpha_j \in \re^{|\lpar_j|\times K}$ denote the corresponding family of susbets of $[Q]$ and weight vector associated with constraint $\Gamma_j$. Let $\minsize = \min_{q\in[Q]}\bc{|D_q|}$.

Let $\|\Lambda\|_1$ be the $\ell_1$ diameter of $\Lambda$. Let $\lip_\ell$ and $\lip_h$ be the $\ell_2$ Lipschitz constants w.r.t. $\theta$ of $h$ and $\ell$ respectively. Similarly, let $\smooth_\ell$ and $\smooth_h$ be the corresponding $\ell_2$-smoothness constants. We recall the temperature parameter of the softmax is denoted as $\tau$.
Let $c_1 = \max_{j\in[J]}{\|\alpha_j\|}$. Note that many rate constraint only compare two prediction rates, and so $c_1$ is typically at most $2$. 
Define $\mormaxfunc(\theta) = \min_{\theta'}\bc{\Phi(\theta') + \smooth\|\theta-\theta'\|^2}$ and $\mornot = \mormaxfunc(\theta_0) - \min_\theta \bc{\mormaxfunc(\theta)}$ and $\lossnot = \cL(\theta_0,\lambda_0) - \min_{\lambda,\theta}\bc{\cL(\lambda,\theta)}$ (see Section \ref{app:sgda-convergence} for more details on these quantities).
We can now present the more complete version of convergence result.

\begin{theorem} \label{thm:formal-convergence}
Assume  $n \geq  \frac{1}{r} \max\{\ln\left(\frac{2c_1J T}{\rho}\right), 8\frac{Kb}{r}\ln\left({\frac{2T}{\rho}} \right), 8\log(J|D|TK/\rho\}$.
Then, under appropriate choices of parameters, Algorithm~\ref{alg:FairDP-SGD} run without clipping is $(\epsilon,\delta)$-DP and 
with probability at least $1-\rho$ there exists $t\in [T]$ s.t. $\theta_t$ is an $(\alpha,\alpha/[2\beta])$-stationary point of $\Phi$ with 
{\small
\begin{align*}
    \alpha 
    = O\Bigg(\br{\br{\mornot\smooth\lip^2}^{1/4} + \sqrt{\smooth\lossnot} + \lip}\bigg(\Big(&\frac{\sqrt{d}\log(n/\delta)\sqrt{\log(JK\minsize/\rho)}}{n\epsilon}\Big)^{1/3} \\
    &+ \frac{K^{1/4}\sqrt{ \beta\|\Lambda\|_1}\sqrt{\log(n/\delta)}(\log(JK\minsize/\rho))^{1/4}}{(n\epsilon)^{1/4}}\bigg)\Bigg),
\end{align*}}
where
$\lip = \lip_\ell + c \tau \lip_h \rad$ and  $\beta=\smooth_\ell + 2 c\tau \cdot \max\bc{\lip_h\sqrt{J}, \|\Lambda\|_1(2\lip_h + \tau \smooth_h)}$
\end{theorem}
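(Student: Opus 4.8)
The plan is to combine three pieces: (i) the general SGDA convergence bound (Theorem~\ref{thm:sgda-convergence}) that tolerates biased gradients and depends on the $\ell_2$ error $\tau_\theta$ in the primal gradient and the $\ell_\infty$ error $\tau_\lambda$ in the dual gradient, (ii) a high-probability bound on these gradient errors $\tau_\theta, \tau_\lambda$ coming from the privacy noise and from the minibatch subsampling (Lemma~\ref{lem:grad-error}, built on Lemmas~\ref{lemma:ratio_bound} and~\ref{lemma:error_sampled_rates}), and (iii) the Lipschitz/smoothness constants of $\cL$ expressed through those of $h$ and $\ell$ (Appendix~\ref{app:regularity-of-lagrangian}). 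First I would instantiate Theorem~\ref{thm:sgda-convergence} with the constants $\lip$ and $\beta$ stated in the theorem, which gives an expression of the form $\alpha = O\big((\mornot\smooth\lip^2)^{1/4} T^{-1/4} + \lip T^{-1/2} + \sqrt{\smooth\lossnot}\, T^{-1/2} + (\text{terms}) \cdot \tau_\theta + (\text{terms})\cdot \sqrt{\tau_\lambda}\big)$, modulo the precise bookkeeping of how $\tau_\theta$ and $\tau_\lambda$ enter.

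Next I would bound the gradient errors. For the \emph{primal} error: the per-sample gradient in Eq.~\eqref{eq:per-sample-gradient-general} differs from the true primal gradient of $\cL$ for two reasons — the Gaussian noise $Z^{(t)}$ injected for privacy, and the use of the \emph{private} histogram $\hat H^{(t)}$ (rather than $H^{(t)}$) inside the ratios $\alpha_{j,I,k_1}/\sum_{i\in I}\sum_{k_2} H^{(t)}_{i,k_2}$, plus the subsampling bias from using a minibatch. The Gaussian part contributes $\tilde O(\sigma\sqrt{d})/(r|D|)$; plugging the privacy-calibrated $\sigma$ from Theorem~\ref{thm:raco-privacy} (or the informal statement) gives the $\sqrt{dT}/(n\epsilon)$ scaling. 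The ratio-perturbation part is controlled by Lemma~\ref{lemma:ratio_bound}, which is why we need the lower bound $n \geq \frac{1}{r}\cdot 8\frac{Kb}{r}\ln(2T/\rho)$ in the hypothesis (so that the denominators $\mu_Y \approx r|D_i|$ dominate the Laplace noise), and the subsampling bias by Lemma~\ref{lemma:error_sampled_rates}, giving the $1/\sqrt{n}$ term; a union bound over the $T$ steps, the $J$ constraints, the $K$ classes and the $Q$ parts produces the $\log(JK\minsize/\rho)$ factors. For the \emph{dual} error: the dual gradient is pure post-processing of $\hat H^{(t)}$ via $\Gamma^{\mathrm{post}}_j$, so its error is again governed by Lemma~\ref{lemma:ratio_bound} applied to each of the $|\lpar_j|$ ratios (noise scale $b$, calibrated via Theorem~\ref{thm:raco-privacy} as $b \asymp r\sqrt{T\log(T/\delta)}/\epsilon$) plus subsampling bias from Lemma~\ref{lemma:error_sampled_rates}; taking the max over constraints and using an $\ell_\infty$ bound yields $\tau_\lambda = \tilde O\big(\frac{\sqrt{KT}\log J}{n\epsilon} + \sqrt{\frac{\log J}{n\epsilon}}\big)$, where the $\sqrt K$ comes from bounding a sum of $K$ Laplace variables in the numerator and the $\log J$ / $\log(JK\minsize/\rho)$ from the union bound.

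Finally I would optimize over $T$. Substituting $\tau_\theta \asymp \frac{\sqrt{dT}\,\mathrm{polylog}}{n\epsilon} + \frac{1}{\sqrt n}$ and $\tau_\lambda \asymp \frac{\sqrt{KT}\,\mathrm{polylog}}{n\epsilon} + \sqrt{\frac{\log J}{n\epsilon}}$ into the SGDA bound, the dominant $T$-dependent terms are $T^{-1/4}$ (decreasing) against $\sqrt{T}/(n\epsilon)$ from $\tau_\theta$ and $T^{1/4}/(n\epsilon)^{1/2}$ from $\sqrt{\tau_\lambda}$ (increasing); balancing $T^{-1/4}$ with the primal noise term gives $T \asymp (n\epsilon/\sqrt d)^{4/3}$ up to logs, which turns the primal contribution into $\big(\sqrt d\,\mathrm{polylog}/(n\epsilon)\big)^{1/3}$, and the residual dual contribution (after substituting this $T$, or after separately balancing the $\sqrt{\tau_\lambda}$ term) into the $K^{1/4}(n\epsilon)^{-1/4}\,\mathrm{polylog}$ term; the $n$-independent $\tau_\theta$ piece $1/\sqrt n$ and $\sqrt{\tau_\lambda}$ piece contribute lower-order terms absorbed into the stated rate. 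Multiplying through by the prefactor $(\mornot\smooth\lip^2)^{1/4} + \sqrt{\smooth\lossnot} + \lip$ recovers the claimed bound, and the $(\alpha,\alpha/[2\beta])$ form of stationarity follows from the proximal-stationarity output of Theorem~\ref{thm:sgda-convergence} with $\smooth \asymp \beta$.

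\textbf{Main obstacle.} I expect the crux to be the primal gradient error analysis: unlike the dual gradient, the primal per-sample gradient has the private histogram appearing inside a \emph{denominator that multiplies a gradient term} $\nabla_\theta[\sigma(h(\theta;x))_{k_1}]$, so one must control not just the ratio perturbation (Lemma~\ref{lemma:ratio_bound}) but its interaction with clipping/Lipschitz bounds on the per-sample gradients, and simultaneously track the subsampling bias $|B_{\cap I}|$ vs.\ its expectation — these are coupled because the same private $\hat H^{(t)}$ is reused for normalization. Getting the $\tau_\theta$ dependence to enter the final rate only as $\tau_\theta + \sqrt{1+\tau_\theta}\,T^{-1/4}$ (rather than something worse) is what makes the $T^{-1/4}$ rate — and hence the $1/3$ exponent — survive, so the careful part is ensuring the bias terms are genuinely $O(\tau_\theta)$ additively in the SGDA bound rather than multiplying the slowly-vanishing term.
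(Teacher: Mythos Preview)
Your proposal is correct and follows essentially the same approach as the paper: combine the biased-SGDA convergence bound (Theorem~\ref{thm:sgda-convergence}), the gradient-error bounds from Lemma~\ref{lem:grad-error} (built on Lemmas~\ref{lemma:ratio_bound} and~\ref{lemma:error_sampled_rates}), and the regularity constants of Appendix~\ref{app:regularity-of-lagrangian}, then choose $T=\min\{(n\epsilon/\sqrt d)^{4/3},\,n\epsilon/K\}$ with the privacy-calibrated $\sigma,b$ to balance the $T^{-1/4}$ term against the noise terms. The only small slip is that the privacy calibration for the no-clipping version comes from the Lipschitz variant (Theorem~\ref{thm:lipschitz-raco-dp}) rather than Theorem~\ref{thm:raco-privacy}, but the resulting scaling of $\sigma$ and $b$ is the same.
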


Proving this statement will involve several major steps.
First, in Section \ref{app:lipschitz-raco-privacy} we derive the necessary noise levels needed to ensure that Algorithm \ref{alg:FairDP-SGD} is private.
Second, in Section \ref{app:gradient-error} we bound the error in the gradients at each time step. Next, in Section \ref{app:sgda-convergence} we give a general convergence rate for SGDA under the condition that the gradients have bounded error. Finally, we derive the overall Lipschitz and smoothness constants of $\cL$ based on the base smoothness and Lipschitz constants in Section \ref{app:lagrangian regularity}. These results are then combined in Section \ref{sec:proof} to obtain the final result.

In one final remark, we note the following fact will be used in several places.
\begin{lemma} \label{lem:normalizer-lower-bound}
Let $n \geq 4\log(J |D|/\rho )$ and $t\in[T]$. With probability at least $1-\rho$ it holds for every $j\in[J]$ and $I\in\lpar_j$ that $|B^{(t)}_{\cap I}| \geq \frac{1}{2}r |I| n$.
\end{lemma}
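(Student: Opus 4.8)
\textbf{Proof plan for Lemma~\ref{lem:normalizer-lower-bound}.}
The plan is to treat $|B^{(t)}_{\cap I}|$ as a sum of independent Bernoulli indicators and apply a multiplicative Chernoff bound, then union-bound over all the relevant index sets $I$. Concretely, fix $j\in[J]$ and $I\in\lpar_j$. Under Poisson sampling (Line~\ref{line:poisson} of Algorithm~\ref{alg:FairDP-SGD}), each point $x\in D_I=\cup_{i\in I}D_i$ is included in $B^{(t)}$ independently with probability $r$, so $|B^{(t)}_{\cap I}| = \sum_{x\in D_I}\mathbf{1}_{[x\in B^{(t)}]}$ is a sum of $|D_I|$ i.i.d.\ $\mathrm{Bernoulli}(r)$ variables with mean $\mu_I := r|D_I|$. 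Since $\{D_i\}$ are disjoint and each $|D_i|\geq n$, we have $|D_I|\geq |I| n$ and hence $\mu_I \geq r|I|n \geq rn$.

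First I would invoke the standard lower-tail Chernoff bound: for a sum $S$ of independent $[0,1]$ (here Bernoulli) variables with mean $\mu$,
\[
\Pr\!\left[S \leq \tfrac{1}{2}\mu\right] \leq \exp\!\left(-\tfrac{\mu}{8}\right).
\]
Applying this with $S = |B^{(t)}_{\cap I}|$ and $\mu=\mu_I$, and using $\mu_I\geq r|I|n$ together with monotonicity of the bound (and $\tfrac12\mu_I \geq \tfrac12 r|I|n$), gives
\[
\Pr\!\left[|B^{(t)}_{\cap I}| \leq \tfrac{1}{2}r|I|n\right] \;\leq\; \Pr\!\left[|B^{(t)}_{\cap I}| \leq \tfrac12\mu_I\right] \;\leq\; \exp\!\left(-\tfrac{\mu_I}{8}\right) \;\leq\; \exp\!\left(-\tfrac{rn}{8}\right).
\]

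Next I would union-bound over all pairs $(j,I)$ with $j\in[J]$, $I\in\lpar_j$. A crude but sufficient count is that the total number of such subsets is at most $J|D|$ (each $\lpar_j\subseteq 2^{[Q]}$, and in all constructions of interest $\sum_j|\lpar_j|$ is small; bounding it by $J|D|$ is safe since $Q\leq|D|$ and the constraints of interest use few subsets each — if one wants to be fully rigorous without any structural assumption one can instead bound $|\lpar_j|\le 2^Q$, but $J|D|$ matches the hypothesis $n\geq 4\log(J|D|/\rho)$). Thus the failure probability is at most $J|D|\exp(-rn/8)$, and the hypothesis $n\geq \tfrac{1}{r}\cdot 4\log(J|D|/\rho)$ — equivalently $rn\geq 4\log(J|D|/\rho)$, so $\exp(-rn/8)\le\exp(-\tfrac12\log(J|D|/\rho))=\sqrt{\rho/(J|D|)}$ — gives failure probability at most $J|D|\cdot\sqrt{\rho/(J|D|)} = \sqrt{J|D|\rho}$; tightening the constant in the hypothesis to $rn\ge 8\log(J|D|/\rho)$ (which is within the stated $\tfrac1r\max\{\dots\}$ in Theorem~\ref{thm:formal-convergence}) yields failure probability $\le\rho$ directly. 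On the complementary event, $|B^{(t)}_{\cap I}|\geq \tfrac12 r|I|n$ holds simultaneously for every $j$ and $I$, which is the claim.

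The step I expect to require the most care is not the Chernoff bound itself — that is routine — but rather being precise about the \emph{cardinality} of the union (how many subsets $I$ appear across all constraints) so that it is genuinely controlled by the $\log(J|D|)$ term in the hypothesis, and making sure the constant $\tfrac18$ in the Chernoff exponent lines up with the constant hidden in the $\tfrac1r\max\{\cdots\}$ assumption of Theorem~\ref{thm:formal-convergence}. If one prefers a single clean statement, one can simply state the lemma with the hypothesis $rn\geq 8\log(J|D|/\rho)$ and absorb the slack into the $O(\cdot)$ of the main theorem.
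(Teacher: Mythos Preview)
Your approach is essentially the same as the paper's: a concentration bound for the Bernoulli sum $|B^{(t)}_{\cap I}|$ followed by a union bound over all $(j,I)$, with the only cosmetic difference that the paper routes through its Bernstein-type Lemma~\ref{lemma:error_sampled_rates} while you invoke a multiplicative Chernoff bound directly. Your observations about the constant (needing $rn\gtrsim 8\log(J|D|/\rho)$ rather than $4$) and the loose $J|D|$ count on the number of subsets are accurate and match the level of rigor in the paper's own proof, which implicitly relies on the stronger $\tfrac{1}{r}$-scaled hypothesis from Theorem~\ref{thm:formal-convergence}.
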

\begin{proof}
By Lemma \ref{lemma:error_sampled_rates} we have for any $j\in [J]$ and $I \in \lpar_j$,
\begin{align*}
    P\bs{r |D_I| - |B^{(t)}_{\cap I}| \geq \sqrt{r|D_I|\log(1/\gamma)}} \leq  \gamma.
\end{align*}
Thus since $|D_I| \geq n \geq \log(J |D|/\rho)$, it holds with probability at least $1-\rho$, for every $j\in[J]$ and $I\in\lpar_j$ that
$|B_{\cap I}| \geq r |D_I| - \sqrt{r|D_I|\log(1/\rho)} \geq 0.5 r |D_I| \geq 0.5 r |I| n$. 
\end{proof}

\subsection{Proof of Theorem \ref{thm:formal-convergence}} \label{sec:proof}
With the previously established results, we can now verify a setting of parameters which proves the theorem statement. Specifically, we set,
\begin{align*}
    T = \min\bc{\br{\frac{n\epsilon}{\sqrt{d}}}^{4/3}, \frac{n\epsilon}{K}}, && \sigma = \frac{\lip\sqrt{T}\log(T/\delta)}{n\epsilon}, && b = \frac{r\sqrt{T\log(T/\delta)}}{\epsilon}.
\end{align*}
Note that by Theorem \ref{thm:lipschitz-raco-dp}, this ensures that Algorithm \ref{alg:FairDP-SGD} is $(\epsilon,\delta)$-DP so long as $r \geq \frac{1}{\sqrt{T}}$. 

Using Lemma \ref{lem:grad-error} can now instantiate Theorem \ref{thm:sgda-convergence}. For $\tau_\theta$ we have, %
\begin{align*}
    \tau_\lambda &= O\br{\frac{c_1 K b \log(J K \minsize/\rho)}{r \minsize} + c_1 \sqrt{\frac{\log\left(\frac{J K \minsize}{\rho}\right)}{r\minsize}}} \\
    &= O\br{\frac{c_1 K \sqrt{T} \log(T/\delta) \log(J K \minsize/\rho)}{\minsize\epsilon} + c_1 \sqrt{\frac{\log\left(\frac{J K \minsize}{\rho}\right)}{r\minsize}}}. \\
\end{align*}
Setting $\tau_\lambda$ as the quantity above, we can write the bound on $\tau_\theta$ as,
\begin{align*}
  \tau_\theta &= O\br{\sqrt{d}\sigma \sqrt{\log(4T/\rho)} + \frac{4\lip_\ell\sqrt{\log(4/\rho)}}{\sqrt{r  |D|}} + \frac{4\lip_\ell\sqrt{\log(4/\rho)}}{\sqrt{r  |D|}}  + |\Lambda\|_1 \tau \lip_h \tau_\theta} \\
  &= O\br{\lip\br{\frac{\sqrt{dT\log(T/\rho)}\log(T/\delta)}{n\epsilon} + \frac{\sqrt{\log(1/\rho)}}{\sqrt{r |D|}}} + \|\Lambda\|_1 \tau \lip_h \tau_\theta}.
\end{align*}
Now, in the non-trivial regime where $\tau_\theta \leq G$, Theorem \ref{thm:sgda-convergence} implies that for $r$ large enough,
\begin{align*}
    \alpha &= O\br{\frac{\br{\mornot\smooth(\lip^2+\thetagraderr^2)}^{1/4}}{T^{1/4}} 
    + \thetagraderr + \sqrt{\beta\rad\lambdagraderr} + \frac{\sqrt{\smooth\lossnot}}{\sqrt{T}} } \\
    &= O\Bigg(\br{\br{\mornot\smooth\lip^2}^{1/4} + \sqrt{\smooth\lossnot} + \lip}\bigg(\Big(\frac{\sqrt{d}\log(T/\delta)\sqrt{\log(JK\minsize/\rho)}}{n\epsilon}\Big)^{1/3} \\
    &\hspace{180pt}+ \frac{K^{1/4}\sqrt{\beta\|\Lambda\|_1}\sqrt{\log(T/\delta)}(\log(JK\minsize/\rho))^{1/4}}{(n\epsilon)^{1/4}}\bigg)\Bigg).
\end{align*}

\subsection{Privacy of Algorithm \ref{alg:FairDP-SGD} under Lipschitzness} \label{app:lipschitz-raco-privacy}
\begin{theorem} \label{thm:lipschitz-raco-dp}
Assume $h$ and $\ell$ are $\lip_\ell$ and $\lip_h$ Lipschitz. Then for some universal constant $c$ and $\sigma \geq c (c_1 \|\Lambda\|_1 \tau\lip_h + \lip_\ell)\max\bc{\frac{\log(T/\delta)}{r\minsize \epsilon}, \frac{C \sqrt{T}\log(T/\delta)}{\minsize \epsilon}}$ and $b \geq 2\max\bc{\frac{1}{\epsilon},
\frac{r \sqrt{T\log(T/\delta)}}{\epsilon}}$, then Algorithm \ref{alg:FairDP-SGD} is $(\epsilon,3\delta)$-DP.
\end{theorem}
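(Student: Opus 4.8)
The plan is to follow the composition argument behind Theorem~\ref{thm:raco-privacy} almost verbatim, replacing its single data-dependent ingredient — the per-iteration $\ell_2$-sensitivity of the primal gradient sum — by a bound derived from Lipschitzness rather than from the clipping norm $C$. As there, I would decompose each iteration $t$ into two mechanisms acting on the Poisson-subsampled minibatch $B^{(t)}$: the Laplace release of the histogram $\hat H^{(t)}$ (whose $\ell_1$-sensitivity is $1$ since $\{D_1,\dots,D_Q\}$ is a partition of $D$ and the softmax scores of any point sum to one), composed with the Gaussian release of $g_\theta^{(t)}$. The crucial observation is that once $\hat H^{(t)}$ is treated as part of the output, each per-sample gradient $g_{x,\theta}^{(t)}$ in~\eqref{eq:per-sample-gradient-general} is a function of $x$ and $\hat H^{(t)}$ only — not of which other points landed in the minibatch — so adding or removing one point changes $\sum_{x\in B^{(t)}} g_{x,\theta}^{(t)}$ by exactly one per-sample gradient, and the $\ell_2$-sensitivity conditioned on $\hat H^{(t)}$ equals $\max_x\|g_{x,\theta}^{(t)}\|$.

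The technical heart is therefore bounding $\|g_{x,\theta}^{(t)}\|$ in the absence of clipping. The loss contribution is at most $\lip_\ell/(r|D|)\le \lip_\ell/(rn)$ where $n=\min_{q\in[Q]}|D_q|$. For the regularizer contribution I would use $\sum_j|\lambda_j|\le\|\Lambda\|_1$, the bound $\|\nabla_\theta\sigma_\tau(h(\theta;x))_k\|=O(\tau\lip_h)$ obtained by composing the $O(\tau)$ operator-norm bound on the tempered-softmax Jacobian with the Lipschitz bound $\lip_h$ on $\nabla_\theta h$, the bound $\|\alpha_j\|\le c_1$ on the mixing weights, and — most importantly — a lower bound on every normalizer $\sum_{i\in I}\sum_k \hat H_{i,k}^{(t)}$ appearing in the per-sample regularizer. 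Since this normalizer equals $|B_{\cap I}^{(t)}|$ plus a sum of $O(|I|K)$ Laplace$(1/b)$ variables, I would combine Lemma~\ref{lem:normalizer-lower-bound} (which gives $|B_{\cap I}^{(t)}|\ge \tfrac12 r|I|n$ with high probability once $n\gtrsim\log(J|D|/\rho)$) with a sub-exponential tail bound showing the added Laplace mass is at most $\tfrac14 r|I|n$, which holds under the standing lower bound on $n$ from Theorem~\ref{thm:formal-convergence} given the prescribed scale of $b$. A union bound over the $T$ iterations and all pairs $(j,I)$ then yields a single event $\mathcal E$ of probability at least $1-2\delta$ on which $\max_x\|g_{x,\theta}^{(t)}\| = O\big((c_1\|\Lambda\|_1\tau\lip_h+\lip_\ell)/(rn)\big)$ for every $t$.

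On $\mathcal E$ the remainder is the bookkeeping of Theorem~\ref{thm:raco-privacy}: with per-step $\ell_2$-sensitivity $\Delta = O\big((c_1\|\Lambda\|_1\tau\lip_h+\lip_\ell)/(rn)\big)$, the stated $\sigma$ makes the Gaussian release $(\tfrac12\epsilon_1,\tfrac12\delta_1)$-DP and the stated $b$ makes the Laplace release $(\tfrac12\epsilon_1,0)$-DP with respect to the minibatch, with $\epsilon_1\le\min\{1,(r\sqrt{8T\log(1/\delta)})^{-1}\}$ and $\delta_1=\delta/(2T)$; basic composition gives $(\epsilon_1,\delta_1)$-DP per step on $B^{(t)}$, privacy amplification by Poisson subsampling (using $\epsilon_1\le1$) gives $(\epsilon_2,\tfrac12\delta_2)$-DP on $D$ with $\epsilon_2=r\epsilon_1$ and $\delta_2\le r\delta/(2T)$, and advanced composition over $T$ rounds yields $(\epsilon_3,\delta_3)$-DP with $\epsilon_3\le\sqrt{8T\log(1/\delta)}\,\epsilon_2\le\epsilon$ and $\delta_3\le(T+1)\delta_2\le\delta$. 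To discharge the conditioning on $\mathcal E$ — note the per-sample gradient is genuinely unbounded off $\mathcal E$, so one cannot invoke the Gaussian mechanism directly — I would compare the algorithm against the modification that floors each normalizer at $\tfrac14 rn$ (so that $\Delta$ holds deterministically and that modification is $(\epsilon,\delta)$-DP), observe that the two coincide on $\mathcal E$, and conclude by the standard dominating-event argument that the algorithm is $(\epsilon,\delta+2\delta)=(\epsilon,3\delta)$-DP.

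The main obstacle is precisely this interplay between an a priori unbounded, data-dependent sensitivity and the requirement of a clean $(\epsilon,O(\delta))$ guarantee: the normalizer lower bound must hold simultaneously across all $T$ steps and all $I\in\lpar_j$ with failure probability absorbed into $\delta$, the combinatorial factors ($|\lpar_j|$, $K$, $\|\alpha_j\|$) must be routed into $c_1$ and the $\|\Lambda\|_1,\tau,\lip_h$ dependence exactly as advertised in the statement, and one must verify that the standing assumptions on $n$ together with the prescribed scale of $b$ are exactly what make the Laplace perturbation negligible against $\tfrac14 rn$. Everything downstream of the sensitivity bound is a near-verbatim repetition of the composition argument in the proof of Theorem~\ref{thm:raco-privacy}.
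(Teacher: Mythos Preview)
Your proposal is correct and mirrors the paper's own proof: both invoke Lemma~\ref{lem:normalizer-lower-bound} together with Laplace concentration (under the standing assumptions of Theorem~\ref{thm:formal-convergence}) to obtain a probability-$\ge 1-2\delta$ event on which every normalizer is at least $\tfrac14 rn$, bound the per-sample gradient norm on that event by $O\big((c_1\|\Lambda\|_1\tau\lip_h+\lip_\ell)/(rn)\big)$, and then rerun the composition argument of Theorem~\ref{thm:raco-privacy} verbatim to get $(\epsilon,\delta)$-DP conditionally, absorbing the $2\delta$ failure probability into the final $(\epsilon,3\delta)$ guarantee. Your floored-normalizer coupling is a slightly more explicit way to discharge the conditioning than the paper's one-line ``since this event happens with probability at least $1-2\delta$,'' but the substance is identical.
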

\begin{proof}
First, by Lemma \ref{lem:normalizer-lower-bound} and the conditions of Theorem \ref{thm:formal-convergence}, probability at least $1-\delta$, for every $t \in [T]$, $j\in [J]$ and $I\in[I]$ it holds that $|B_{\cap I}| \geq 0.5 r|I|\cdot \minsize$. Consequently, the concentration of Laplace noise and the conditions of Theorem \ref{thm:formal-convergence} imply $\sum_{i\in I} \sum_{k \in [K]} \hat{H}^{(t)}_{i,k} \geq 0.25 r \minsize$ with probability at least $1-2\delta$. Conditional on this event, the $\ell_2$-sensitivity of 
$\big( \textstyle \sum_{x \in B^{(t)}} g_{x,\theta}^{(t)} \big)$ is at most $\frac{\lip_\ell}{r |D|} + \frac{c_1 \tau \lip_h}{0.25 r \minsize}$ since then,
\begin{align*}
    \|\nabla \hat{R}(\theta,\lambda; H, x)\| \leq \sum_{j=1}^J \sum_{I\in \lpar_j} \sum_{k\in K} \frac{\lambda_j \alpha_{j,I,k}}{0.25 r \minsize} \|\nabla_\theta[\sigma_\tau(h(\theta;x))_k]\| \leq \frac{c_1 \|\Lambda\| \tau \lip_h}{0.25 r \minsize}.
\end{align*}
Thus, the scale of Gaussian noise implies that the releasing the primal gradient is $(\frac{1}{2} \epsilon_1,\frac{1}{2}\delta_1)$-DP w.r.t. the minibatch, where $\epsilon_1 \leq \min\{1,\frac{1}{r \sqrt{T\log(1/\delta)}}\}$ and $\delta_1 = \frac{\delta}{T}$. From here one can follow the same steps as in the proof of Theorem \ref{thm:raco-privacy} to obtain an overall privacy of $\br{\epsilon,\delta}$-DP conditional on the previously mentioned event that each $B_{\cap I}$ is large. Since this event happens with probability at least $1-2\delta$, we obtain a final overall privacy guarantee of $(\epsilon,3\delta)$-DP.
\end{proof}

\subsection{Bounding Gradient Error} \label{app:gradient-error}
\begin{lemma} \label{lem:grad-error}
Let $\rho\in[0,1]$ and $t\in[T]$. 
Under the assumptions of Theorem \ref{thm:formal-convergence}, conditional on $\theta^{(t)},\lambda^{(t)}$, it holds with probability at least $1-2\rho$ that,
{\small
\begin{align*}
    \| g_{\theta}^{(t)} - \nabla_\theta \cL(\theta^{(t)},\lambda^{(t)}) \|_2 &\leq \sigma\sqrt{d\log(4/\rho)} +  \frac{4\lip_\ell\sqrt{\log(4/\rho)}}{\sqrt{r  |D|}} \\
    &~~~+ \|\Lambda\|_1 \tau\lip_h \Bigg(\frac{8 c_1 K b \log(64J K \minsize/\rho)}{r \minsize} + c_1 \sqrt{\frac{\log\left(\frac{8 J K \minsize}{\rho}\right)}{r\minsize}} \Bigg),\\ %
    \| g_{\lambda}^{(t)} - \nabla_{\lambda} \cL(\theta^{(t)},\lambda^{(t)})\|_\infty &\leq \frac{8 c_1 K b \log(64J K \minsize/\rho)}{r \minsize} + c_1 \sqrt{\frac{\log\left(\frac{8 J K \minsize}{\rho}\right)}{r\minsize}}.
\end{align*}}
\end{lemma}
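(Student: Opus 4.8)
The plan is to bound the primal and dual gradient estimation errors separately, treating the two main sources of error in each: the Gaussian/Laplace noise added for privacy, and the bias and sampling variance arising from the finite mini-batch and the private histogram normalizers $\sum_{i\in I}\sum_k \hat H^{(t)}_{i,k}$ appearing in $\hat R$ and $\Gamma^{\text{post}}_j$.

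First I would handle the \emph{dual} error, since it is the cleaner of the two and its bound feeds into the primal bound. By \eqref{eq:gamma-post}, $[g_\lambda^{(t)}]_j = \Gamma^{\text{post}}_j(\hat H^{(t)}) - \gamma_j$, and $\nabla_\lambda\cL(\theta^{(t)},\lambda^{(t)})_j = \Gamma_j(\theta^{(t)};B^{(t)}) - \gamma_j$ (the mini-batch Lagrangian gradient, which is what Theorem~\ref{thm:sgda-convergence} treats as the ``true'' gradient up to its own bias handling). The difference reduces to comparing, for each $I\in\lpar_j$ and $k_1$, the ratio $\frac{\sum_{i\in I}\hat H_{i,k_1}}{\sum_{i\in I}\sum_{k_2}\hat H_{i,k_2}}$ against $\frac{\sum_{i\in I} H_{i,k_1}}{\sum_{i\in I}\sum_{k_2} H_{i,k_2}} = P_{k_1}(B^{(t)}_{\cap I};\theta)$. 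This is exactly the situation of Lemma~\ref{lemma:ratio_bound}: numerator and denominator are the true (noiseless) counts perturbed by sums of $k_X = |I|$ and $k_Y = |I|K$ zero-centered $\mathrm{Lap}(1/b)$ variables. Applying that lemma requires a lower bound $\mu_Y = |B^{(t)}_{\cap I}| \geq 4|I|K b\ln(1/(2\rho'))$, which I would get from Lemma~\ref{lem:normalizer-lower-bound} ($|B^{(t)}_{\cap I}| \geq \tfrac12 r|I|n$) together with the sample-size assumption $n \geq \tfrac{8Kb}{r}\ln(2T/\rho)$ in Theorem~\ref{thm:formal-convergence}. Lemma~\ref{lemma:ratio_bound} then gives the ratio error $\lesssim \frac{|I|Kb\ln(8/\rho')}{|B^{(t)}_{\cap I}|} \lesssim \frac{Kb\ln(\cdot)}{rn}$ per term; summing $|\lpar_j|$ terms weighted by $\|\alpha_j\|_1 \leq$ (up to $\sqrt{|\lpar_j|K}$ times) $c_1$ produces the $\frac{c_1 K b\log(\cdot)}{rn}$ piece. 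The second piece, $c_1\sqrt{\log(\cdot)/(rn)}$, is the Poisson-sampling error between $P_{k_1}(B^{(t)}_{\cap I};\theta)$ and the population rate $P_{k_1}(D_I;\theta)$ — but actually, since Theorem~\ref{thm:sgda-convergence} is stated relative to the mini-batch gradient, I should be careful about whether the ``true'' gradient is the batch or population quantity; I'd align the bound with whichever convention Theorem~\ref{thm:sgda-convergence} uses, and the extra sampling term appears if the population rate is the target (via Lemma~\ref{lemma:error_sampled_rates} with $|X| = |D_I| \geq n$ and the assumption $n \geq \frac1r\log(\cdot)$). A union bound over all $j\in[J]$, $I\in\lpar_j$, $k_1\in[K]$ — at most $JQK \leq JK|D|$ events — rescales $\rho' \to \rho/(\text{poly})$, giving the stated $\ell_\infty$ bound after taking the max over $j$.

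Next, the \emph{primal} error decomposes by the triangle inequality into three contributions: (i) the Gaussian privacy noise $Z^{(t)}\sim\mathcal N(0,\sigma^2\mathbb I_d)$, whose $\ell_2$-norm concentrates at $\sigma\sqrt{d\log(4/\rho)}$ by a standard Gaussian norm tail bound; (ii) the Poisson-sampling error in the loss part $\frac1{r|D|}\sum_{x\in B^{(t)}}\nabla_\theta\ell(\theta^{(t)};x)$ versus $\frac1{|D|}\sum_{x\in D}\nabla_\theta\ell(\theta^{(t)};x)$ (or the batch gradient, per convention), each coordinate of which is a bounded random variable by $\lip_\ell$-Lipschitzness — Bernstein/Hoeffding over the sampled sum gives the $\frac{\lip_\ell\sqrt{\log(4/\rho)}}{\sqrt{r|D|}}$ term, with the factor $4$ being slack from the proof; and (iii) the error in the regularizer gradient $\nabla_\theta\hat R(\theta^{(t)},\lambda^{(t)};\hat H^{(t)},x)$ relative to the ideal per-sample regularizer gradient, which is again controlled by how far the histogram-based normalizers $\sum_{i\in I}\sum_{k_2}\hat H^{(t)}_{i,k_2}$ are from the true $|B^{(t)}_{\cap I}|$. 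Here I'd write $\nabla_\theta\sigma_\tau(h(\theta;x))_{k_1}$ has norm $\leq \tau\lip_h$ (up to constants from the softmax Jacobian), so each term contributes $\lambda_j|\alpha_{j,I,k_1}| \tau\lip_h$ times the normalizer ratio error, which is once more bounded via Lemma~\ref{lemma:ratio_bound} — this is literally the same quantity that appears in the dual bound, which is why the dual error expression reappears verbatim inside the primal bound, scaled by $\|\Lambda\|_1\tau\lip_h$ (the $\|\Lambda\|_1$ from $\sum_j\lambda_j$ and $c_1$ from $\|\alpha_j\|_1$). Summing over the per-sample gradients and noting they are already averaged (the $\frac1{r|D|}$ normalization and the structure of Line~\ref{line:clipping} without clipping) keeps this term at the same scale rather than blowing up by a batch-size factor.

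The main obstacle I anticipate is \textbf{correctly handling the coupled ratio $\frac{\mathbb 1[x\in B_{\cap I}]}{\sum_{i\in I}\sum_{k_2}\hat H_{i,k_2}}$ inside the per-sample primal gradient}: unlike the dual case, the numerator here is a per-sample indicator (not itself noisy), but the denominator is shared across \emph{all} samples in the batch and is a noisy, data-dependent quantity that also appears in the sensitivity analysis — so I must (a) establish a high-probability lower bound on the denominator uniform over all $I,j$ (via Lemma~\ref{lem:normalizer-lower-bound} plus Laplace concentration, exactly as in the proof of Theorem~\ref{thm:lipschitz-raco-dp}), (b) make sure the error in this denominator doesn't interact badly when summed over the $\leq r|D|$ sampled points, and (c) keep careful track of whether the ``target'' gradient is the mini-batch Lagrangian gradient or the full-data one, because the biased normalization $|B_{\cap I}|$ is deliberately \emph{not} corrected (this bias is absorbed into $\tau_\theta$ and handled downstream by the biased-gradient SGDA analysis of Theorem~\ref{thm:sgda-convergence}). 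Getting the bookkeeping of which errors are ``bias'' versus ``variance'' versus ``privacy noise'' consistent with the input requirements of Theorem~\ref{thm:sgda-convergence} is the delicate part; the individual concentration inequalities are routine. Finally I would union-bound all the high-probability events (Gaussian norm, loss sampling, each ratio-concentration event, the denominator lower bound) to get the overall failure probability $2\rho$ claimed.
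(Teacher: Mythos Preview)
Your proposal is essentially the paper's proof. The one hedge you should resolve: the target gradient $\nabla_\lambda\cL(\theta^{(t)},\lambda^{(t)})$ is the \emph{full-data} quantity $\Gamma_j(\theta^{(t)})-\gamma_j$ computed with $P_k(D_I)$, not the mini-batch version, so the Poisson-sampling piece via Lemma~\ref{lemma:error_sampled_rates} is genuinely needed and yields the $c_1\sqrt{\log(\cdot)/(rn)}$ term exactly as you conjectured; the paper's primal-regularizer step then factors out $\tau\lip_h$ from $\|\nabla_\theta[\sigma_\tau(h(\theta;x))_k]\|$ and reduces the remaining scalar discrepancy to the dual-error expression summed against $\lambda_j$, giving $\|\Lambda\|_1$ times the dual bound --- precisely your plan.
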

\begin{proof} We will bound each error term separately.
\item 
\paragraph{Error of the Dual Gradient.}

We start with the following bound,
{\small
\begin{align}
    P\left[\lVert \hat{g}_{\lambda}^{(t)} - \nabla_{\lambda} \cL(\theta^{(t)},\lambda^{(t)}) \rVert_\infty \geq \epsilon \right] 
    & =  P\left[\underset{j}{\max}\bc{ \left| \sum_{I\in \cJ_{j}} \sum_{k=1}^K \alpha_{j,I,k} P_k(D_I) - \sum_{I \in \cJ_{j}} \! \sum_{k_1 \in K} \frac{ \alpha_{j, I, k_1} \sum_{i\in I} \hat{H}^{(t)}_{i,k}}{\underset{i \in I}{\sum} \underset{k_2 \in K}{\sum} \hat{H}^{(t)}_{i, k_2}} \right| } \geq \epsilon\right] \nonumber \\
      & \leq   \sum_j^J\sum_{I\in \cJ_{j}} \sum_{k=1}^K  \mathbb{1}\left[\left| \alpha_{j, I,k} \neq 0 \right|\right] P\left[\left| P_k(D_I) - \frac{ \sum_{i\in I} \hat{H}^{(t)}_{i,k_1}}{\underset{i \in I}{\sum} \underset{k_2 \in K}{\sum} \hat{H}^{(t)}_{i, k_2}} \right| \geq \frac{\epsilon}{c_1}\right].
      \label{eq:dual_err_proof}
\end{align}}

We thus have for any $\epsilon_1 + \epsilon_2 = \epsilon$ that,
{\small
\begin{align*}
    P\left[\left| P_k(D_I) - \frac{ \sum_{i\in I} \hat{H}^{(t)}_{i,k_1}}{\underset{i \in I}{\sum} \underset{k_2 \in K}{\sum} \hat{H}^{(t)}_{i, k_2}} \right| \geq \frac{\epsilon}{c_1}\right] 
    & \leq P\left[\left| P_k(D_I) + P_k(B_{\cap I}) \right| + \left| P_k(B_{\cap I}) - \frac{ \sum_{i\in I} \hat{H}^{(t)}_{i,k_1}}{\underset{i \in I}{\sum} \underset{k_2 \in K}{\sum} \hat{H}^{(t)}_{i, k_2}} \right| \geq \frac{\epsilon}{c_1}\right] \\
    & \leq 1  - P\left[\left| P_k(D_I) + P_k(B_{\cap I}) \right| + \left| P_k(B_{\cap I}) - \frac{ \sum_{i\in I} \hat{H}^{(t)}_{i,k_1}}{\underset{i \in I}{\sum} \underset{k_2 \in K}{\sum} \hat{H}^{(t)}_{i, k_2}} \right| \leq \frac{\epsilon}{c_1}\right] \\
    & \leq 1  - P\left[\left| P_k(D_I) + P_k(B_{\cap I}) \right| \leq \frac{\epsilon_1}{c_1}, \left| P_k(B_{\cap I}) - \frac{ \sum_{i\in I} \hat{H}^{(t)}_{i,k_1}}{\underset{i \in I}{\sum} \underset{k_2 \in K}{\sum} \hat{H}^{(t)}_{i, k_2}} \right| \leq \frac{\epsilon_2}{c_1} \right] \\
        & \leq P\left[\left| P_k(D_I) + P_k(B_{\cap I}) \right| \geq \frac{\epsilon_1}{c_1} \right] + P\left[\left| P_k(B_{\cap I}) - \frac{ \sum_{i\in I} \hat{H}^{(t)}_{i,k_1}}{\underset{i \in I}{\sum} \underset{k_2 \in K}{\sum} \hat{H}^{(t)}_{i, k_2}} \right| \geq \frac{\epsilon_2}{c_1} \right].
\end{align*}}

We will start by bounding 
$P\Big[\Big| P_k(B_{\cap I})- \frac{ \sum_{i\in I} \hat{H}^{(t)}_{i,k}}{\underset{i \in I}{\sum} \underset{k_2 \in K}{\sum} \hat{H}^{(t)}_{i, k_2}} \Big|  \geq \frac{\epsilon_1}{c_1}\Big]$ for any fixed $I$ and $k$.
Observe that conditional on $|B_{\cap I}|$, the sampling process is equivalent to drawing $|B_{\cap I}|$ samples uniformly at random from $|D_I|$ without replacement. 
Therefore by Lemma \ref{lemma:ratio_bound} we have,
\begin{align*}
    P\left[\Big| P_k(B_{\cap I})- \frac{ \sum_{i\in I} \hat{H}^{(t)}_{i,k}}{\underset{i \in I}{\sum} \underset{k_2 \in K}{\sum} \hat{H}^{(t)}_{i, k_2}} \Big|  \geq \frac{4 K |I| b \log(16J K \minsize/\rho)}{|B_{\cap I}|} \Bigg| |B_{\cap I}| \right]  \leq \frac{\rho}{4 J K \minsize}.
\end{align*}
Now by Lemma \ref{lem:normalizer-lower-bound}, $P\bs{|B_{\cap} I| \leq \frac{r}{2} |I| \minsize} \leq \frac{\rho}{4JK \minsize} $, and so,
\begin{align*}
    P\left[\Big| P_k(B_{\cap I})- \frac{ \sum_{i\in I} \hat{H}^{(t)}_{i,k}}{\underset{i \in I}{\sum} \underset{k_2 \in K}{\sum} \hat{H}^{(t)}_{i, k_2}} \Big|  \geq \frac{8 K b \log(16J K \minsize/\rho)}{r \minsize} \ \right]  \leq \frac{\rho}{2 J K \minsize}.
\end{align*}
Thus it suffices to have $\epsilon_1 = \frac{8 c_1 b \log(16J K \minsize/\rho)}{r \minsize}$.

Looking now at the statistical error term  and applying ~\Cref{lemma:error_sampled_rates}  
we obtain:
\begin{align}
    P\left[\left| P_k(B_{\cap I}) - P_k(D_I)  \right| \geq \sqrt{\frac{\log\left(\frac{2 J K \minsize}{\rho}\right)}{r|D_I|}} \right] \leq \frac{\rho}{2JK \minsize}.
\end{align}

Observing that $\sqrt{\frac{\log\left(\frac{2 J K \minsize}{\rho}\right)}{r|D_I|}} \leq \sqrt{\frac{\log\left(\frac{2 J K \minsize}{\rho}\right)}{r\minsize}}$, one can see it suffices for $\epsilon_2 = c_1 \sqrt{\frac{\log\left(\frac{2 J K \minsize}{\rho}\right)}{r\minsize}}$.

Plugging $\epsilon = \epsilon_1 + \epsilon_2$ back into Eqn. \eqref{eq:dual_err_proof} we obtain
\begin{align*}
    &P\left[\lVert \hat{g}_{\lambda}^{(t)} - \nabla_{\lambda} \cL(\theta^{(t)},\lambda^{(t)}) \rVert_\infty \geq \frac{8 c_1 b \log(16J K \minsize/\rho)}{r \minsize} + c_1 \sqrt{\frac{\log\left(\frac{2 J K \minsize}{\rho}\right)}{r\minsize}} \right] \\
    &\leq \sum_j^J\sum_{I\in \cJ_{j}} \sum_{k=1}^K  \mathbb{1}\left[\left| \alpha_{j, I,k} \neq 0 \right|\right] \frac{\rho}{JK \minsize} \leq \rho.
\end{align*}
This proves the claim.

\item 
\paragraph{Error in Primal Gradient.}
First observe that 
\begin{align*}
  \|\hat{g}_\theta - \nabla_\theta \cL(\theta^{(t)},\lambda^{(t)})\| &\leq \|Z^{(t)}\| + \Big\|\nabla_\theta \ell(\theta^{(t)}) - \frac{1}{r|D|}\sum_{x \in B^{(t)}} \nabla_\theta \ell(\theta^{(t)};x) \Big\| \\
  &\quad+ \Big\|\nabla_\theta R(\theta^{(t)},\lambda^{(t)}) - \sum_{x\in B^{(t)}} \hat{R}(\theta^{(t)},\lambda^{(t)};\hat{H},x) \Big\|  
\end{align*}
For $\hat{g}_\theta\in\re^d$, using the concentration of Gaussian noise we obtain,
\begin{align*}
P[\|Z^{(t)}\| \geq \sigma\sqrt{d\log(4/\rho)}] \leq \frac{\rho}{4}.
\end{align*}
For the second term, by Bernstein's inequality we have
\begin{align*}
P\left[\left\|\br{\sum_{i\in B^{(t)}} \nabla_\theta \ell(\theta,\lambda; x_i)} - r|D|\nabla_\theta \ell(\theta_t,\lambda_t)\right\| \geq \alpha \right] \leq \exp\br{-\frac{\alpha^2/2}{\alpha\lip_\ell/3 + |D| \lip_\ell^2 \max\bc{r,1/2}}}.
\end{align*}
Thus with probability at least $1-\frac{\rho}{4}$ one has that, 
$$\left\|\br{\frac{1}{r|D|}\sum_{i\in B^{(t)}} \nabla_\theta \ell(\theta,\lambda,x_i)} - \nabla_\theta \ell(\theta_t,\lambda_t)\right\| \leq 4\max\bc{\frac{\lip_\ell\sqrt{\log(4/\rho)}}{\sqrt{r |D|}}, \frac{\lip_\ell\log(4/\rho)}{r|D|}}.$$
And so if $r|D| \geq \log(4/\rho)$ we have with probability at least $1-\rho/4$ that,
\begin{align*}
    \Big\|\nabla_\theta \ell(\theta^{(t)}) - \frac{1}{r|D|}\sum_{x \in B^{(t)}} \nabla_\theta \ell(\theta^{(t)};x) \Big\| \leq \frac{4\lip_\ell\sqrt{\log(4/\rho)}}{\sqrt{r  |D|}}.
\end{align*}

For the regularizer we have,
{\small
\begin{align*}
    &\Big\|\nabla_\theta R(\theta^{(t)},\lambda^{(t)}) - \sum_{x\in B^{(t)}} \hat{R}(\theta^{(t)},\lambda^{(t)};\hat{H},x) \Big\|  \\
    &\leq \left\|\sum_{x\in D} \sum_{j = 1}^J    \sum_{I \in \lpar_j}  \sum_{k_1 \in [K]} \frac{\lambda_j ~ \alpha_{j, I, k_1} ~ \mathbb{1}_{[x \in D_I]}}{\underset{i \in I}{\sum}~\underset{k_2 \in [K]}{\sum} H^{(t)}_{i, k_2}}\nabla_\theta [\sigma_\tau(h( \theta; x))_{k_1}]   -   \sum_{x\in D} \sum_{j = 1}^J    \sum_{I \in \lpar_j}  \sum_{k_1 \in [K]} \frac{\lambda_j ~ \alpha_{j, I, k_1} ~ \mathbb{1}_{[x \in B_{\cap I}]}}{\underset{i \in I}{\sum}~\underset{k_2 \in [K]}{\sum} \hat{H}^{(t)}_{i, k_2}}\nabla_\theta [\sigma_\tau(h( \theta; x))_{k_1}] \right\| \\
    &\leq \left |\sum_{x\in D} \sum_{j = 1}^J    \sum_{I \in \lpar_j}  \sum_{k_1 \in [K]} \frac{\lambda_j ~ \alpha_{j, I, k_1} ~ \mathbb{1}_{[x \in D_I]}}{\underset{i \in I}{\sum}~\underset{k_2 \in [K]}{\sum} H^{(t)}_{i, k_2}}  -   \sum_{x\in D} \sum_{j = 1}^J    \sum_{I \in \lpar_j}  \sum_{k_1 \in [K]} \frac{\lambda_j ~ \alpha_{j, I, k_1} ~ \mathbb{1}_{[x \in B_{\cap I}]}}{\underset{i \in I}{\sum}~\underset{k_2 \in [K]}{\sum} \hat{H}^{(t)}_{i, k_2}} \right | \tau \lip_h \\
    &\leq \left |\sum_{j = 1}^J \lambda_j   \sum_{I \in \lpar_j}  \sum_{k_1 \in [K]} \alpha_{j, I, k_1} \br{\frac{|D_I|}{\underset{i \in I}{\sum}~\underset{k_2 \in [K]}{\sum} H^{(t)}_{i, k_2}}  -   \frac{ |B_{\cap I}|}{\underset{i \in I}{\sum}~\underset{k_2 \in [K]}{\sum} \hat{H}^{(t)}_{i, k_2}}}    \right | \tau \lip_h \\
    &\leq \br{\sum_{j = 1}^J\lambda_j \left |     \sum_{I \in \lpar_j}  \sum_{k_1 \in [K]} \frac{ \alpha_{j, I, k_1} ~|D_I|}{\underset{i \in I}{\sum}~\underset{k_2 \in [K]}{\sum} H^{(t)}_{i, k_2}}    -    \sum_{I \in \lpar_j}  \sum_{k_1 \in [K]} \frac{ \alpha_{j, I, k_1} ~ |B_{\cap I}|}{\underset{i \in I}{\sum}~\underset{k_2 \in [K]}{\sum} \hat{H}^{(t)}_{i, k_2}}  \right |} \tau \lip_h 
\end{align*}}

The term inside the absolute value can be bounded using the same analysis used in bounding the dual gradient error. Thus we have with probability at least $1-\frac{\rho}{4}$ that,
\begin{align*}
    \Big\|\nabla_\theta R(\theta^{(t)},\lambda^{(t)}) - \sum_{x\in B^{(t)}} \hat{R}(\theta^{(t)},\lambda^{(t)};\hat{H},x) \Big\|  &\leq \|\Lambda\|_1 \tau\lip_h \br{\frac{8 c_1 K b \log(64J K \minsize/\rho)}{r \minsize} + c_1 \sqrt{\frac{\log\left(\frac{8 J K \minsize}{\rho}\right)}{r\minsize}} }. %
\end{align*}
Combining the above bounds yields the claimed bound on the primal gradient error.

\end{proof}

\subsection{Convergence of SGDA} \label{app:sgda-convergence}
The overall structure of our convergence proof is similar to that of \cite{lin2020gradient}, but with several significant modifications. Most significantly, our proof explicitly leverages the linear structure of the dual to improve the convergence rate for our application. This linear structure also allows our analysis to depend on an $\|\cdot\|_\infty$ bound on the gradient error when $\Lambda$ has bounded $\|\cdot\|_1$ diameter. This in contrast to previously existing analysis which depend on the $\|\cdot\|_2$ error of the dual gradient, which could be much worse in our case due to the noise added for privacy. Separately, our analysis also differs from \cite{lin2020gradient} in that it accounts for potential bias in the gradient estimates and tracks the disparate impact the scale of noise in $g_\theta$ and $g_\lambda$ may have on the convergence rate. 

In order to present our proof, we start with some necessary preliminaries.
Let $\maxfunc(\theta)=\max_{\lambda\in\Lambda}\bc{\cL(\theta,\lambda)}$. Let $\mormaxfunc$ denote the Moreau envelope of $\maxfunc$ with parameter $2\beta$. That is, 
$\mormaxfunc(\theta) = \min_{\theta'}\bc{\Phi(\theta') + \smooth\|\theta-\theta'\|^2}$.
Let
$\Delta^{(t)} = \Phi(\theta^{(t)}) - \cL(\theta^{(t)},\lambda^{(t)})$ for all $t\in\bc{0,...,T}$. 
Further, we define $\lambda^*(\theta) = \argmax_{\lambda\in\Lambda}\bc{\cL(\theta,\lambda)}$.
We denote the proximal operator of a function $f$ as $\prox_f(\theta) = \argmin_{\theta}\bc{f(\theta')+\frac{1}{2}\|\theta-\theta'\|^2}$. It is known that under the condition that $f$ is $\smooth$-smooth and $\Lambda$ is bounded, that $\mormaxfunc$ is differentiable with $\nabla \mormaxfunc(\theta) = 2\smooth(\theta-\prox_{\maxfunc/[2\smooth]}(\theta))$, and that any point $\theta$ for which which $\|\nabla \mormaxfunc(\theta)\|\leq \alpha$ is an $(\alpha,\alpha/[2/\smooth])$-stationary point with respect to Definition \ref{def:stationary-point}; see \citet[Lemma 3.8]{lin2020gradient}. Also, under these conditions, $\maxfunc$ is $\lip$-Lipschitz. We defer the reader towards \cite{lin2020gradient} for more details on these statements.

We present the following statement, which gives a convergence rate for Algorithm \ref{alg:FairDP-SGD} in terms of the amount of noise added.
\begin{theorem} \label{thm:sgda-convergence}
Define $\mornot = \mormaxfunc(\theta_0) - \min_\theta \bc{\mormaxfunc(\theta)}$ and $\lossnot = \cL(\theta_0,\lambda_0) - \min_{\lambda,\theta}\bc{\cL(\lambda,\theta)}$.
Assume $\cL(\cdot, \cdot)$ is $\smooth$-smooth, $\cL(\cdot,\lambda)$ is $\lip$-Lipschitz for all $\lambda\in\Lambda$, and $\cL(\theta,\cdot)$ is linear for all $\theta\in\re^d$. Conditional on the event that for all $t\in\bc{0,...,T-1}$, $\|g_{\theta,t}-\nabla_\theta \cL(\theta^{(t)},\lambda^{(t)})\|_2 \leq \thetagraderr$ and $\|g_{\lambda}^{(t)} - \nabla_\lambda \cL(\theta^{(t)},\lambda^{(t)})\|_\infty \leq \lambdagraderr$,  when Algorithm \ref{alg:FairDP-SGD} is run with 
$\eta_\lambda \geq \br{\frac{\eta_\theta\lip(\lip+\thetagraderr)}{\lambdagraderr}}$ and 
$\eta_\theta = \sqrt{\frac{\mornot}{2T\smooth(\lip^2+\thetagraderr^2)}}$
there exists $t\in \bc{0,...,T-1}$ such that
$\theta_t$ is an $(\alpha,\alpha/[2\smooth])$-stationary point with 
\begin{align*}
    \alpha  = O\br{\frac{\br{\mornot\smooth(\lip^2+\thetagraderr^2)}^{1/4}}{T^{1/4}} 
    + \thetagraderr + \sqrt{\beta\rad\lambdagraderr} + \frac{\sqrt{\smooth\lossnot}}{\sqrt{T}} }.
\end{align*}
\end{theorem}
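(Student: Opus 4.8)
The plan is to run the Moreau-envelope / weak-convexity analysis of stochastic subgradient methods (in the spirit of \citet{lin2020gradient}), but restructured so that the two sources of inexactness are handled separately: the bounded primal-gradient error $\thetagraderr$, and the dual suboptimality $\Delta^{(t)} = \maxfunc(\theta^{(t)}) - \cL(\theta^{(t)},\lambda^{(t)})$. Since $\cL(\cdot,\lambda)$ is $\smooth$-smooth for every $\lambda\in\Lambda$, $\maxfunc$ is $\smooth$-weakly convex, so $\mormaxfunc$ (its Moreau envelope of parameter $2\smooth$) is $C^1$, and by the correspondence recalled just before the theorem it suffices to produce some iterate with $\|\nabla\mormaxfunc(\theta^{(t)})\| \le \alpha$. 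Everything below is a deterministic argument conditional on the event that the gradient-error bounds hold at every step, so no expectations enter.

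\emph{Primal descent.} Let $\hat\theta^{(t)} = \prox_{\maxfunc/[2\smooth]}(\theta^{(t)})$, so $\|\theta^{(t)} - \hat\theta^{(t)}\| = \|\nabla\mormaxfunc(\theta^{(t)})\|/(2\smooth)$. Expanding $\|\theta^{(t+1)} - \hat\theta^{(t)}\|^2$ along the primal update and using $\|g_\theta^{(t)}\| \le \lip + \thetagraderr$, the only nontrivial term is $\langle g_\theta^{(t)}, \hat\theta^{(t)} - \theta^{(t)}\rangle$. Writing $g_\theta^{(t)} = \nabla_\theta\cL(\theta^{(t)},\lambda^{(t)}) + e^{(t)}$ with $\|e^{(t)}\| \le \thetagraderr$, the $e^{(t)}$ part is absorbed by Young's inequality into $O(\eta_\theta\thetagraderr^2)$, and for the main part I combine $\smooth$-weak convexity of $\cL(\cdot,\lambda^{(t)})$ with $\cL(\hat\theta^{(t)},\lambda^{(t)}) \le \maxfunc(\hat\theta^{(t)})$, $\cL(\theta^{(t)},\lambda^{(t)}) = \maxfunc(\theta^{(t)}) - \Delta^{(t)}$, and the defining prox inequality $\maxfunc(\hat\theta^{(t)}) \le \maxfunc(\theta^{(t)}) - \smooth\|\hat\theta^{(t)}-\theta^{(t)}\|^2$, to get $\langle\nabla_\theta\cL(\theta^{(t)},\lambda^{(t)}), \hat\theta^{(t)} - \theta^{(t)}\rangle \le -\tfrac{1}{8\smooth}\|\nabla\mormaxfunc(\theta^{(t)})\|^2 + \Delta^{(t)}$. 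This produces the one-step inequality
\[
\mormaxfunc(\theta^{(t+1)}) \le \mormaxfunc(\theta^{(t)}) - \tfrac{\eta_\theta}{8}\|\nabla\mormaxfunc(\theta^{(t)})\|^2 + 2\smooth\eta_\theta\Delta^{(t)} + O\!\big(\eta_\theta\thetagraderr^2 + \smooth\eta_\theta^2(\lip+\thetagraderr)^2\big),
\]
and telescoping over $t \in \{0,\dots,T-1\}$ (with $\mormaxfunc(\theta^{(0)}) - \min\mormaxfunc \le \mornot$), then dividing by $\eta_\theta T$, gives $\min_t \|\nabla\mormaxfunc(\theta^{(t)})\|^2 \lesssim \tfrac{\mornot}{\eta_\theta T} + \thetagraderr^2 + \smooth\,\overline{\Delta} + \smooth\eta_\theta(\lip+\thetagraderr)^2$, where $\overline{\Delta} = \tfrac1T\sum_t \Delta^{(t)}$.

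\emph{Dual-gap control (the crux).} Because $\cL(\theta,\cdot)$ is linear, $\nabla_\lambda\cL(\theta^{(t)},\cdot)$ is a constant vector $v^{(t)}$, so $\Delta^{(t)} = \max_{\lambda\in\Lambda}\langle v^{(t)}, \lambda - \lambda^{(t)}\rangle$ and the dual step is inexact projected gradient ascent on the linear map $\langle v^{(t)},\cdot\rangle$ with error $\|g_\lambda^{(t)} - v^{(t)}\|_\infty \le \lambdagraderr$; by Hölder each such error costs only $\lambdagraderr\rad$ in the gap, which is exactly why it is the $\ell_\infty$ and not the $\ell_2$ dual error that matters. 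The remaining point is that $v^{(t)}$ drifts slowly, since $\|\theta^{(t+1)} - \theta^{(t)}\| \le \eta_\theta(\lip+\thetagraderr)$ and $v$ is Lipschitz in $\theta$; with $\eta_\lambda$ large enough relative to $\eta_\theta$ (precisely the hypothesis $\eta_\lambda \ge \eta_\theta\lip(\lip+\thetagraderr)/\lambdagraderr$), a single ascent step keeps $\lambda^{(t)}$ essentially optimal for the current linear objective, so $\Delta^{(t)} = O\big(\eta_\theta(\lip+\thetagraderr)\cdot(\text{problem const}) + \lambdagraderr\rad\big)$ per step and hence $\overline{\Delta} = O\big(\eta_\theta\cdot(\text{const}) + \lambdagraderr\rad\big)$. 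An equivalent route partitions $\{0,\dots,T-1\}$ into epochs with a fixed comparator per epoch and uses a static regret bound; in either case the linearity of the inner problem is what upgrades the bound on $\sum_t\Delta^{(t)}$ to $O(\sqrt T)$, and this is the source of the $T^{-1/4}$ rate in place of the $T^{-1/6}$ of \citet{lin2020gradient}.

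\emph{Assembly.} Substituting $\overline{\Delta}$, the $T$-dependent part becomes $\tfrac{\mornot}{\eta_\theta T} + \smooth\eta_\theta(\lip+\thetagraderr)^2$; the choice $\eta_\theta = \sqrt{\mornot/(2T\smooth(\lip^2+\thetagraderr^2))}$ balances these, and taking a square root yields $\min_t\|\nabla\mormaxfunc(\theta^{(t)})\| = O\big((\mornot\smooth(\lip^2+\thetagraderr^2))^{1/4}T^{-1/4} + \thetagraderr + \sqrt{\smooth\rad\lambdagraderr} + \sqrt{\smooth\lossnot}\,T^{-1/2}\big)$, the last term accounting for the residual drift/regret constant, whose natural bound is the initial primal-dual gap $\lossnot$. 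Finally $\|\nabla\mormaxfunc(\theta^{(t)})\|\le\alpha$ implies $\theta^{(t)}$ is an $(\alpha,\alpha/[2\smooth])$-stationary point, which closes the argument. I expect the dual-gap step to be the main obstacle: the maximizer of the linear inner problem over $\Lambda$ can move discontinuously, so the naive dynamic-regret bound (with a comparator path length) is vacuous; the resolution leans on linearity (one sufficiently long ascent step already reaches the optimal face), slow primal drift, and the two-timescale step-size ratio, while keeping all dual errors in the $(\ell_\infty,\ell_1)$ pairing so that the price of dual noise is $\sqrt{\smooth\rad\lambdagraderr}$ rather than $\sqrt{\smooth J}\,\lambdagraderr$.
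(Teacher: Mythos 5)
Your outline follows the paper's route closely: the Moreau-envelope sufficient-decrease lemma producing a per-step inequality $\mormaxfunc(\theta^{(t+1)})-\mormaxfunc(\theta^{(t)})\le -\tfrac{\eta_\theta}{4}\|\nabla\mormaxfunc(\theta^{(t)})\|^2 + 2\smooth\eta_\theta\Delta^{(t)} + O(\smooth\eta_\theta^2(\lip^2+\thetagraderr^2)+\eta_\theta\thetagraderr^2)$, the bound on the time-averaged dual gap $\overline{\Delta}$, and the balancing of $\eta_\theta$ against $T$ are all the same ingredients the paper assembles. Pairing the dual gradient error with the $\ell_1$ diameter via H\"older so that only the $\ell_\infty$ error enters, and using linearity in $\lambda$ to get a $T^{-1/4}$ rate, are also exactly the paper's two departures from \citet{lin2020gradient}.

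The one place where your proposed reasoning would not go through as written is the claim that ``a single ascent step keeps $\lambda^{(t)}$ essentially optimal for the current linear objective, so $\Delta^{(t)}=O(\eta_\theta(\lip+\thetagraderr)\cdot\text{const}+\lambdagraderr\rad)$ per step.'' For a projected gradient step on a linear concave inner problem over a general compact convex $\Lambda$, the standard one-step inequality gives only
\[
\cL(\theta^{(t)},\lambda^*(\theta^{(t)})) - \cL(\theta^{(t)},\lambda^{(t+1)}) \;\le\; \frac{1}{2\eta_\lambda}\,\|\lambda^*(\theta^{(t)})-\lambda^{(t)}\|^2 + O(\rad\lambdagraderr),
\]
and $\|\lambda^*(\theta^{(t)})-\lambda^{(t)}\|$ can be as large as $\rad$. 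To make the residual $\rad^2/(2\eta_\lambda)$ comparable to $\rad\lambdagraderr$ you would need $\eta_\lambda\gtrsim\rad/\lambdagraderr$, which is \emph{not} implied by the hypothesis $\eta_\lambda\ge\eta_\theta\lip(\lip+\thetagraderr)/\lambdagraderr$; the condition in the theorem couples $\eta_\lambda$ to $\eta_\theta$, not to $\rad$. Moreover, the single-step picture gives no handle on the fact that the comparator $\lambda^*(\theta^{(t)})$ moves with $t$.

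What actually makes this work -- and what the paper does -- is precisely the route you flag as ``equivalent'': fix a comparator $\lambda^*(\theta^{(s)})$ for a block of $M$ consecutive iterations, use the linear (hence concave) structure to get a telescoping term $\tfrac{1}{2\eta_\lambda}(\|\lambda^*(\theta^{(s)})-\lambda^{(t)}\|^2-\|\lambda^*(\theta^{(s)})-\lambda^{(t+1)}\|^2)$, pay a drift penalty of $\eta_\theta\lip(\lip+\thetagraderr)\cdot O(M)$ for using a stale comparator (slow primal motion), and sum over blocks of length $M$ chosen to balance the $\rad^2/(M\eta_\lambda)$ residual against the $\eta_\theta\lip(\lip+\thetagraderr)M$ drift. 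This produces $\overline{\Delta}\lesssim\rad\sqrt{\eta_\theta\lip(\lip+\thetagraderr)/\eta_\lambda}+\rad\lambdagraderr+\lossnot/T$, which the step-size condition then controls. So the block/static-regret argument is not merely an alternative -- it is the required one, and the single-step-optimality shortcut should be dropped from the write-up. With that substitution, the rest of your assembly (choice of $\eta_\theta$, passage to $(\alpha,\alpha/[2\smooth])$-stationarity via the Moreau-envelope gradient) matches the paper.
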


We will prove this statement by showing that Algorithm \ref{alg:FairDP-SGD} finds a point where the gradient of $\mormaxfunc$ is small.
Note this is sufficient as \citet[Lemma 3.8]{lin2020gradient} implies that a point, $\theta$, for which $\|\nabla \mormaxfunc(\theta)\| \leq \alpha$ is an $(\alpha,\alpha/[2\smooth])$-stationary point with respect to Definition \ref{def:stationary-point}.

We will break the majority of the proof into three distinct lemmas. The first lemma gives a bound on the decrease in $\mormaxfunc$.
\begin{lemma}
Under the assumptions of Theorem \ref{thm:sgda-convergence}, 
the iterates of Algorithm \ref{alg:FairDP-SGD} satisfy for any $t\in[T-1]$,
\begin{align} \label{eq:sufficient-decrease}
\mormaxfunc(\theta^{(t)}) - \mormaxfunc(\theta^{(t-1)}) \leq - \frac{\eta_\theta}{4}\|\nabla\mormaxfunc(\theta^{(t-1)})\|^2 + 2\smooth\eta_\theta\Delta^{(t-1)} + 2\smooth\eta_\theta^2(\lip^2+\thetagraderr^2) + 2\eta_\theta\thetagraderr^2.
\end{align}
\end{lemma}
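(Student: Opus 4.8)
The plan is to run the Moreau-envelope descent argument for nonconvex-concave SGDA in the spirit of \citet{lin2020gradient}, but to carry the \emph{bias} of $g_\theta^{(t-1)}$ through the inequalities explicitly rather than taking an expectation, since in Algorithm~\ref{alg:FairDP-SGD} the primal gradient is only a biased estimate of $\nabla_\theta\cL$. Write $\hat\theta := \prox_{\maxfunc/[2\smooth]}(\theta^{(t-1)})$ for the (unique, since $\cL(\cdot,\lambda)$ is $\smooth$-smooth hence $\maxfunc$ is $\smooth$-weakly convex) minimizer of $\theta'\mapsto\maxfunc(\theta')+\smooth\|\theta^{(t-1)}-\theta'\|^2$. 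From the preliminaries / \citet[Lemma 3.8]{lin2020gradient} we have $\nabla\mormaxfunc(\theta^{(t-1)})=2\smooth(\theta^{(t-1)}-\hat\theta)$ and $\mormaxfunc(\theta^{(t-1)})=\maxfunc(\hat\theta)+\smooth\|\theta^{(t-1)}-\hat\theta\|^2$. Since $\hat\theta$ is merely feasible (not optimal) for the minimization defining $\mormaxfunc(\theta^{(t)})$, we get $\mormaxfunc(\theta^{(t)})\le\maxfunc(\hat\theta)+\smooth\|\theta^{(t)}-\hat\theta\|^2$, and subtracting the exact identity at $\theta^{(t-1)}$ reduces the lemma to proving
\[
\smooth\big(\|\theta^{(t)}-\hat\theta\|^2-\|\theta^{(t-1)}-\hat\theta\|^2\big)\le -\tfrac{\eta_\theta}{4}\|\nabla\mormaxfunc(\theta^{(t-1)})\|^2+2\smooth\eta_\theta\Delta^{(t-1)}+2\smooth\eta_\theta^2(\lip^2+\thetagraderr^2)+2\eta_\theta\thetagraderr^2.
\]

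Next I would expand the left-hand side with the primal step $\theta^{(t)}=\theta^{(t-1)}-\eta_\theta g_\theta^{(t-1)}$, so that
\[
\|\theta^{(t)}-\hat\theta\|^2-\|\theta^{(t-1)}-\hat\theta\|^2=-2\eta_\theta\langle g_\theta^{(t-1)},\theta^{(t-1)}-\hat\theta\rangle+\eta_\theta^2\|g_\theta^{(t-1)}\|^2,
\]
and split $g_\theta^{(t-1)}=\nabla_\theta\cL(\theta^{(t-1)},\lambda^{(t-1)})+e^{(t-1)}$ with $\|e^{(t-1)}\|\le\thetagraderr$ by the conditioning hypothesis. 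The quadratic term is handled by Lipschitzness, $\|g_\theta^{(t-1)}\|^2\le 2\|\nabla_\theta\cL(\theta^{(t-1)},\lambda^{(t-1)})\|^2+2\thetagraderr^2\le 2\lip^2+2\thetagraderr^2$; the bias cross-term by Cauchy--Schwarz plus the gradient identity, $\langle e^{(t-1)},\theta^{(t-1)}-\hat\theta\rangle\ge-\thetagraderr\|\theta^{(t-1)}-\hat\theta\|=-\tfrac{\thetagraderr}{2\smooth}\|\nabla\mormaxfunc(\theta^{(t-1)})\|$.

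The crucial step is lower-bounding $\langle\nabla_\theta\cL(\theta^{(t-1)},\lambda^{(t-1)}),\theta^{(t-1)}-\hat\theta\rangle$. For this I would apply $\smooth$-weak convexity of $\cL(\cdot,\lambda^{(t-1)})$ at the pair $(\theta^{(t-1)},\hat\theta)$,
\[
\cL(\hat\theta,\lambda^{(t-1)})\ge\cL(\theta^{(t-1)},\lambda^{(t-1)})+\langle\nabla_\theta\cL(\theta^{(t-1)},\lambda^{(t-1)}),\hat\theta-\theta^{(t-1)}\rangle-\tfrac{\smooth}{2}\|\theta^{(t-1)}-\hat\theta\|^2,
\]
then rearrange and use $\cL(\hat\theta,\lambda^{(t-1)})\le\maxfunc(\hat\theta)$, the definition $\cL(\theta^{(t-1)},\lambda^{(t-1)})=\maxfunc(\theta^{(t-1)})-\Delta^{(t-1)}$, the substitutions $\maxfunc(\hat\theta)=\mormaxfunc(\theta^{(t-1)})-\smooth\|\theta^{(t-1)}-\hat\theta\|^2$ and $\|\theta^{(t-1)}-\hat\theta\|^2=\tfrac{1}{4\smooth^2}\|\nabla\mormaxfunc(\theta^{(t-1)})\|^2$, and finally drop the nonnegative term $\maxfunc(\theta^{(t-1)})-\mormaxfunc(\theta^{(t-1)})$. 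This leaves $\langle\nabla_\theta\cL(\theta^{(t-1)},\lambda^{(t-1)}),\theta^{(t-1)}-\hat\theta\rangle\ge-\Delta^{(t-1)}+\tfrac{1}{8\smooth}\|\nabla\mormaxfunc(\theta^{(t-1)})\|^2$. Substituting the three bounds into the expansion, multiplying through by $\smooth$, and absorbing the residual $\eta_\theta\thetagraderr\|\nabla\mormaxfunc(\theta^{(t-1)})\|$ into the negative quadratic via Young's inequality gives the displayed target after collecting constants.

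The main obstacle is less any single estimate than keeping the chain of substitutions (weak-convexity inequality $\to$ definition of $\Delta^{(t-1)}$ $\to$ Moreau identity $\to$ gradient-norm identity) sign-consistent, so that $\|\nabla\mormaxfunc(\theta^{(t-1)})\|^2$ comes out with a negative coefficient of the correct order; I expect the exact constant in front to require a careful choice of weights in the final Young step. The reason this argument departs from \citet{lin2020gradient} is the bias: every appearance of $g_\theta^{(t-1)}$ must be routed through the deterministic bound $\|g_\theta^{(t-1)}-\nabla_\theta\cL(\theta^{(t-1)},\lambda^{(t-1)})\|\le\thetagraderr$ (valid on the good event of Theorem~\ref{thm:sgda-convergence}) rather than through $\mathbb{E}[g_\theta^{(t-1)}]=\nabla_\theta\cL$; note also that neither $\eta_\lambda$ nor the linearity of $\cL(\theta,\cdot)$ plays any role in this particular lemma, as the dual dynamics only enter later when bounding $\sum_t\Delta^{(t)}$.
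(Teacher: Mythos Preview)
Your proposal is correct and follows essentially the same route as the paper: Moreau-envelope upper bound at $\theta^{(t)}$ using the prox point $\hat\theta=\prox_{\maxfunc/[2\smooth]}(\theta^{(t-1)})$, expansion via the primal update, splitting $g_\theta^{(t-1)}$ into $\nabla_\theta\cL+e^{(t-1)}$, bounding the gradient inner product through $\smooth$-weak convexity together with $\cL(\hat\theta,\lambda^{(t-1)})\le\maxfunc(\hat\theta)$ and the prox optimality $\maxfunc(\hat\theta)-\maxfunc(\theta^{(t-1)})\le-\smooth\|\hat\theta-\theta^{(t-1)}\|^2$, and handling the error inner product by Young. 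The only cosmetic difference is that the paper applies Young's inequality directly to $\langle\hat\theta-\theta^{(t-1)},e^{(t-1)}\rangle$ (producing a $\|\hat\theta-\theta^{(t-1)}\|^2$ term that merges with the weak-convexity term before the final $\|\nabla\mormaxfunc\|^2$ substitution), whereas you first pass through Cauchy--Schwarz to $\thetagraderr\|\nabla\mormaxfunc(\theta^{(t-1)})\|$ and apply Young at the end; both give the same bound up to the constant in front of $\|\nabla\mormaxfunc\|^2$, and your caveat about needing to tune the Young weight for that constant is exactly right.
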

\begin{proof}
Let $\hat{\theta}^{(t-1)} =\prox_{\mormaxfunc}(\theta^{(t-1)})$. By the definition of the Moreau envelope we have
\begin{align}
    \mormaxfunc(\theta^{(t)}) \leq \maxfunc(\hat{\theta}^{(t-1)}) + \smooth\|\hat{\theta}^{(t-1)}+\theta^{(t)}\|^2. \label{eq:a}
\end{align}
Using the update rule we have
\begin{align*}
    \|\hat{\theta}^{(t-1)} - \theta^{(t)}\|^2 
    &= \|\hat{\theta}^{(t-1)} - \theta^{(t-1)} + \eta_\theta g_\theta^{(t-1)}\|^2 \\
    &= \|\hat{\theta}^{(t-1)} - \theta^{(t-1)}\|^2 + 2\eta_\theta\ip{\hat{\theta}^{(t-1)}-\theta^{(t-1)}}{g_\theta^{(t-1)}} + \eta_\theta^2\|g_\theta^{(t-1)}\|^2 \\
    &\leq \|\hat{\theta}^{(t-1)} - \theta^{(t-1)}\|^2 + 2\eta_\theta\ip{\hat{\theta}^{(t-1)}-\theta^{(t-1)}}{\nabla_\theta \cL(\theta^{(t-1)},\lambda^{(t-1)})} \\
    &~~~ +2\eta_\theta\ip{\hat{\theta}^{(t-1)}-\theta^{(t-1)}}{g_\theta^{(t-1)}-\nabla_\theta \cL(\theta^{(t-1)},\lambda^{(t-1)})} + 2\eta_\theta^2(\lip^2+\thetagraderr^2)
\end{align*}
Plugging this back into Eqn. \eqref{eq:a} and using the definition of the Moreau envelope we obtain,
\begin{align}
\mormaxfunc(\theta^{(t)}) 
&\leq \mormaxfunc(\hat{\theta}^{(t-1)}) + 2\smooth\eta_\theta\ip{\hat{\theta}^{(t-1)}-\theta^{(t-1)}}{\nabla_\theta \cL(\theta^{(t-1)},\lambda^{(t-1)})} \nonumber \\
&~~~ + 2\smooth\eta_\theta\ip{\hat{\theta}^{(t-1)}-\theta^{(t-1)}}{g^{(t-1)}_\theta-\nabla_\theta \cL(\theta^{(t-1)},\lambda^{(t-1)})} + 2\smooth\eta_\theta^2(\lip^2+\thetagraderr^2) 
\end{align}
By way of bounding the third term on the RHS above, we use Young's inequality to derive,
{\small
\begin{align*}
2\smooth\eta_\theta\ip{\hat{\theta}^{(t-1)}-\theta^{(t-1)}}{g_\theta^{(t-1)}-\nabla_\theta \cL(\theta^{(t-1)},\lambda^{(t-1)})} &\leq \frac{\smooth^2\eta_\theta}{2}\|\hat{\theta}^{(t-1)} - \theta^{(t-1)}\|^2 + 2\eta_\theta\|g_\theta^{(t-1)}-\nabla_\theta \cL(\theta^{(t-1)},\lambda^{(t-1)})\|^2. %
\end{align*}}
Plugging this into the above, we now have the following derivation,
{\small
\begin{align*}
&\mormaxfunc(\theta^{(t)}) - \mormaxfunc(\hat{\theta}^{(t-1)}) \\
&\leq  2\smooth\eta_\theta\ip{\hat{\theta}^{(t-1)}-\theta^{(t-1)}}{\nabla_\theta \cL(\theta^{(t-1)},\lambda^{(t-1)})} + \frac{\smooth^2\eta_\theta}{2}\|\hat{\theta}^{(t-1)} - \theta^{(t-1)}\|^2 + 2\smooth\eta_\theta^2(\lip^2+\thetagraderr^2) + 2\eta_\theta\thetagraderr^2 \\
&\overset{(i)}{\leq}  2\smooth\eta_\theta\br{\cL(\hat{\theta}^{(t-1)},\lambda^{(t-1)}) - \cL(\theta^{(t-1)},\lambda^{(t-1)}) + \frac{3\smooth}{4}\|\hat{\theta}^{(t-1)}-\theta^{(t-1)}\|^2} + 2\smooth\eta_\theta^2(\lip^2+\thetagraderr^2) + 2\eta_\theta\thetagraderr^2   \\
&\leq  2\smooth\eta_\theta\br{\maxfunc(\hat{\theta}^{(t-1)}) - \cL(\theta^{(t-1)},\lambda^{(t-1)}) + \frac{3\smooth}{4}\|\hat{\theta}^{(t-1)}-\theta^{(t-1)}\|^2} + 2\smooth\eta_\theta^2(\lip^2+\thetagraderr^2) + 2\eta_\theta\thetagraderr^2 \\
&=  2\smooth\eta_\theta\br{\maxfunc(\hat{\theta}^{(t-1)}) + \maxfunc(\theta^{(t-1)}) - \maxfunc(\theta^{(t-1)}) - \cL(\theta^{(t-1)},\lambda^{(t-1)}) + \frac{3\smooth}{4}\|\hat{\theta}^{(t-1)}-\theta^{(t-1)}\|^2} \\
&~~~+ 2\smooth\eta_\theta^2(\lip^2+\thetagraderr^2) + 2\eta_\theta\thetagraderr^2 \\
&\overset{(ii)}{\leq}  2\smooth\eta_\theta\br{- \smooth\|\hat{\theta}^{(t-1)}-\theta^{(t-1)}\|^2 + \Delta^{(t-1)} + \frac{3\smooth}{4}\|\hat{\theta}^{(t-1)}-\theta^{(t-1)}\|^2} + 2\smooth\eta_\theta^2(\lip^2+\thetagraderr^2) + 2\eta_\theta\thetagraderr^2 \\
&=- \frac{\eta_\theta\smooth^2}{2}\|\hat{\theta}^{(t-1)}-\theta^{(t-1)}\|^2 + 2\smooth\eta_\theta\Delta^{(t-1)} + 2\smooth\eta_\theta^2(\lip^2+\thetagraderr^2) + 2\eta_\theta\thetagraderr^2 \\
&\overset{(iii)}{=} - \frac{\eta_\theta}{2}\|\nabla\mormaxfunc((\theta^{(t-1)})\|^2 + 2\smooth\eta_\theta\Delta^{(t-1)} + 2\smooth\eta_\theta^2(\lip^2+\thetagraderr^2) + 2\eta_\theta\thetagraderr^2
\end{align*}}
Above, $(i)$ uses the fact that $\hat{\theta}^{(t-1)}$ is generated by the proximal operator. Inequality $(ii)$ uses the fact the definitions of the Moreau envelope and $\Delta^{(t-1)}$, i.e. $\|\hat{\theta}^{(t-1)}-\theta^{(t-1)}\|^2 = \frac{1}{4\smooth^2}\|\nabla \mormaxfunc(\theta^{(t-1)})\|^2$. Equality $(iii)$ uses properties of the Moreau envelope. 
\end{proof}

The next two lemmas pertain to bounding the $\Delta^{(t)}$ terms.
\begin{lemma}
Under the conditions of Theorem \ref{thm:sgda-convergence}, for any $t\in[T]$ and $s \leq t-1$ one has,
\begin{align} \label{eq:lambda-opt-err}
    \Delta^{(t-1)} &\leq \eta_\theta\lip(\lip+\thetagraderr)(2t-2s-1) + \frac{1}{2\eta_\lambda}(\|\lambda^*(\theta^{(s)})-\lambda^{(t-1)}\|^2 - \|\lambda^*(\theta^{(s)})-\lambda^{(t)}\|^2) + 2\rad\lambdagraderr \nonumber \\
    &~~~ + [\cL(\theta^{(t)},\lambda^{(t)})-\cL(\theta^{(t-1)},\lambda^{(t-1)})].
\end{align}
\end{lemma}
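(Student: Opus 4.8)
The plan is to read $\Delta^{(t-1)}$ as a duality gap and split it into a \emph{primal drift} part, small because $\theta$ moves slowly, and a \emph{dual online-ascent} part. Since $\cL(\theta,\cdot)$ is linear, the dual gradient $d(\theta):=\nabla_\lambda\cL(\theta,\lambda)$ does not depend on $\lambda$ and $\cL(\theta,\lambda')-\cL(\theta,\lambda)=\ip{d(\theta)}{\lambda'-\lambda}$; in particular $\Delta^{(t-1)}=\ip{d(\theta^{(t-1)})}{\lambda^*(\theta^{(t-1)})-\lambda^{(t-1)}}$. Throughout I work on the good event of Theorem~\ref{thm:sgda-convergence}, so $\|g_\theta^{(r)}-\nabla_\theta\cL\|\le\thetagraderr$ and $\|g_\lambda^{(r)}-\nabla_\lambda\cL\|_\infty\le\lambdagraderr$ for all $r$.

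First I would replace the moving maximizer $\lambda^*(\theta^{(t-1)})$ by the fixed comparator $\lambda^*(\theta^{(s)})$. Using that $\maxfunc=\max_\lambda\cL(\cdot,\lambda)$ is $\lip$-Lipschitz and that $\lambda^*(\theta^{(s)})$ maximizes $\cL(\theta^{(s)},\cdot)$, one gets $\maxfunc(\theta^{(t-1)})\le\maxfunc(\theta^{(s)})+\lip\|\theta^{(t-1)}-\theta^{(s)}\|\le\cL(\theta^{(t-1)},\lambda^*(\theta^{(s)}))+2\lip\|\theta^{(t-1)}-\theta^{(s)}\|$. Each primal step satisfies $\|\theta^{(r+1)}-\theta^{(r)}\|=\eta_\theta\|g_\theta^{(r)}\|\le\eta_\theta(\lip+\thetagraderr)$, since $\|\nabla_\theta\cL(\theta^{(r)},\lambda^{(r)})\|\le\lip$ by Lipschitzness and the gradient error is at most $\thetagraderr$; hence $\|\theta^{(t-1)}-\theta^{(s)}\|\le(t-1-s)\eta_\theta(\lip+\thetagraderr)$ by the triangle inequality. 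Subtracting $\cL(\theta^{(t-1)},\lambda^{(t-1)})$ gives $\Delta^{(t-1)}\le\ip{d(\theta^{(t-1)})}{\lambda^*(\theta^{(s)})-\lambda^{(t-1)}}+2\lip\eta_\theta(\lip+\thetagraderr)(t-1-s)$.

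Next I run one step of online gradient ascent for the dual, entirely through $\ell_1/\ell_\infty$ duality. Write $\ip{d(\theta^{(t-1)})}{\lambda^*(\theta^{(s)})-\lambda^{(t-1)}}=\ip{g_\lambda^{(t-1)}}{\lambda^*(\theta^{(s)})-\lambda^{(t-1)}}+\ip{d(\theta^{(t-1)})-g_\lambda^{(t-1)}}{\lambda^*(\theta^{(s)})-\lambda^{(t-1)}}$, the last term being at most $\|d(\theta^{(t-1)})-g_\lambda^{(t-1)}\|_\infty\|\lambda^*(\theta^{(s)})-\lambda^{(t-1)}\|_1\le\rad\lambdagraderr$. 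Then split $\ip{g_\lambda^{(t-1)}}{\lambda^*(\theta^{(s)})-\lambda^{(t-1)}}=\ip{g_\lambda^{(t-1)}}{\lambda^*(\theta^{(s)})-\lambda^{(t)}}+\ip{g_\lambda^{(t-1)}}{\lambda^{(t)}-\lambda^{(t-1)}}$. For the first piece, the optimality condition for the projection $\lambda^{(t)}=\Pi_\Lambda(\lambda^{(t-1)}+\eta_\lambda g_\lambda^{(t-1)})$, namely $\ip{\lambda^{(t-1)}+\eta_\lambda g_\lambda^{(t-1)}-\lambda^{(t)}}{u-\lambda^{(t)}}\le 0$ for $u\in\Lambda$, combined with the identity $\ip{a-b}{c-a}=\tfrac12(\|b-c\|^2-\|a-c\|^2-\|a-b\|^2)$ with $a=\lambda^{(t)}$, $b=\lambda^{(t-1)}$, $c=\lambda^*(\theta^{(s)})$, yields $\ip{g_\lambda^{(t-1)}}{\lambda^*(\theta^{(s)})-\lambda^{(t)}}\le\tfrac1{2\eta_\lambda}(\|\lambda^*(\theta^{(s)})-\lambda^{(t-1)}\|^2-\|\lambda^*(\theta^{(s)})-\lambda^{(t)}\|^2)$ — exactly the telescoping term. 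For the second piece, write $\ip{g_\lambda^{(t-1)}}{\lambda^{(t)}-\lambda^{(t-1)}}=\ip{d(\theta^{(t-1)})}{\lambda^{(t)}-\lambda^{(t-1)}}+\ip{g_\lambda^{(t-1)}-d(\theta^{(t-1)})}{\lambda^{(t)}-\lambda^{(t-1)}}$; the error is again at most $\rad\lambdagraderr$, and by linearity $\ip{d(\theta^{(t-1)})}{\lambda^{(t)}-\lambda^{(t-1)}}=\cL(\theta^{(t-1)},\lambda^{(t)})-\cL(\theta^{(t-1)},\lambda^{(t-1)})\le\cL(\theta^{(t)},\lambda^{(t)})-\cL(\theta^{(t-1)},\lambda^{(t-1)})+\lip\eta_\theta(\lip+\thetagraderr)$, using $\lip$-Lipschitzness in $\theta$ and $\|\theta^{(t)}-\theta^{(t-1)}\|\le\eta_\theta(\lip+\thetagraderr)$. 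Collecting the two $\rad\lambdagraderr$ terms into $2\rad\lambdagraderr$, the $2(t-1-s)+1=2t-2s-1$ copies of $\lip\eta_\theta(\lip+\thetagraderr)$, the telescoping term, and the Lagrangian-difference term gives the claim.

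The step I expect to be the crux is handling $\ip{g_\lambda^{(t-1)}}{\lambda^{(t)}-\lambda^{(t-1)}}$: a textbook online-ascent bound would leave a $\tfrac{\eta_\lambda}{2}\|g_\lambda^{(t-1)}\|^2$ term, which is fatal here because $g_\lambda$ carries the Laplace privacy noise and $\eta_\lambda$ is taken large. Rewriting this inner product as a difference of Lagrangian values — possible \emph{only} because $\cL(\theta,\cdot)$ is linear — lets it be absorbed into $\cL(\theta^{(t)},\lambda^{(t)})-\cL(\theta^{(t-1)},\lambda^{(t-1)})$, which telescopes once this lemma is summed over $t$ in the next step of the analysis; this is what removes the dependence on the dual step size. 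A secondary but important point is to keep the dual-gradient error in $\ell_\infty$ and pair it with the $\ell_1$-diameter $\rad$ of $\Lambda$ throughout, rather than using an $\ell_2$ accounting, which the injected noise would blow up.
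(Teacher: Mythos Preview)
Your proof is correct and follows essentially the same strategy as the paper: both replace the moving maximizer $\lambda^*(\theta^{(t-1)})$ by the fixed comparator $\lambda^*(\theta^{(s)})$ via two applications of $\lip$-Lipschitzness in $\theta$, use the projection three-point identity to obtain the telescoping $\frac{1}{2\eta_\lambda}(\|\cdot\|^2-\|\cdot\|^2)$ term, and---crucially---exploit linearity in $\lambda$ together with $\ell_1/\ell_\infty$ H\"older to convert the remaining inner product into the Lagrangian difference $\cL(\theta^{(t)},\lambda^{(t)})-\cL(\theta^{(t-1)},\lambda^{(t-1)})$ plus a single $\lip\eta_\theta(\lip+\thetagraderr)$ drift term, thereby avoiding any $\eta_\lambda\|g_\lambda\|^2$ residual. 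The only difference is presentational: you carry out the argument in inner-product form from the start, whereas the paper works with function-value differences and adds/subtracts terms; your indexing (using $g_\lambda^{(t-1)}$ for the update producing $\lambda^{(t)}$) is in fact cleaner than the paper's, which contains a few typos in this regard.
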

\begin{proof}
Let $s \leq t-1$.
By adding and subtracting terms we have
{\small
\begin{align*}
\Delta^{(t-1)} 
&= [\cL(\theta^{(t-1)},\lambda^*(\theta^{(t-1)})) - \cL(\theta^{(t-1)},\lambda^*(\theta^{(s)}))] + [\cL(\theta^{(t)},\lambda^{(t)})-\cL(\theta^{(t-1)},\lambda^{(t-1)})]  \\
&~~~+ [\cL(\theta^{(t-1)},\lambda^{(t)}) - \cL(\theta^{(t)},\lambda^{(t)})] + [\cL(\theta^{(t-1)},\lambda^*(\theta^{(s)}))-\cL(\theta^{(t-1)},\lambda^{(t)}))] \\
&\leq [\cL(\theta^{(t-1)},\lambda^*(\theta^{(t-1)})) - \cL(\theta^{(t-1)},\lambda^*(\theta^{(t-1)})] + [\cL(\theta^{(t-1)},\lambda^*(\theta^{(s)}) - \cL(\theta^{(t-1)},\lambda^*(\theta^{(s)}))]   \\
&~~~+ [\cL(\theta^{(t)},\lambda^{(t)})-\cL(\theta^{(t-1)},\lambda^{(t-1)})] + [\cL(\theta^{(t-1)},\lambda^{(t)}) - \cL(\theta^{(t)},\lambda^{(t)})] + [\cL(\theta^{(t-1)},\lambda^*(\theta^{(s)}))-\cL(\theta^{(t-1)},\lambda^{(t)}))] \\
&\leq \lip(\lip+\thetagraderr)[2\|\theta^{(t-1)}-\theta^{(s)}\| + \|\theta^{(t-1)}-\theta^{(t)}\|] + [\cL(\theta^{(t)},\lambda^{(t)})-\cL(\theta^{(t-1)},\lambda^{(t-1)})] \\
&~~~ + [\cL(\theta^{(t-1)},\lambda^*(\theta^{(s)}))-\cL(\theta^{(t-1)},\lambda^{(t)}))] \\
&\leq \eta_\theta\lip(\lip+\thetagraderr)(2t-2s-1) + [\cL(\theta^{(t)},\lambda^{(t)})-\cL(\theta^{(t-1)},\lambda^{(t-1)})]+ [\cL(\theta^{(t-1)},\lambda^*(\theta^{(s)}))-\cL(\theta^{(t-1)},\lambda^{(t)}))].
\end{align*}}
To complete the lemma, we will bound the loss difference 
$\cL(\theta^{(t-1)},\lambda^*(\theta^{(s)})) - \cL(\theta^{(t-1)},\lambda^{(t)})$.
Since $\cL(\theta^{(t-1)},\cdot)$ is linear we have,
\begin{align*} %
    \cL(\theta^{(t-1)},\lambda^{(t-1)}) - \cL(\theta^{(t-1)},\lambda^{(t)}) 
    &= \ip{\lambda^{(t-1)}-\lambda^{(t)}}{\nabla_\lambda \cL(\theta^{(t-1)},\lambda^{(t)})} \\
    &\leq \ip{\lambda^{(t-1)}-\lambda^{(t)}}{g_{\lambda}^{(t)}} + \ip{\lambda^{(t-1)}-\lambda^{(t)}}{\nabla_\lambda \cL(\theta^{(t-1)},\lambda^{(t)}) - g_{\lambda}^{(t)})} \\
    &\leq \ip{\lambda^{(t-1)}-\lambda^{(t)}}{g_{\lambda}^{(t)}} + \|\lambda^{(t-1)}-\lambda^{(t)}\|_1 \cdot \|\nabla_\lambda \cL(\theta^{(t-1)},\lambda^{(t)}) - g_{\lambda}^{(t)})\|_\infty \\
    &\leq \ip{\lambda^{(t-1)}-\lambda^{(t)}}{g_{\lambda}^{(t)}} + \rad \lambdagraderr
\end{align*}
Now a standard analysis using the fact that $\lambda^{(t-1)}+g_{\lambda}^{(t)}$ is projected orthogonaly onto $\Lambda$ we have
\begin{align*}
  0 \leq \|\lambda^{(t)} - \lambda^{(t-1)}\|^2 \leq \frac{1}{2\eta_\lambda}\|\lambda^*(\theta^{(t-1)}) - \lambda^{(t-1)}\|^2 - \frac{1}{2\eta_\lambda}\|\lambda^*(\theta^{(t-1)}) - \lambda^{(t)}\|^2  + \lambda \ip{g_{\lambda}^{(t)}}{\lambda^{(t)}-\lambda^*}
\end{align*}
Now by plugging back into the above yields,
\begin{align*} 
    &\cL(\theta^{(t-1)},\lambda^{(t-1)}) - \cL(\theta^{(t-1)},\lambda^{(t)}) \\
    &\quad\leq \ip{\lambda^*(\theta^{(s)})-\lambda^{(t)}}{g_{\lambda}^{(t)}} + \rad \lambdagraderr + \frac{1}{2\eta_\lambda}\|\lambda^*(\theta^{(t-1)}) - \lambda^{(t-1)}\|^2 - \frac{1}{2\eta_\lambda}\|\lambda^*(\theta^{(t-1)}) - \lambda^{(t)}\|^2. 
\end{align*}
Using concavity we obtain,
\begin{align*} 
    \cL(\theta^{(t-1)},\lambda^*(\theta^{(s)})) - \cL(\theta^{(t-1)},\lambda^{(t)}) 
    &\leq \ip{\lambda^*(\theta^{(s)})-\lambda^{(t)}}{g_{\lambda}^{(t)} - \nabla_\lambda \cL(\theta^{(t-1)},\lambda^{(t)})} + \rad \lambdagraderr \\
    &~~~~~+ \frac{1}{2\eta_\lambda}\|\lambda^*(\theta^{(t-1)}) - \lambda^{(t-1)}\|^2 - \frac{1}{2\eta_\lambda}\|\lambda^*(\theta^{(t-1)}) - \lambda^{(t)}\|^2 \\
    &\leq \frac{1}{2\eta_\lambda}\|\lambda^*(\theta^{(t-1)}) - \lambda^{(t-1)}\|^2 - \frac{1}{2\eta_\lambda}\|\lambda^*(\theta^{(t-1)}) - \lambda^{(t)}\|^2 + 2\rad \lambdagraderr.
\end{align*}
Plugging this back into the starting inequality achieves the claimed bound.

\end{proof}

\begin{lemma}
Under the conditions of Theorem \ref{thm:sgda-convergence} it holds that,
\begin{align*}
    \frac{1}{T}\sum_{t=0}^{T-1} \Delta^{(t)} 
    \leq \rad\sqrt{\frac{\eta_\theta\lip(\lip+\thetagraderr)}{\eta_\lambda}} + 2\rad\lambdagraderr + \frac{\cL(\theta_{T},\lambda_{T})-\cL(\theta_{0},\lambda_{0})}{T}.
\end{align*}
\end{lemma}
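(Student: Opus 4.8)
The plan is to feed the per-step estimate \eqref{eq:lambda-opt-err} from the previous lemma into a block-averaging (epoch) argument, exploiting the fact that \eqref{eq:lambda-opt-err} holds for \emph{any} reference index $s\le t-1$. Partition $\{0,1,\dots,T-1\}$ into $T/B$ consecutive blocks of a common length $B$, the value of $B$ to be fixed at the end (if $T$ is not divisible by $B$ we round $B$ to the nearest integer, which only perturbs constants). For the block whose first index is $s$, apply \eqref{eq:lambda-opt-err} — rewritten for $\Delta^{(t)}$ rather than $\Delta^{(t-1)}$ — to every $t$ in that block, always taking the reference point to be $s$; this is legitimate since $s\le t$ throughout the block.

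Now I sum over a single block. The drift term $\eta_\theta\lip(\lip+\thetagraderr)\,\bigl(2(t-s)+1\bigr)$ is largest when $t-s=B-1$, so over the block it contributes $\eta_\theta\lip(\lip+\thetagraderr)\sum_{j=0}^{B-1}(2j+1)=\eta_\theta\lip(\lip+\thetagraderr)\,B^2$. The quadratic bracket $\tfrac{1}{2\eta_\lambda}\bigl(\|\lambda^*(\theta^{(s)})-\lambda^{(t)}\|^2-\|\lambda^*(\theta^{(s)})-\lambda^{(t+1)}\|^2\bigr)$ telescopes over the block to $\tfrac{1}{2\eta_\lambda}\bigl(\|\lambda^*(\theta^{(s)})-\lambda^{(s)}\|^2-\|\lambda^*(\theta^{(s)})-\lambda^{(s+B)}\|^2\bigr)\le \tfrac{\rad^2}{2\eta_\lambda}$, using that $\lambda^*(\theta^{(s)}),\lambda^{(s)}\in\Lambda$ and $\operatorname{diam}(\Lambda)\le\rad$. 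The term $2\rad\lambdagraderr$ contributes $2B\rad\lambdagraderr$, and the loss differences $\cL(\theta^{(t+1)},\lambda^{(t+1)})-\cL(\theta^{(t)},\lambda^{(t)})$ telescope within the block — and then, summing the block contributions, across all blocks — to $\cL(\theta^{(T)},\lambda^{(T)})-\cL(\theta^{(0)},\lambda^{(0)})$. Assembling these over the $T/B$ blocks and dividing by $T$ yields
\begin{align*}
\frac1T\sum_{t=0}^{T-1}\Delta^{(t)}\le \eta_\theta\lip(\lip+\thetagraderr)\,B+\frac{\rad^2}{2\eta_\lambda B}+2\rad\lambdagraderr+\frac{\cL(\theta^{(T)},\lambda^{(T)})-\cL(\theta^{(0)},\lambda^{(0)})}{T}.
\end{align*}

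It then remains to pick $B$ balancing the first two summands: taking $B\asymp \rad\bigl(\eta_\lambda\eta_\theta\lip(\lip+\thetagraderr)\bigr)^{-1/2}$ makes their sum of order $\rad\sqrt{\eta_\theta\lip(\lip+\thetagraderr)/\eta_\lambda}$ (the precise constant in the claimed bound being recovered from the exact choice of $B$), while the remaining two terms already match the target. The only non-mechanical points are verifying that \eqref{eq:lambda-opt-err} may be invoked with $s$ equal to the block start for each $t$ in the block (true by construction) and absorbing the integer-rounding slack in $B$ into constants; I do not expect either to be a genuine obstacle, since all the analytic content lives in the preceding lemma and this step is a routine "telescope-and-optimize" wrap-up.
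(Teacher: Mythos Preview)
Your proposal is correct and follows essentially the same approach as the paper's proof: partition $\{0,\dots,T-1\}$ into blocks of common length (the paper calls it $M$, you call it $B$), apply \eqref{eq:lambda-opt-err} within each block with the block's first index as the reference point $s$, telescope the quadratic and loss-difference terms, sum over blocks, and optimize the block length. The only differences are cosmetic: the paper bounds the telescoped quadratic term slightly more loosely (by $\rad^2/\eta_\lambda$ rather than your $\rad^2/(2\eta_\lambda)$), and it writes $\lossnot/T$ in the intermediate display where the telescoping actually gives $[\cL(\theta^{(T)},\lambda^{(T)})-\cL(\theta^{(0)},\lambda^{(0)})]/T$ as you have.
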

\begin{proof}
For any $s\in [T]$ and $M\in [T]$ one has by analyzing the telescoping sum created from Eqn.\eqref{eq:lambda-opt-err} that
\begin{align*}
    \sum_{t=s}^{s+M-1} \Delta^{(t)} 
    &\leq  \eta_\theta\lip(\lip+\thetagraderr)M^2 + \frac{1}{2\eta_\lambda}(\|\lambda^{(s)}-\lambda^*(\theta^{(s)})\|^2 + \|\lambda^{(s+M)}-\lambda^*(\theta^{(s)})\|^2) + 2M\rad\lambdagraderr \\
    &~~~ + [\cL(\theta^{(s+M)},\lambda^{(s+M)})-\cL(\theta^{(s)},\lambda^{(s)})] \\
    &\leq \eta_\theta\lip(\lip+\thetagraderr)M^2 + \frac{1}{\eta_\lambda}\|\Lambda\|_2^2 + 2M\rad\lambdagraderr + [\cL(\theta^{(s+M)},\lambda^{(s+M)})-\cL(\theta^{(s)},\lambda^{(s)})] \\
    &\leq \eta_\theta\lip(\lip+\thetagraderr)M^2 + \frac{1}{\eta_\lambda}\|\Lambda\|_1^2 + 2M\rad\lambdagraderr + [\cL(\theta^{(s+M)},\lambda^{(s+M)})-\cL(\theta^{(s)},\lambda^{(s)})]. \\
\end{align*}
By applying this inequality over disjoint ``blocks'' of iterates, of which there are at most $T/M$, we can use this to obtain,
\begin{align*}
    \frac{1}{T}\sum_{t=0}^{T-1} \Delta^{(t)} \leq \eta_\theta\lip(\lip+\thetagraderr)M + \frac{1}{M \eta_\lambda}\|\Lambda\|_1^2 + 2\rad\lambdagraderr + \frac{\lossnot}{T}. \\
\end{align*}
We can now set $M = \frac{\rad}{\sqrt{\eta_\theta \eta_\lambda  \lip(\lip+\thetagraderr)}}$ to obtain the desired inequality.
\end{proof}

We can now prove the main theorem statement.
\begin{proof}[Proof of Theorem \ref{thm:sgda-convergence}]
Recall $\mornot = \mormaxfunc(\theta_0) - \min_\theta \bc{\mormaxfunc(\theta)}$ and $\lossnot = \cL(\theta_0,\lambda_0) - \min_{\lambda,\theta}\bc{\cL(\lambda,\theta)}$.
Summing over Eqn. \eqref{eq:sufficient-decrease} obtains,
\begin{align*}
\mormaxfunc(\theta_{T-1}) \leq \mormaxfunc(\theta_0) + 2\smooth \eta_\theta \br{\sum_{t=0}^{T-1}\Delta^{(t)}} + 2T[\smooth\eta_\theta^2(\lip^2+\thetagraderr^2)+\eta_\theta\thetagraderr^2] - \frac{\eta_\theta}{4}\br{\sum_{t=0}^{T-1} \ \|\nabla \mormaxfunc(\theta^{(t)})\|^2 }.
\end{align*}
Which implies for any $M\in [T]$,
\begin{align*}
    \frac{1}{T}\br{\sum_{t=0}^{T-1} \ \|\nabla \mormaxfunc(\theta^{(t)})\|^2 } 
    &\leq \frac{4\mornot}{\eta_\theta  T} + \frac{8\smooth}{T}\br{\sum_{t=0}^{T-1}\Delta^{(t)}} + 4\smooth\eta_\theta(\lip^2+\thetagraderr^2) + 4\thetagraderr^2 \\
    &\overset{(i)}{\leq} \frac{4\mornot}{\eta_\theta  T} + 8\smooth\eta_\theta(\lip^2+\thetagraderr^2) + 4\thetagraderr^2 \\
    &~~~+ 8\smooth\rad\sqrt{\frac{\eta_\theta\lip(\lip+\thetagraderr)}{\eta_\lambda}} + 8\smooth\rad\lambdagraderr + \frac{8\smooth\lossnot}{T}. 
\end{align*}
Inequality $(i)$ above uses Eqn. \eqref{eq:lambda-opt-err}. 
Setting 
$\eta_\lambda \geq \br{\frac{\eta_\theta\lip(\lip+\thetagraderr)}{\lambdagraderr}}$ 
yields,
\begin{align*}
    &\frac{1}{T}\br{\sum_{t=0}^{T-1} \ \|\nabla \mormaxfunc(\theta^{(t)})\|^2 } \leq \frac{4\mornot}{\eta_\theta  T} + 8\smooth\eta_\theta(\lip^2+\thetagraderr^2) + 4\thetagraderr^2 + 16\smooth\rad\lambdagraderr + \frac{8\smooth\lossnot}{T}.
\end{align*}
Setting 
$\eta_\theta = \sqrt{\frac{\mornot}{2T\smooth(\lip^2+\thetagraderr^2)}}$ 
~yields,
\begin{align*}
    \frac{1}{T}\br{\sum_{t=0}^{T-1} \ \|\nabla \mormaxfunc(\theta^{(t)})\|^2 } &\leq \frac{16\sqrt{\mornot\smooth(\lip^2+\thetagraderr^2)}}{\sqrt{T}}
    + 4\thetagraderr^2 + 16\smooth\rad\lambdagraderr + \frac{16\smooth\lossnot}{T}. 
\end{align*}

Finally, this implies the claimed convergence.
\begin{align*}
    \frac{1}{T}\br{\sum_{t=0}^{T-1} \ \|\nabla \mormaxfunc(\theta^{(t)})\| } = O\br{\frac{\br{\mornot\smooth(\lip^2+\thetagraderr^2)}^{1/4}}{T^{1/4}} 
    + \thetagraderr + \sqrt{\beta\rad\lambdagraderr} + \frac{\sqrt{\smooth\lossnot}}{\sqrt{T}} }.
\end{align*}
\end{proof}

\subsection{Regularity Properties of Lagrangian} \label{app:lagrangian regularity}
We will use the following standard fact about composing Lipschitz and/or smooth functions.
\begin{lemma}\label{lem:regularity-of-composition}
Let $h:\re^d \mapsto \re^k$ be $\lip_h$-Lipschitz and $\smooth_h$-smooth and $g:\re^k\mapsto \re$ be $\lip_g$-Lipschitz and $\smooth_g$-smooth. Then $g \circ h$ is $(\lip_h\lip_g)$-Lipschitz and $(\lip_h\smooth_g + \lip_g^2\smooth_h)$-smooth.
\end{lemma}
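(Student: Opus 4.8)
The plan is to reduce both claims to the chain rule $\nabla(g\circ h)(x) = Dh(x)^\top \nabla g(h(x))$, where $Dh(x)\in\re^{k\times d}$ is the Jacobian of $h$, and then chain the hypotheses. Two elementary facts will be used repeatedly: $\lip_h$-Lipschitzness of $h$ is equivalent to $\|Dh(x)\|_{\mathrm{op}}\le \lip_h$ for all $x$, and $\lip_g$-Lipschitzness of the scalar map $g$ gives $\|\nabla g(y)\|\le \lip_g$ for all $y$; likewise $\smooth_h$-smoothness of the vector-valued map $h$ means $\|Dh(x)-Dh(y)\|_{\mathrm{op}}\le \smooth_h\|x-y\|$, and $\smooth_g$-smoothness of $g$ means $\|\nabla g(u)-\nabla g(v)\|\le \smooth_g\|u-v\|$. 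I would state these conventions explicitly at the top, since $h$ is genuinely vector-valued ($k>1$) and the "op norm of the Jacobian" reading is the one that makes the constants come out as claimed.

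For the Lipschitz bound I would simply write $|g(h(x))-g(h(y))|\le \lip_g\|h(x)-h(y)\|\le \lip_g\lip_h\|x-y\|$, invoking $\lip_g$-Lipschitzness of $g$ and then $\lip_h$-Lipschitzness of $h$. This is immediate and needs no further work.

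For the smoothness bound the key step is the add-and-subtract decomposition
\[
\nabla(g\circ h)(x)-\nabla(g\circ h)(y)=Dh(x)^\top\big(\nabla g(h(x))-\nabla g(h(y))\big)+\big(Dh(x)-Dh(y)\big)^\top\nabla g(h(y)).
\]
Then by the triangle inequality and submultiplicativity of the operator norm, the first term is at most $\|Dh(x)\|_{\mathrm{op}}\cdot \smooth_g\|h(x)-h(y)\|\le \lip_h\cdot\smooth_g\cdot\lip_h\|x-y\|$ (using smoothness of $g$ once and Lipschitzness of $h$ twice), and the second term is at most $\|Dh(x)-Dh(y)\|_{\mathrm{op}}\cdot\|\nabla g(h(y))\|\le \smooth_h\|x-y\|\cdot\lip_g$ (using smoothness of $h$ and the gradient bound for $g$). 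Adding the two contributions yields the composite smoothness constant $\lip_h^2\smooth_g+\lip_g\smooth_h$; I would double-check the precise form of the constant asserted in the lemma against this natural bound, since it is exactly this quantity (with $g$ the loss/softmax-outer function and $h$ the classifier) that feeds into the regularity bounds for $\cL$.

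There is no substantive obstacle here — the statement is a standard composition fact. The only place I would be careful is pinning down the convention for vector-valued Lipschitzness/smoothness before starting (operator norm of $Dh$ versus coordinatewise bounds, which would cost at most a $\sqrt{k}$ factor), and making sure the second term is controlled via $\|\nabla g\|\le\lip_g$ rather than via any smoothness of $g$.
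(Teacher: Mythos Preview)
Your proposal is correct and follows essentially the same approach as the paper: the chain rule plus an add-and-subtract decomposition of $\nabla(g\circ h)(x)-\nabla(g\circ h)(y)$ into a term controlled by $\smooth_g$ and one controlled by $\smooth_h$, yielding the constant $\lip_h^2\smooth_g+\lip_g\smooth_h$. Your flagged discrepancy is apt: the paper's proof also derives $\lip_h^2\smooth_g+\lip_g\smooth_h$ rather than the $\lip_h\smooth_g+\lip_g^2\smooth_h$ appearing in the lemma statement.
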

\begin{proof}
Let $\jac_h(\theta)$ denote the Jacobian of $h$ at $\theta$. Since $h$ is $\lip_h$ Lipschitz, the spectral norm of the Jacobian is at most $\lip_h$, $\|\jac_h(\theta)\|_2\leq \lip_h$.
Observe $\nabla_\theta g(h(\theta)) = \nabla g(h(\theta))^{\top} \jac_h (\theta)$. Thus $\|\nabla_\theta g(h(\theta))\| \leq \|\nabla g(h(\theta))\| \cdot \|\jac_h (\theta)\|_2 \leq \lip_g\lip_h$.

For the second part of the claim, observe that,
\begin{align*}
    \|\nabla_\theta g(h(\theta)) - \nabla_\theta g(h(\theta'))\| &\leq \|\nabla g(h(\theta))\jac_h(\theta) - \nabla g(h(\theta'))\jac_h(\theta')\| \\
    &= \|[\nabla g(h(\theta)) - \nabla g(h(\theta'))]\jac_h(\theta') +  \nabla g(h(\theta))[\jac_h(\theta) - \jac_h(\theta')]\| \\
    &\leq (\lip_h^2 \smooth_g + \lip_g\smooth_h)\|\theta-\theta'\|. 
\end{align*}
\end{proof}

\label{app:regularity-of-lagrangian}
In the following, we assume the predictor $h$ is $\lip_h$-Lipschitz and $\smooth_h$-smooth with respect to $h$, and similarly for $\ell$ with parameters $\lip_\ell$ and $\smooth_\ell$. Our aim is to derive regularity parameters for the Lagrangian given these base parameters. Note that the function which outputs one coordinate the tempered soft max is $\tau$-Lipschitz and $\tau$-smooth.

\begin{lemma}
Let $\Lambda \subset (\re^+)^{K \times Q}$ be a bounded set of diameter at most $\rad$ w.r.t. $\|\cdot\|_1$. Assume for any $j\in[J]$ that the vector $\alpha$ associated with rate constraint $j$ satisfies $\|\alpha\|_1 \leq c_1$ for some constant $c$. Then $\cL(\cdot,\lambda)$ is $G$-Lipschitz with $\lip = \lip_\ell + c \tau \lip_h \rad$ for any $\lambda\in\Lambda$ and $\cL$ is $\smooth$-smooth with $\beta=\smooth_\ell + 2 c\tau \cdot \max\bc{\lip_h\sqrt{J}, \|\Lambda\|_1(2\lip_h + \tau \smooth_h)}$.
\end{lemma}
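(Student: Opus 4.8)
The plan is to use the additive structure $\cL(\theta,\lambda) = \ell(\theta) + \sum_{j=1}^J \lambda_j(\Gamma_j(\theta)-\gamma_j)$, so that it suffices to (i) pin down the Lipschitz and smoothness constants of each constraint $\Gamma_j$ as a function of $\theta$, and then (ii) reassemble them with the bounded nonnegative multipliers $\lambda_j$. For (i), recall $\Gamma_j(\theta)=\sum_{I\in\lpar_j}\sum_{k=1}^K\alpha_{j,I,k}P_k(D_I;\theta)$ with $P_k(D_I;\theta)=\tfrac1{|D_I|}\sum_{x\in D_I}\sigma_\tau(h(\theta;x))_k$. Each map $\theta\mapsto\sigma_\tau(h(\theta;x))_k$ is the composition of the coordinate map $\sigma_\tau(\cdot)_k$ (which is $\tau$-Lipschitz and $\tau$-smooth, as noted above) with $h(\cdot;x)$ ($\lip_h$-Lipschitz, $\smooth_h$-smooth), so Lemma~\ref{lem:regularity-of-composition} gives that it is $\tau\lip_h$-Lipschitz and $(\tau\lip_h+\tau^2\smooth_h)$-smooth. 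Averaging over $x\in D_I$ preserves both constants; taking the $\alpha_j$-weighted sum multiplies them by $\|\alpha_j\|_1\le c_1$. Hence each $\Gamma_j$ is $c_1\tau\lip_h$-Lipschitz and $c_1\tau(\lip_h+\tau\smooth_h)$-smooth in $\theta$.

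For the Lipschitz claim on $\cL(\cdot,\lambda)$: every $\lambda\in\Lambda$ satisfies $\|\lambda\|_1\le\rad$ (the $\ell_1$-diameter, using $0\in\Lambda$). Then $\nabla_\theta\cL(\theta,\lambda)=\nabla\ell(\theta)+\sum_j\lambda_j\nabla\Gamma_j(\theta)$ has norm at most $\lip_\ell + \|\lambda\|_1\max_j\|\nabla\Gamma_j(\theta)\|\le\lip_\ell+c_1\tau\lip_h\rad$, which is the claimed $\lip$.

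For smoothness I would control the joint gradient $(\nabla_\theta\cL,\nabla_\lambda\cL)$ block by block. The dual block is trivial: $\nabla_\lambda\cL(\theta,\lambda)_j=\Gamma_j(\theta)-\gamma_j$ is independent of $\lambda$, and since each $\Gamma_j$ is $c_1\tau\lip_h$-Lipschitz, $\|\nabla_\lambda\cL(\theta,\lambda)-\nabla_\lambda\cL(\theta',\lambda')\|_2\le\sqrt J\,c_1\tau\lip_h\|\theta-\theta'\|$. For the primal block, split $\nabla_\theta\cL(\theta,\lambda)-\nabla_\theta\cL(\theta',\lambda')$ into $[\nabla\ell(\theta)-\nabla\ell(\theta')] + \sum_j\lambda_j[\nabla\Gamma_j(\theta)-\nabla\Gamma_j(\theta')] + \sum_j(\lambda_j-\lambda_j')\nabla\Gamma_j(\theta')$ and bound the three pieces using, respectively, $\smooth_\ell$-smoothness of $\ell$, $\|\lambda\|_1\le\rad$ with $c_1\tau(\lip_h+\tau\smooth_h)$-smoothness of $\Gamma_j$, and $\|\lambda-\lambda'\|_1$ with $c_1\tau\lip_h$-Lipschitzness of $\Gamma_j$; this gives $\|\nabla_\theta\cL(\theta,\lambda)-\nabla_\theta\cL(\theta',\lambda')\|\le[\smooth_\ell+c_1\tau\rad(\lip_h+\tau\smooth_h)]\|\theta-\theta'\|+c_1\tau\lip_h\|\lambda-\lambda'\|_1$. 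Adding the primal and dual block bounds, merging the two coefficient groups via $a+b\le2\max\{a,b\}$ and absorbing the leftover $\lip_h$ into $\rad(\lip_h+\tau\smooth_h)$ (using $\rad\ge1$, so it becomes $\rad(2\lip_h+\tau\smooth_h)$), collapses everything into the single constant $\smooth=\smooth_\ell+2c_1\tau\max\{\lip_h\sqrt J,\ \rad(2\lip_h+\tau\smooth_h)\}$.

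The genuinely delicate part is this last consolidation step: keeping the norm bookkeeping consistent ($\|\cdot\|_1$ on $\Lambda$ and on each $\alpha_j$, $\|\cdot\|_2$ on $\theta$), handling the $\lambda$-dependence of $\nabla_\theta\cL$ (the cross term is what forces the $\rad$ factor and the appearance of $\|\cdot\|_1$), and turning the two separate block-Lipschitz estimates into one joint smoothness constant of exactly the stated shape. Everything upstream is a routine application of Lemma~\ref{lem:regularity-of-composition}, averaging, and the triangle inequality; I would also flag that the precise numerical constant (the factor $2$ and the $\sqrt J$ versus $\rad(\cdot)$ split) is somewhat a matter of convention in how the joint norm on $(\theta,\lambda)$ is chosen.
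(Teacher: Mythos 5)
Your proposal is correct and follows essentially the same route as the paper's own proof in Appendix~\ref{app:lagrangian regularity}: bound the gradient of each soft prediction rate $P_k$ via Lemma~\ref{lem:regularity-of-composition}, decompose the joint gradient difference into primal and dual blocks, bound each block with $\|\lambda\|_1\le\|\Lambda\|_1$ and $\|\alpha_j\|_1\le c_1$, and then consolidate into a single constant. The only surface-level difference is the decomposition: the paper first inserts the intermediate point $(\theta,\lambda')$ and bounds $\|\nabla\cL(\theta,\lambda)-\nabla\cL(\theta,\lambda')\|$ and $\|\nabla\cL(\theta,\lambda')-\nabla\cL(\theta',\lambda')\|$ separately, whereas you split $\nabla_\theta\cL(\theta,\lambda)-\nabla_\theta\cL(\theta',\lambda')$ directly into the $\ell$-term, the $\theta$-variation term with $\lambda$ held fixed, and the $\lambda$-variation cross term. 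These are algebraically equivalent and produce the same three quantities; your cross term $\sum_j(\lambda_j-\lambda_j')\nabla\Gamma_j(\theta')$ plays exactly the role of the paper's $\|\nabla_\theta\cL(\theta,\lambda)-\nabla_\theta\cL(\theta,\lambda')\|$. One small observation: your dual-block bound $\sqrt{J}\,c_1\tau\lip_h\|\theta-\theta'\|$ is the right estimate for $\|\nabla_\lambda\cL(\theta,\cdot)-\nabla_\lambda\cL(\theta',\cdot)\|_2$ (the dual gradient is independent of $\lambda$, so each coordinate is Lipschitz with constant $c_1\tau\lip_h$ and the $\ell_2$-norm picks up $\sqrt{J}$); the paper's displayed bound for that term has a $\rad$ where $\sqrt J$ should appear, but the final constant is stated consistently with yours. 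Your remark about the consolidation step being the delicate part, and the final factor of $2$ and $\sqrt J$ vs.\ $\rad$ split being partly conventional, is accurate; the one extra assumption you invoke ($\rad\ge 1$) to absorb the leftover $\lip_h$ is not needed if, following the paper, one notes that the cross-term coefficient can equally be written $c_1\tau\rad\lip_h$ so that the two $\rad$-scaled coefficients combine directly to $c_1\tau\rad(2\lip_h+\tau\smooth_h)$.
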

Before presenting the proof, we note that $\lip_\ell$ and $\smooth_\ell$ could also be further decomposed using the Lipschitz/smoothness constants of $\ell$ and $h$ via Lemma \ref{lem:regularity-of-composition}. However, as these parameters are not affected by our approach in the way the regularity parameters of the regularizer are, we omit these more specific details. 
\begin{proof}
Let for $I\subseteq [Q]$ let $D_I = \cup_{q\in I} D_q$. 
To establish Lipschitzness w.r.t. $\theta$, we have for any $\theta,\lambda$ that
\begin{align*}
     \|\nabla_\theta \cL(\theta,\lambda)\| &\leq \|\nabla_\theta \ell(\theta)\| + \sum_{j = 1}^J   \lambda_j  \sum_{I \in \cJ_j}  \sum_{k}^K \alpha_{j,I,k} \Big\|\nabla_\theta P_k(D_I;\theta) \Big\|_2 \\
     &\leq \lip_\ell + \|\lambda\|_1 c_1 \cdot \max_{S\subseteq D,k\in[K]}\bc{\|\nabla_\theta P_k(S;\theta)\|}
\end{align*}
Note that for any $S \subseteq D$ and $k\in[K]$ that 
$\|\nabla_\theta P_k(S;\theta,\tau)\| = \|\frac{1}{|S|}\sum_{x\in S} \nabla_\theta[\sigma_\tau(h(\theta;x))_k]\| \leq \tau \lip_h$. Plugging this into the above achieved the claimed Lipschitz parameter.

To prove $\cL$ is smooth, we have for any $\theta,\theta'\in\re^d$ and $\lambda,\lambda'\in\Lambda$,
\begin{align*}
    \|\nabla \cL(\theta,\lambda) - \nabla \cL(\theta',\lambda')\|^2 &\leq 2\|\nabla \cL(\theta,\lambda) - \nabla \cL(\theta,\lambda')\|^2 + 2\|\nabla \cL(\theta,\lambda') - \nabla \cL(\theta',\lambda')\|^2 \\
    &= 2\|\nabla_\theta \cL(\theta,\lambda) - \nabla_\theta \cL(\theta,\lambda')\|^2 + 2\|\nabla_\lambda \cL(\theta,\lambda)-\nabla_\lambda \cL(\theta,\lambda')\|^2 \\
    &~~~+ 2\|\nabla_\theta \cL(\theta,\lambda') - \nabla_\theta \cL(\theta',\lambda')\|^2 + 2\|\nabla_\lambda \cL(\theta,\lambda') - \nabla_\lambda \cL(\theta',\lambda')\|^2. \\
\end{align*}

Bounding each term we have,
\begin{align*}
    \|\nabla_\theta \cL(\theta,\lambda) - \nabla_\theta \cL(\theta,\lambda')\| &\leq \left\|\sum_{j=1}^J (\lambda_j-\lambda'_{j})\sum_{I \in \cJ_j}  \sum_{k}^K \alpha_{j,I,k} \nabla_\theta P_k(D_I;\theta) \right\|_2 \\
    &\leq c_1\tau\lip_h\|\lambda -\lambda'\|_1
    \\
    &\leq c_1\tau\lip_h\sqrt{J}\|\lambda-\lambda'\|_2. \\ \\
    \|\nabla_\lambda \cL(\theta,\lambda)-\nabla_\lambda \cL(\theta,\lambda')\| &= 0. \\ \\
    \|\nabla_\theta \cL(\theta,\lambda') - \nabla_\theta \cL(\theta',\lambda')\| &\leq \|\nabla_\theta \ell(\theta)- \nabla_\theta \ell(\theta')\| +  \left\|\sum_{j=1}^J \lambda'_{j}\sum_{I \in \cJ_j}  \sum_{k}^K \alpha_{j,I,k} (\nabla_\theta P_k(D_I;\theta)-\nabla_\theta P_k(D_I;\theta')) \right\|_2  \\
    &\overset{(i)}{\leq} \smooth_\ell\|\theta-\theta'\| + \|\Lambda\|_1 c_1 (\lip_h \tau + \tau^2 \smooth_h)\|\theta-\theta'\| \\ 
    &\leq (\beta_\ell + c_1 \|\Lambda\|_1\tau(\lip_h+\tau\smooth_h))\|\theta-\theta'\| \\ \\
    \|\nabla_\lambda \cL(\theta,\lambda') - \nabla_\lambda \cL(\theta',\lambda')\| &\leq \Big\|\sum_{j=1}^J \lambda'_{j}\sum_{I\in\cJ_j} \sum_{k=1}^K \alpha_{j,q,k} \br{P_k(h(\theta;x) - P_k(h(\theta';x))} \Big\| \\
    &\overset{(ii)}{\leq} c_1 \tau\rad\lip_h \|\theta-\theta'\|
\end{align*}
Above, in $(i)$ we have used the fact that $P_k$ is the composition of a $\lip_h$-Lipschitz and $\smooth_h$-smooth function with a $\tau$-Lipschitz and $\tau$-smooth function, resulting in a $(\lip_h\tau + \tau^2 \smooth)$-smooth function. Similarly, in $(ii)$, we have used the fact that $P_k$ is the composition of two Lipschitz functions, resulting in another Lipschitz function.

Ultimately we obtain that $\|\nabla \cL(\theta,\lambda) - \nabla \cL(\theta',\lambda')\|^2 $ is bounded by,
\begin{align*}
     \max\bc{(c_1\tau\lip_h\sqrt{J})^2, [\smooth_\ell + \|\Lambda\|_1 c_1 (\lip_h \tau + \tau^2 \smooth_h)]^2 + [c_1 \tau\rad\lip_h ]^2}(\|\theta-\theta'\|^2 + \|\lambda-\lambda\|^2).
\end{align*}

This implies that $f$ is $\beta$-smooth with $\beta=\smooth_\ell + 2 c_1\tau \cdot \max\bc{\lip_h\sqrt{J}, \|\Lambda\|_1(2\lip_h + \tau \smooth_h)}$.
\end{proof}

\section{Additional Experimental Details}
\label{appendix:additional_experiments}

\subsection{Dataset and Pre-Processing Details}
\label{sec:dataset-details}
We evaluate \method on tabular fairness and privacy benchmark datasets from~\cite{lowy2023stochastic}, namely, \texttt{Adult}~\citep{adult}, German \texttt{Credit Card}~\citep{misc_statlog_german_credit_data_144}, and \texttt{Parkinsons}~\citep{parkinsons}. The classification task for \texttt{Credit Card} and \texttt{Parkinsons} is ``whether the user will default payment the next month'', and ``whether the total UPDRS score of the patient is greater than the median or not,'' respectively. For \texttt{Adult}, the task is ``whether the individual will make more than \$50K.'' In all tasks, the sensitive attribute is gender. 

To evaluate on more diverse subgroups, we also evaluate \method on \texttt{folkstables} which is the 2018 yearly American Community Survey. We use the python package `folktables`~\citep{folkstables} to download and process the data for the Alabama (``AL'') state in the US. We choose a ``5''-year horizon and choose survey option to be `person.` We adopt the experimental setup of~\citet{lowy2023stochastic} (including classification task, pre-processing, etc.) and report the baselines results directly from the official repository~\citep{gupta_stochastic-differentially-private-and-fair-learning_2023}. 

We also present results on the Heart Disease Health Indicators dataset \cite{alexteboul2022heart} (21 risk factors). 

To prove the scalability of \method on neural network, we train a ResNet16 \cite{he2016deep} on the CelebA dataset. CelebA (CelebFaces Attributes Dataset) is a large-scale face dataset containing over 200,000 celebrity images, each annotated with 40 attribute labels (e.g. smiling, eyeglasses, or hair color) and five landmark locations, widely used for training and testing in computer vision tasks such as face recognition. For the fairness constraints, CelebA is used with gender as the sensitive attribute.

\subsection{Hyperparameter Tuning and Accounting}
\label{appendix:hparam_tuning}
Our hyperparameter selection process follows a two-phase approach. In the first phase, we run a hyperparameter search over predefined ranges: Gaussian noise variance $\sigma \in [3, 6]$, Laplace parameter $b \in [0.1, 0.5]$, learning rates $\eta_\theta, \eta_\lambda \in [10^{-4}, 0.1]$, mini-batch size $B \in [256, 1256]$, and softmax temperature $\tau \in [1, 10]$. We constrain the dual variables $\lambda$ to be non-negative by setting the projection set $\Lambda = (\mathbb{R}^+)^J$. For each configuration, we target a specific constraint value $\gamma$ and evaluate performance across five different seeds, selecting the hyperparameters that achieve the best validation accuracy while satisfying the constraint. In the second phase, we use the best hyperparameters identified through 200 optimization runs to train 20 new models. We report test accuracy and constraint satisfaction for each model based on the checkpoint that achieved the highest validation accuracy while satisfying the constraints on the train set.

Recent work by \citet{lebeda2024avoiding} and \citet{chua2024scalable} shows that the common use of shuffled fixed-size mini-batches can violate guarantees from privacy accountants, which assume Poisson sampling. Therefore, we use Poisson sampling for the mini-batches.

\subsection{Regularization–Privacy–Accuracy Trade-offs}
\label{sec:additional-tradeoffs}

\paragraph{Demographic Parity.}  
In~\Cref{fig:additional-trade-off-demparity}, we compare fairness–utility trade-offs of \method against baseline methods on logistic regression models trained with demographic parity constraints across the \texttt{Adult}, \texttt{Credit-Card}, and \texttt{Parkinsons} datasets. We evaluate a range of privacy budgets $\varepsilon \in \{0.5, 1, 2, 9\}$. Consistent with the main results in~\Cref{sec:experimental-results}, \method significantly narrows the gap to the non-private baseline and Pareto-dominates existing private baselines.

\paragraph{False Negative Rates.}  
\Cref{fig:performance-constraints} illustrates the flexibility of our framework beyond fairness constraints by enforcing limits on false negative rate (FNR, i.e., $1-\text{recall}$). Controlling FNR is particularly important in medical settings. For example, on the \texttt{Heart} dataset, XGBoost achieves 90\% accuracy but suffers from an FNR of 90\%. Non-private SGDA reduces FNR to 58\% with 87.5\% accuracy, while DP-RaCO nearly matches this trade-off with 60\% FNR at the same accuracy (\Cref{fig:heart}). We also observe that once the constraint threshold is pushed beyond a certain point, \method fails to further reduce FNR, an effect discussed in~\Cref{sec:experimental-results} (Limitations) and analyzed in~\Cref{appendix:clipping_effect}.

\paragraph{Equalized Odds.}  
In~\Cref{fig:equalized_odds}, we show results on the \texttt{Credit-Card} dataset for logistic regression trained under equalized odds constraints. The performance trends mirror those under demographic parity. Note that the DP-FERMI results are taken directly from \cite{lowy2023stochastic} and not re-run; as reported, both mean and variance remain unchanged across privacy levels $\varepsilon \in \{0.5, 1, 3, 9\}$.

\begin{figure*}
    \centering
    \begin{subfigure}{\textwidth}
        \centering
        \includegraphics[width=0.9\linewidth]{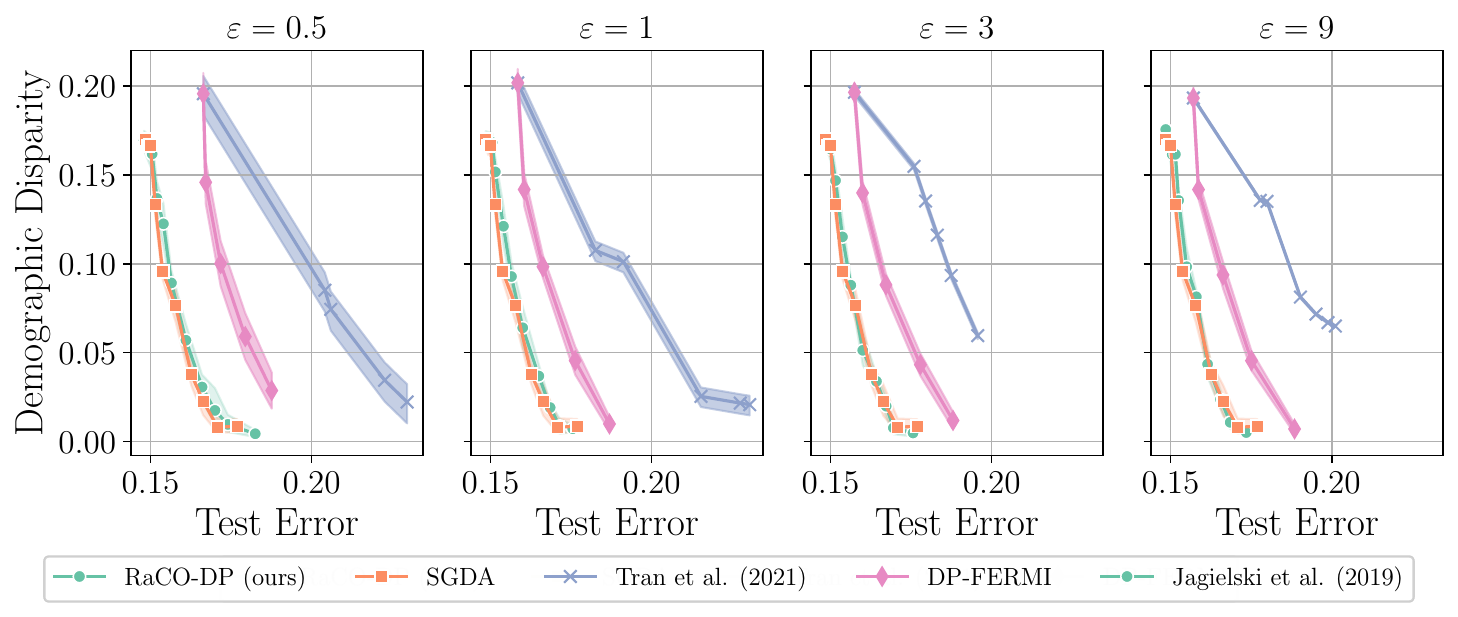}
        \caption{\texttt{Adult}}
    \end{subfigure}
    \begin{subfigure}{\textwidth}
        \centering
        \includegraphics[width=0.9\linewidth]{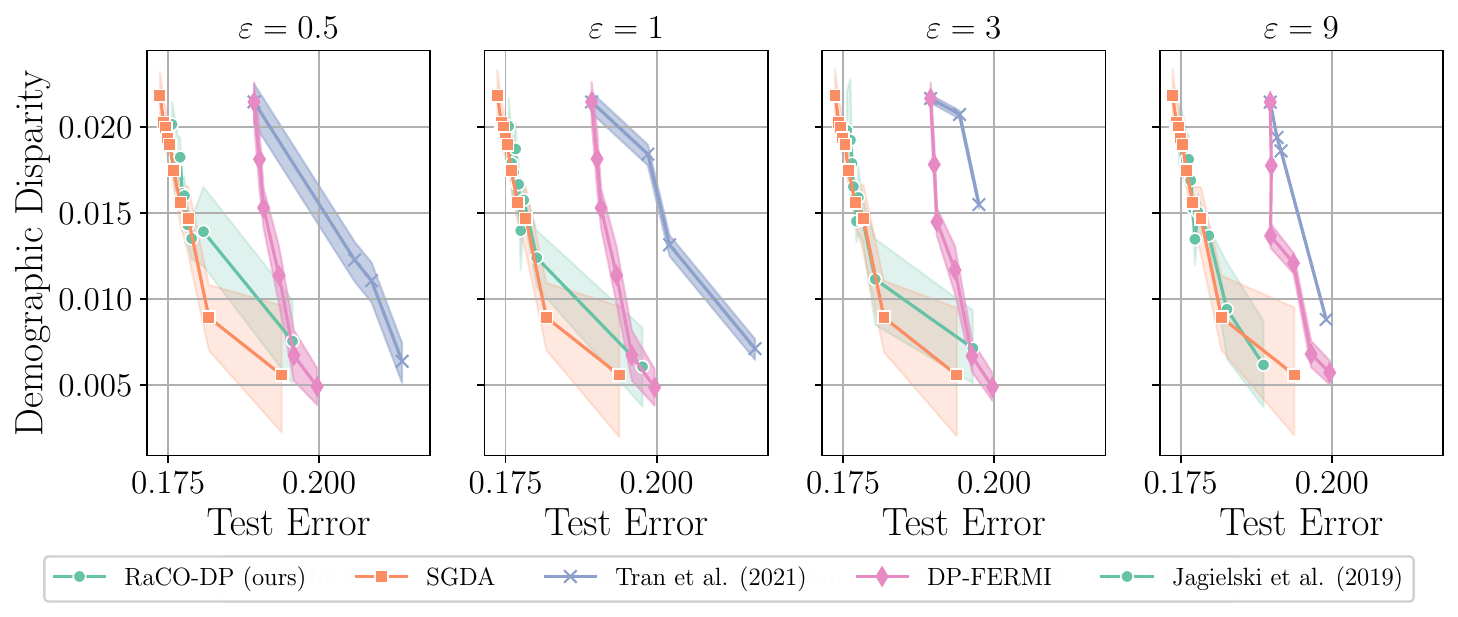}
        \caption{\texttt{Credit-Card}}
    \end{subfigure}
    \begin{subfigure}{\textwidth}
        \centering
        \includegraphics[width=0.9\linewidth]{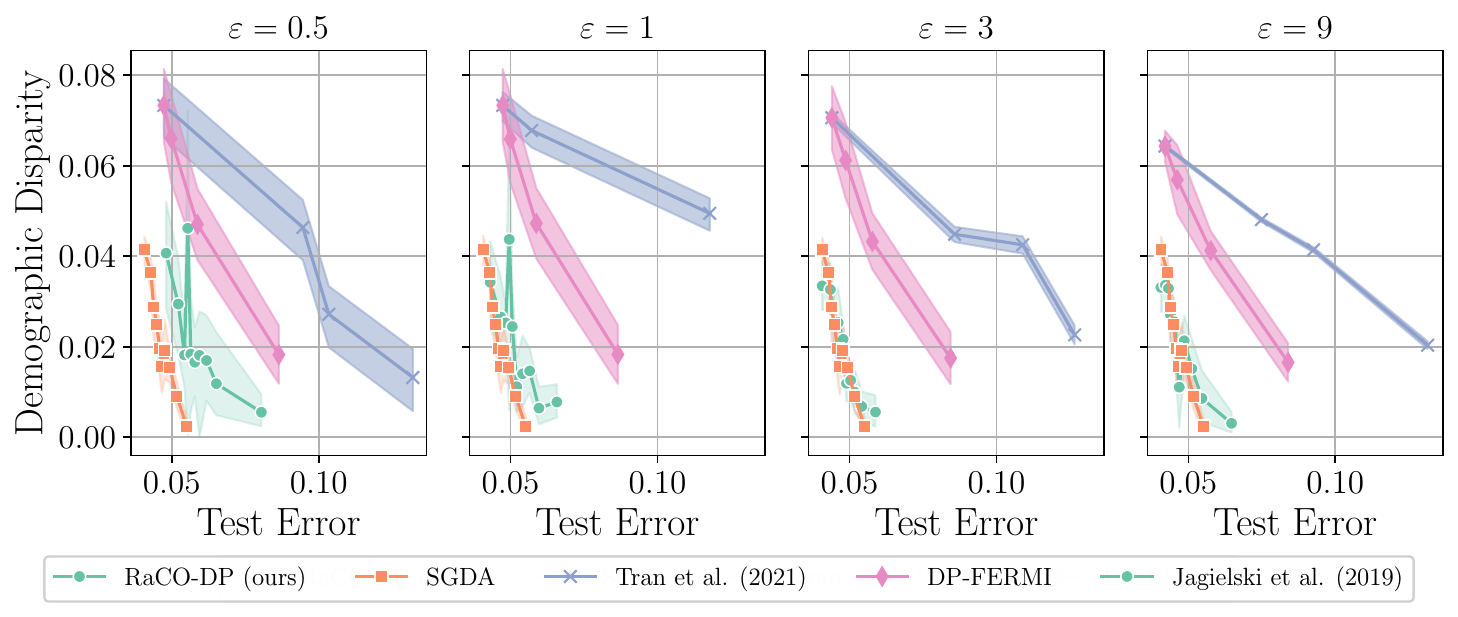}
        \caption{\texttt{Parkinsons}}
    \end{subfigure}
    \caption{Fairness–utility trade-offs under \textbf{demographic parity constraints} for logistic regression on three benchmark datasets. \method consistently reduces the gap to non-private performance and outperforms private baselines across privacy budgets $\varepsilon \in \{0.5, 1, 2, 9\}$.}
    \label{fig:additional-trade-off-demparity}
\end{figure*}

\begin{figure*}
    \begin{subfigure}{\textwidth}
        \centering
        \includegraphics[width=0.9\linewidth]{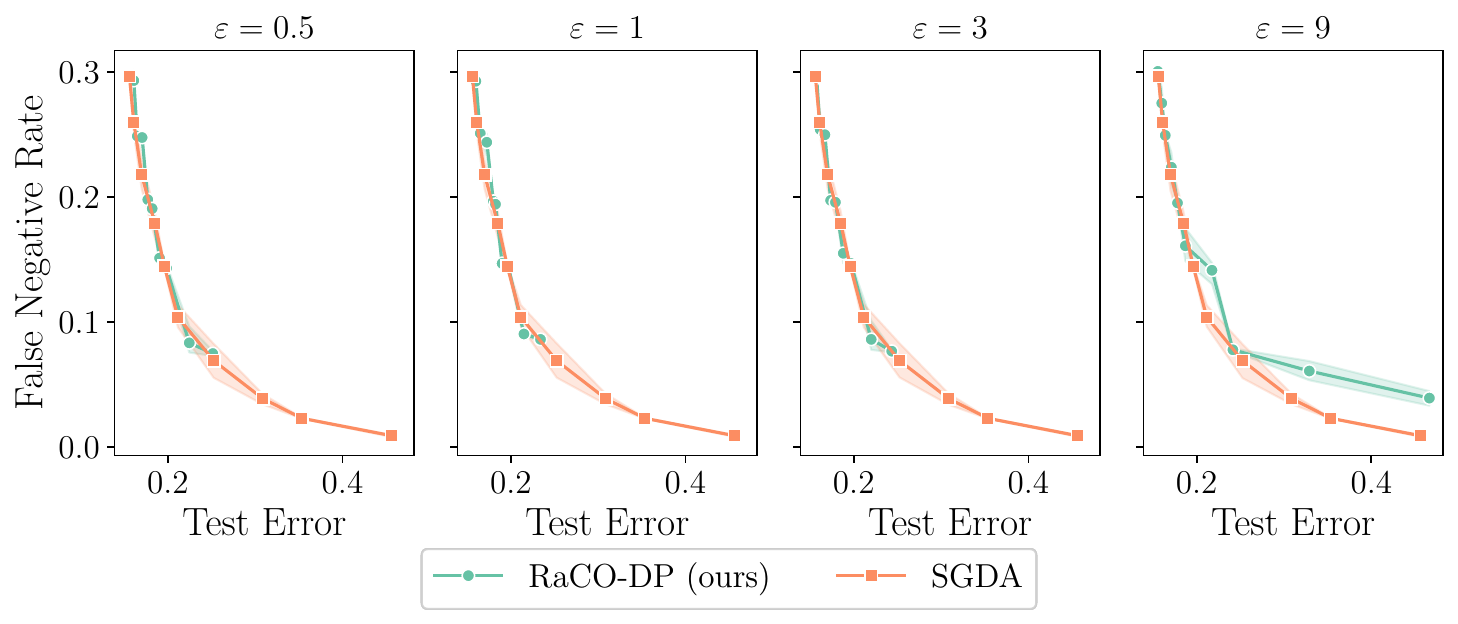}
        \caption{\texttt{Adult}}
    \end{subfigure}
    \begin{subfigure}{\textwidth}
        \centering
        \includegraphics[width=0.9\linewidth]{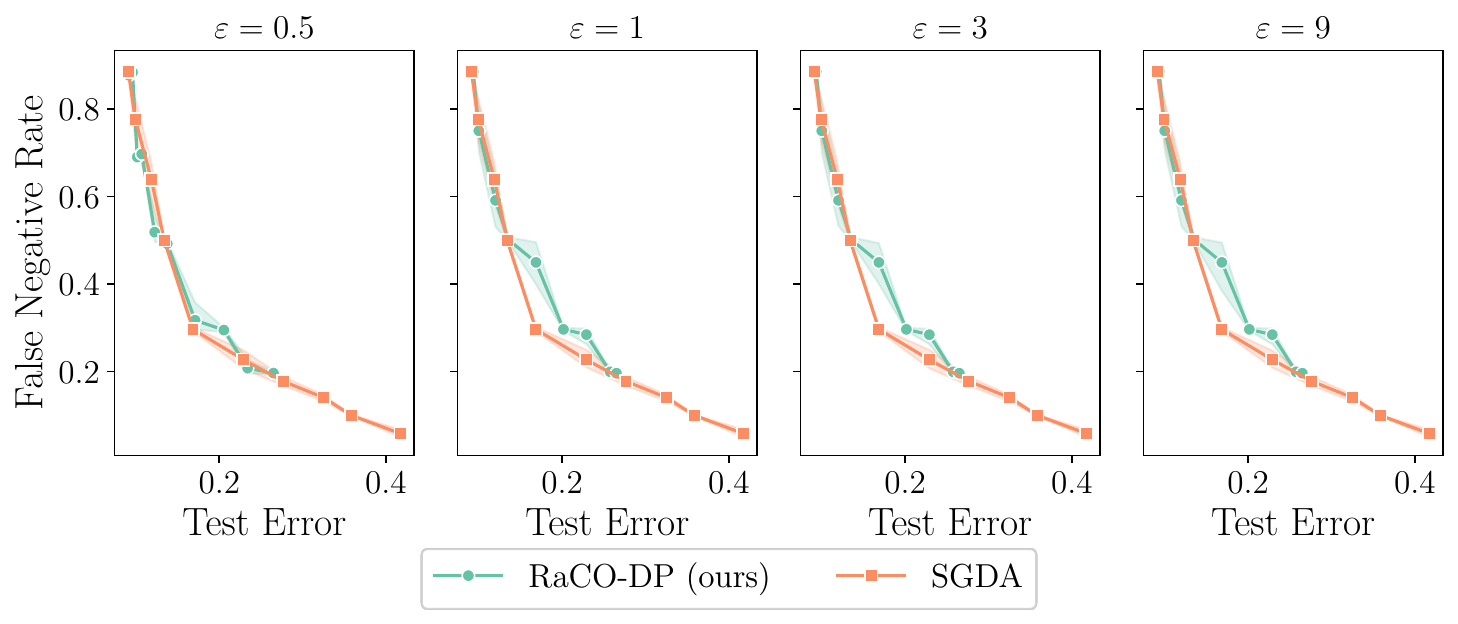}
        \caption{\texttt{Heart}}
        \label{fig:heart}
    \end{subfigure}
    \caption{Performance–privacy trade-offs under a \textbf{false negative rate (FNR) constraint}. On the \texttt{Heart} dataset, \method achieves accuracy–FNR trade-offs competitive with non-private baselines.}
    \label{fig:performance-constraints}
\end{figure*}

\begin{figure*}
    \begin{subfigure}{\textwidth}
        \centering
        \includegraphics[width=0.9\linewidth]{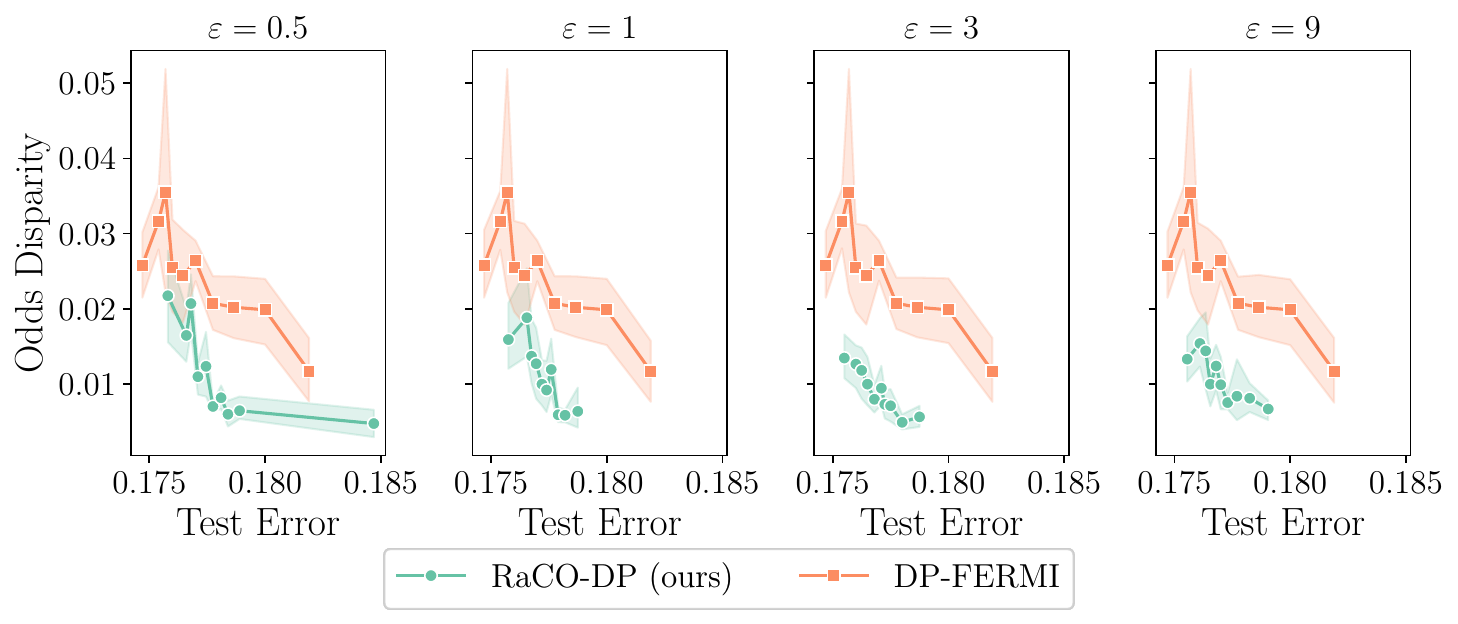}
        \caption{\texttt{Credit-Card}}
    \end{subfigure}
    \caption{Fairness–utility trade-offs under \textbf{equalized odds constraints} for logistic regression on the \texttt{Credit-Card} dataset. Results for DP-FERMI are taken from \cite{lowy2023stochastic}.}
    \label{fig:equalized_odds}
\end{figure*}

\begin{figure*}
    \begin{subfigure}{\textwidth}
        \centering
        \includegraphics[width=0.9\linewidth]{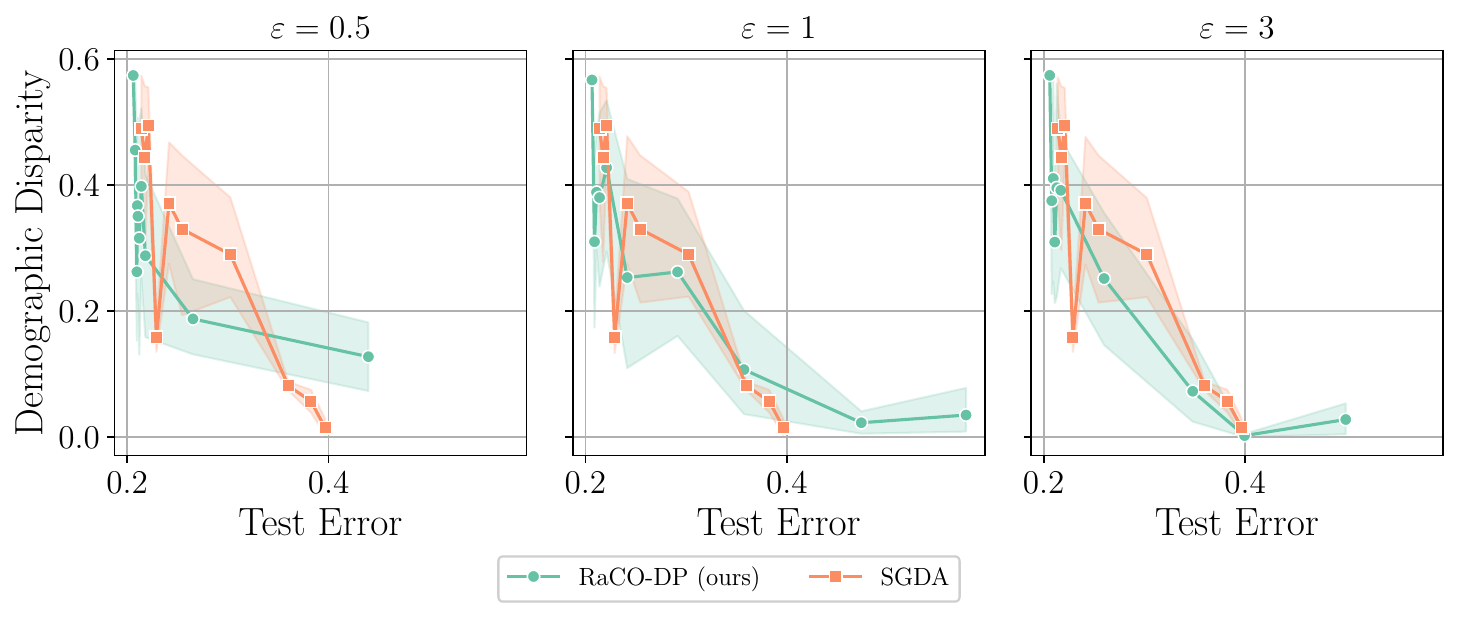}
    \end{subfigure}
    \caption{Fairness–utility trade-offs under \textbf{demographic parity} for logistic regression on the \texttt{\folkstables} dataset.}
    \label{fig:folkstables}
\end{figure*}

\begin{table}
\centering
\begin{tabular}[t]{lllll}
\toprule
\textbf{Dataset} & \textbf{SGD} & \textbf{DP-SGD} & \textbf{\method} & \textbf{DP-FERMI} \\
\midrule
Adult & $0.018 \pm 0.001 $ ms & $0.037 \pm 0.001$ ms & $0.064 \pm 0.010$ ms & $85 \pm 10$ ms \\
CreditCard & $0.020 \pm 0.005$ ms & $0.035 \pm 0.004$ ms & $0.055 \pm 0.003$ ms & $88 \pm 14$ ms \\
\bottomrule \\
\end{tabular}
\caption{\textbf{Computational overhead comparison in terms of wall-time clock.}}
\label{tab:wall_clock_performance}
\end{table}

\subsection{Hard vs. Soft constraints} 
\label{sec:hard_vs_soft}
\Cref{fig:demographic_parity_adult_hard_vs_soft} compares soft and hard constraints for the dual update. Notably, the soft constraint (with tuned softmax temperature $\tau$) achieves similar performance compared to its hard constraint counterpart (solid dots) for most target values while maintaining similar levels of constraint satisfaction. This suggests that using soft constraints for the dual update does not significantly impact utility or constraint enforcement.
\begin{figure}
    \centering
    \begin{subfigure}[t]{0.48\linewidth}
        \centering
        \includegraphics[width=\linewidth]{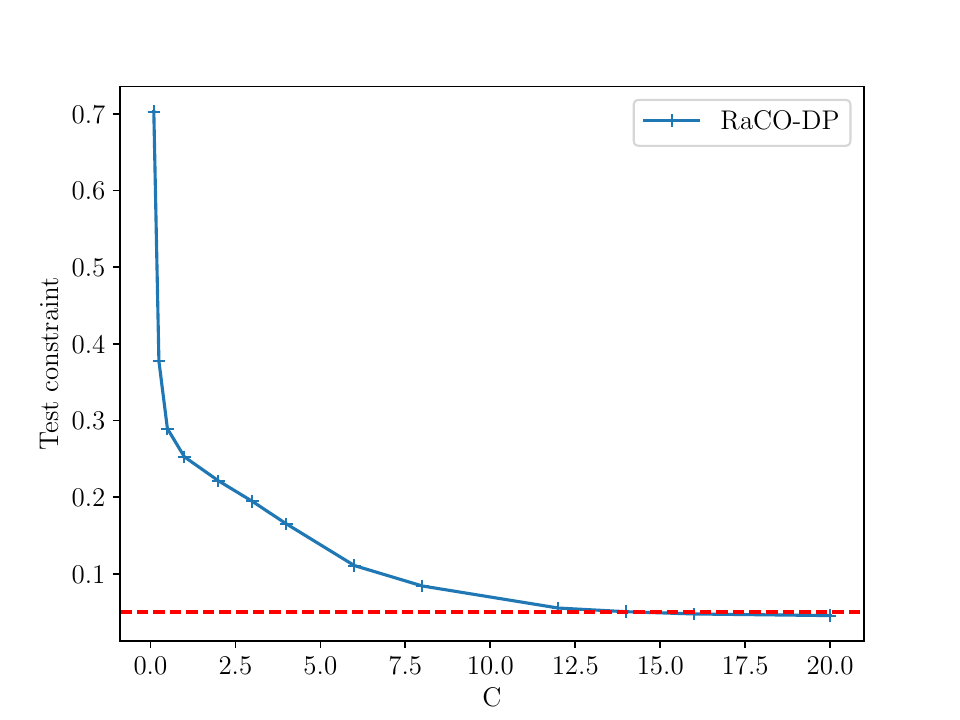}
        \caption{\textbf{False Negative Rate-Constrained Classification on Adult.} The clipping norm $C$ plays a critical role in satisfying a pessimistic constraint ($\gamma=0$), even without noise related to differential privacy ($\sigma=0, b=\infty$).}
        \label{fig:clipping_impact}
    \end{subfigure}
    \hfill
    \begin{subfigure}[t]{0.48\linewidth}
        \centering
        \includegraphics[width=\linewidth]{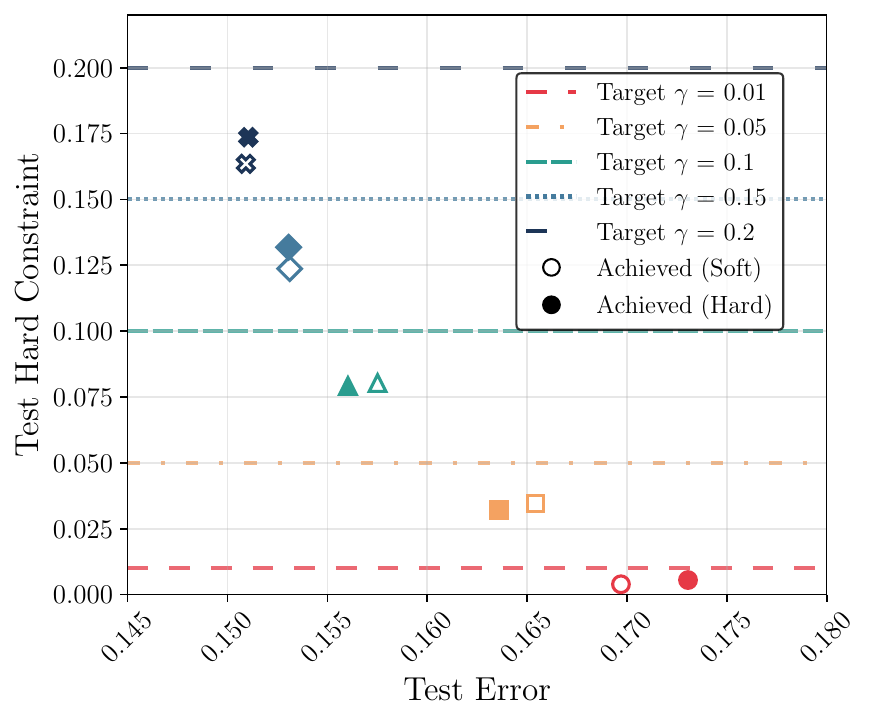}
        \caption{\textbf{Hard vs Soft Constraints on Adult.} Trade-off between test error and demographic parity. Dashed lines show target constraints, with soft (hollow circles) and hard (solid dots) implementations achieving similar performance.}
        \label{fig:demographic_parity_adult_hard_vs_soft}
    \end{subfigure}
    \caption{\textbf{Constraint Analysis on Adult Dataset.} (a) Effect of clipping norm on false negative rate-constrained classification. (b) Comparison of hard vs. soft demographic parity constraints.}
    \label{fig:adult_constraints}
\end{figure}

\subsection{Compute Performance}
\label{sec:compute performance}

We provide a computational comparison between methods in \Cref{tab:wall_clock_performance}, where we report the mean time of computing an SGD step, compared to a DP-SGD step and a \method step on an eight-core CPU machine on Adult and Credit-Card on a batch size of 512. For reference, we also report the mean time of a DP-FERMI step using the publicly available implementation.

\textbf{\method is 3 orders of magnitude faster to train than DP-FERMI on the same machine.} Our algorithm builds on DP-SGD with the only additional overhead being computing the dual updates, which scales linearly in the number of constraints.

We note that our method's extra cost over standard DP-SGD is computing the dual update, which, if implemented naively, scales linearly in the number of constraints $Q$, implying $Q$ extra backward passes to compute the gradients in the worst case. However, in practice, we can compute the gradient only for the active constraints ($\lambda_{(i)}  > 0$), which can significantly reducing the computational costs.

\subsection{Impact of the clipping norm}
\label{appendix:clipping_effect}

Figure~\ref{fig:clipping_impact} shows how the clipping threshold~$C$ affects \method when we enforce a pessimistic constraint of $\text{FNR} <0$ on the \textsc{Adult} dataset, in a non-private setting ($\sigma = 0,\; b=\infty$).  
With a small clipping norm ($C\le 2$) the empirical FNR violation is still above $0.6$, confirming that the bias introduced by clipping alone can drive the iterates far outside the feasible set.  
As the threshold increases, this bias shrinks rapidly; once $C \ge 12.5$, the FNR aligns with the target (red line), and the constraint is consistently satisfied.  

These results demonstrate that an obstacle to satisfying the chosen constraints for our \method is the bias from clipping and \emph{not} the DP noise. Therefore, we stress that tuning $C$ is an important aspect when applying \method in practice.

\subsection{Deep learning experiment}
\label{sec:deep_experiment}
We further evaluate \method on deep neural networks by training a ResNet16~\cite{he2016deep} on \texttt{CelebA} under demographic parity constraints.  Following common practice in private training ~\cite{berrada2023unlocking}, we replace batch normalization with group normalization (16 groups) to avoid reliance on batch statistics, which interact poorly with privacy constraints due to their reliance on batch-level statistics. Models are trained with a batch size of $256$. For non-private baselines, we use a learning rate of $0.01$, while private models require a larger learning rate ($0.2$) to compensate for the effect of gradient clipping. For $\varepsilon=1$ and $\varepsilon=9$, we add Gaussian noise with $\sigma=1.2$ and $\sigma=0.6$, respectively, and set the Laplace noise scale $b=0.5$. All results are averaged over five random seeds. Figure ~\Cref{fig:celeba} reports accuracy under varying disparity thresholds $\gamma \in \{1, 0.1, 0.075, 0.05\}$, highlighting that \method maintains strong utility even under strict fairness and privacy requirements.

\subsection{Experiments on low epsilon}
\label{sec:exp_on_low_epsilons}

\begin{wrapfigure}{r}{0.4\textwidth}
    \vspace{-5mm}
    \centering
    \includegraphics[width=\linewidth]{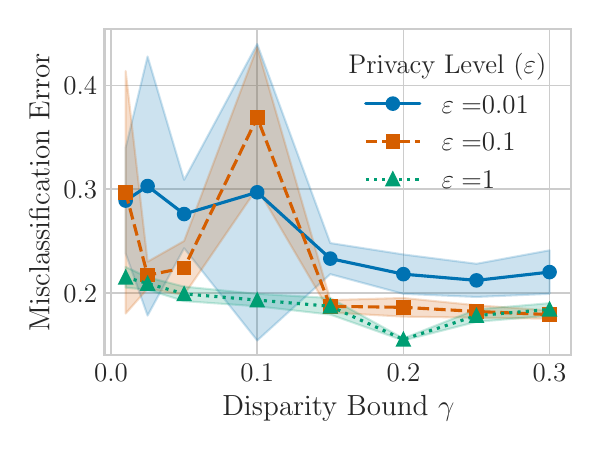}
    \caption{Variance in fairness–utility trade-offs at small privacy budgets. }
    \label{fig:small_epsilon_experiments}
    \vspace{-5mm}
\end{wrapfigure}

In figure~\Cref {fig:small_epsilon_experiments} we employ \method to train a logistic regression on \texttt{Adult} with demographic parity constraints at three restrictive privacy-budget settings: $\varepsilon \in \{ 0.01, 0.1, 1\}$. From a utility perspective, tightening the privacy budget from 
$\varepsilon = 1$ to $\varepsilon = 0.1$ results in only a modest decrease in accuracy of approximately three percentage points ($\approx 85\%$ to $\approx 82\%$). Even under a stringent budget of 
$\varepsilon = 0.01$, the most accurate model maintains an accuracy of about $80\%$. From a fairness perspective, smaller budgets lead to wider 95\% confidence intervals (CIs) for the disparity metric $\gamma$: the CI width increases from roughly $0.05$ at $\varepsilon = 0.1$ to about $0.10$ at $\varepsilon = 0.01$, with a mean $\gamma \approx 0.17$. The corresponding mean disparity gap between $\varepsilon = 0.01$ and $\varepsilon = 0.1$ can be as large as $0.15$ ($15\%$), reflecting the greater noise introduced at lower $\varepsilon$. Overall, DP-RaCO exhibits graceful degradation: while very small budgets 
($\varepsilon = 0.01$) exacerbate the accuracy--fairness trade-off, a moderate budget ($\varepsilon = 0.1$) achieves reliable performance with a narrow CI ($\approx \pm 0.05$).

\section{Broader Impact}
\label{appendix:broader_impact}
An important takeaway from our work is that privacy and robustness criteria (such as the absence of performance disparities for underrepresented groups) are not inherently at odds with each other. This realization calls into question the practice of broadening the concept of privacy-utility trade-offs to include trade-offs with other robustness criteria. In high-risk decision-making systems that require both privacy and robustness, the responsibility for achieving such robustness falls on the beneficiaries of automated decision-making systems (governments and private institutions), as well as algorithm designers. These stakeholders must take care not to mistakenly attribute a lack of robustness to privacy mitigations, or a lack of privacy to robustness requirements.

Our work contributes to the existing literature in algorithmic fairness and privacy, and as such, adopts and further formalizes their computational interpretations of these human values. It is important to note that these interpretations, while useful in the contexts we have explored, are by no means collectively exhaustive. Specifically, the use of our algorithm does not ensure privacy in the broad sense, but rather in the limited sense of differential privacy, which protects the privacy of individuals whose data has been collected for training. Given the technical complexities of correctly implementing differential privacy, inappropriate tuning of model parameters or use outside its intended context can lead to a false sense of privacy—and, worse, may be exploited for privacy-washing by malicious actors in charge.

\end{document}